\documentclass[journal]{IEEEtranTIE}

\usepackage{graphicx}
\usepackage{cite}
\usepackage{picinpar}
\usepackage{url}
\usepackage{flushend}
\usepackage{colortbl}
\usepackage{soul}
\usepackage{multirow}
\usepackage{pifont}
\usepackage{color}
\usepackage{alltt}
\usepackage[hidelinks]{hyperref}
\usepackage{enumerate}
\usepackage{siunitx}
\usepackage{breakurl}
\usepackage{epstopdf}
\usepackage{pbox}
\usepackage[numbers,sort&compress]{natbib}
\usepackage{amsthm,amsmath,amssymb}
\usepackage{mathrsfs}
\usepackage{mathtools}
\usepackage[caption=false]{subfig}
\usepackage{xcolor}
\usepackage{pgfplots} 
\usepackage{placeins} 
\usepackage{algorithm}
\usepackage{algpseudocode} 
\usepackage{tabularx}  
\usepackage{booktabs}  
\usepackage{float}     

\usepackage[utf8]{inputenc} 
\usepackage[T1]{fontenc}    
\usepackage[noprefix]{nomencl} 
\usepackage{etoolbox}      

\pgfplotsset{compat=newest} 

\makenomenclature
\renewcommand{\nomname}{List of Symbols}
\patchcmd{\thenomenclature}
{\chapter*{\nomname}}
{\section*{\nomname}}
{}{} 

\renewcommand{\figurename}{Fig.~}

\newcommand{\R}{\mathbb{R}}

\newtheorem{lemma}{Lemma}

\newtheorem{theorem}{Theorem}

\newtheorem{assumption}{Assumption}

\begin{document}

\title{Meanshift Shape Formation Control Using\\ Discrete Mass Distribution}

\author{Yichen~Cai, Yuan Gao, Pengpeng Li, Wei Wang, \emph{Senior Member, IEEE}, \\Guibin~Sun, and~Jinhu~L\"u, \emph{Fellow, IEEE}

\thanks{This work was supported in part by the National Key Laboratory of Multi-perch Vehicle Driving Systems under Grant QDXT-NZ-202407-01. \emph{(Corresponding author: Guibin~Sun.)}}
\thanks{Yichen~Cai and Pengpeng Li are with the School of Artificial Intelligence, Beihang University, Beijing 100191, China (e-mail: caiyichen@buaa.edu.cn; pengpengli@buaa.edu.cn).}
\thanks{Yuan~Gao, Wei~Wang, Guibin~Sun and Jinhu~L\"u are with School of Automation Science and Electrical Engineering, Beihang University, Beijing 100191, China (e-mail: gy\_sasee@buaa.edu.cn; w.wang@buaa.edu.cn; sunguibinx@buaa.edu.cn; jhlu@iss.ac.cn).}
}

\maketitle

\begin{abstract}
	The density-distribution method has recently become a promising paradigm owing to its adaptability to variations in swarm size.
	However, existing studies face practical challenges in achieving complex shape representation and decentralized implementation. 
	This motivates us to develop a fully decentralized, distribution-based control strategy with the dual capability of forming complex shapes and adapting to swarm-size variations.	
	Specifically, we first propose a discrete mass-distribution function defined over a set of sample points to model swarm formation.
	In contrast to the continuous density-distribution method, our model eliminates the requirement for defining continuous density functions---a task that is difficult for complex shapes. 
	Second, we design a decentralized meanshift control law to coordinate the swarm's global distribution to fit the sample-point distribution by feeding back mass estimates. 
	The mass estimates for all sample points are achieved by the robots in a decentralized manner via the designed mass estimator.
	It is shown that the mass estimates of the sample points can asymptotically converge to the true global values. 
	To validate the proposed strategy, we conduct comprehensive simulations and real-world experiments to evaluate the efficiency of complex shape formation and adaptability to swarm-size variations. 
\end{abstract}

\begin{IEEEkeywords}
Shape formation, meanshift control, discrete mass distribution, robot swarm.
\end{IEEEkeywords}


\markboth{}%
{}

\definecolor{limegreen}{rgb}{0.2, 0.8, 0.2}
\definecolor{forestgreen}{rgb}{0.13, 0.55, 0.13}
\definecolor{greenhtml}{rgb}{0.0, 0.5, 0.0}

\section{Introduction}

\IEEEPARstart{S}{hape} formation has received considerable attention in the past two decades due to its broad applications in multi-robot systems, including object transportation \cite{yang2022autonomous}, \cite{li2025UrbanAir}, area monitoring \cite{Qu2024Automated}, and digital painting \cite{alhafnawi2021self}. 
The objective of shape formation is to coordinate a group of robots to autonomously form a desired geometric shape from any initial configuration. 
To achieve this objective, various control strategies have been developed, such as graph-based \cite{Ze2023TIE}, \cite{yang2024joint} and assignment-based \cite{Wang2020Shape}, \cite{Li2025DynamicAssignment} methods.
However, these methods suffer from the curse of dimensionality in large-scale swarms and lack adaptability to swarm-size variations, as their shape representation is fundamentally coupled with the number of robots. 

To achieve shape formation in large-scale swarms that is also adaptable to swarm-size variations, the density-distribution method has emerged as a promising paradigm \cite{Zheng2022Transporting}, \cite{Ito2024MaxEntropy}. 
In this paradigm, the formation shape is modeled as a continuous density function defined over a connected region. 
The shape formation problem is thus transformed into a task of controlling the current swarm density to track the desired density, and control commands for individual robots are generated based on the global density transport strategy.
Such a framework inherently enables strong applicability to large-scale swarms and adaptability to swarm-size variations, as the density distribution is defined independently of the number of robots. 
One typical class of methods in this framework is based on mean-field theory \cite{Zheng2022Transporting}, \cite{Eren2017Velocity}. 
These methods use a mean-field partial differential equation to derive a global velocity field from density transport, which in turn drives the robots. 
Subsequently, the works in \cite{chen2023density}, \cite{Sinigaglia2025RobustOptimal} further employed optimal transport theory to obtain the optimal solution to the mean-field density control problem. 
Another class of methods is based on image moments \cite{Liu2024Self}. 
This approach models the formation shape as a density distribution based on image moments and provides a decentralized control strategy to achieve the desired moments.

Despite their effectiveness in shape formation, existing density-distribution methods still face practical challenges.

The first challenge is to accurately represent a complex shape using a density-distribution function. 
In formation problems, shape representation is an important factor in determining the control strategy. 
Most existing density-distribution methods require the design of a continuous density function to represent the formation shape \cite{Zheng2022Transporting}, \cite{Sinigaglia2025RobustOptimal}. 
However, in practice, formulating rigorous mathematical functions to accurately describe complex shapes, such as a fish-bone shape, is difficult. 
To solve this problem, recent work \cite{Liu2024Self} introduces the concept of image moments to describe the formation shape. 
However, its representation accuracy for complex shapes remains a practical challenge (see Section \ref{Sec_evaluation}). 
Therefore, it is important to design a new representation method that can eliminate the requirement for defining continuous density functions in distribution-based shape formation.

The second challenge is to achieve a decentralized implementation for the density-distribution method. 
Decentralized methods are more adaptable than centralized ones in practice. 
However, most density-distribution methods are only suitable for centralized scenarios \cite{Zheng2022Transporting}, \cite{cui2024density}. 
The primary reason is that the definition of the density distribution itself relies on centralized global positioning data from all robots. 
This positioning data is difficult for an individual robot to access locally. 
Although decentralized density estimation allows for reconstructing the global density distribution, current methods are limited to static scenarios \cite{Gu2008Distributed}, \cite{Battistelli2016Stability}. 
Recently, the works in \cite{Liu2024Self}, \cite{Zheng2022Distributed} have provided decentralized density estimators for time-varying scenarios, where robot motion dynamically affects the formation distribution. 
However, the presence of estimation errors poses a challenge to achieving high control accuracy. 
Therefore, it is crucial to develop a decentralized distribution-based control law for accurate shape formation in large-scale robot swarms. 

Motivated by these unsolved challenges, this paper aims to develop a fully decentralized shape formation control strategy based on a discrete mass distribution. 
More specifically, the contributions and novelties are summarized as follows. 

1)~To accurately represent highly complex formation shapes, we model the formation shape as a discrete mass-distribution function defined over a set of sample points. 
This model is decoupled from the number of robots and is thus inherently adaptable to swarm-size variations. 
Compared to methods using continuous density-distribution functions \cite{Zheng2022Transporting}, \cite{Liu2024Self}, our discrete model reduces the distortion in representing highly complex shapes by eliminating the requirement for defining continuous density functions. 
Our model is also computationally efficient, which stems from its finite-dimensional discrete domain, contrasting with the infinite-dimensional continuum of continuous density functions. 

2)~For decentralized distribution-based shape formation, we design a distribution-similarity error metric to assess the deviation between the swarm distribution and the sample-point distribution. 
Then, we propose a decentralized meanshift-based control law \cite{Sun2025MeanShift} to minimize the error metric, i.e., to drive the swarm distribution to fit the sample-point distribution, by feeding back the mass values of all sample points. 
The mass values are estimated by each robot based on its own position and local interactions in a decentralized consensus manner via our improved mass estimator. 
It is proven that the mass estimates for all sample points can converge to the global true values asymptotically. 

3)~We provide a stability analysis of our proposed meanshift control law. 
This is a valuable contribution since behavior-based formation methods are notoriously difficult to analyze for stability---particularly when integrating collision avoidance \cite{Zheng2022Transporting}, \cite{Zhao2019Bearing}, \cite{Liu2023Distributed}. 
Although convergence in convex cases is theoretically proven in this paper, the result is still meaningful, given that existing work can only guarantee convergence for rectangular shapes \cite{Sun2023Assembly}.
Simulation and experimental results further validate the effectiveness of our proposed control strategy for highly non-convex shapes. 

The remainder of this paper is organized as follows. 
Section~\ref{Sec_Overview} presents the problem formulation and an overview of the system. 
Section~\ref{Sec_statement} introduces the definition of the discrete mass distribution model, the control objective with respect to this model, and the design of a distribution-similarity error metric. 
Section~\ref{Sec_formation} details the decentralized meanshift control strategy and its convergence analysis.
Simulation and experimental results are presented in Section~\ref{Sec_evaluation}. 
Finally, Section~\ref{Sec_conclusion} concludes the paper.

\section{Problem Formulation and Overview}
\label{Sec_Overview}

\subsection{Robot Model}
\label{Subsec_RobotModel}

Consider a swarm of $n$ robots operating in $\mathbb{R}^d$, where $n \geq 2$ and $d \in \{1, 2, 3\}$. 
The position of robot $i$ is denoted by $p_i \in \mathbb{R}^d$ for $i \in \{1, \ldots, n\}$. 
Each robot follows the dynamics $\dot{p}_i = v_i$, where the velocity control input $v_i$ satisfies $\|v_i\| \leq v_{\mathrm{max}}$ with $v_{\mathrm{max}} > 0$. 
If the Euclidean distance $\| p_i-p_j \|$ between robots $i$ and $j$ is less than a sensing range $r_\mathrm{sense} > 0$, they can exchange information via wireless communication. 
The interaction network is modeled as a time-varying undirected graph $\mathcal{G}(t) = (\mathcal{V}, \mathcal{E}(t))$ for $t \geq t_0$, where $\mathcal{V} = \{1, \ldots, n\}$ and $\mathcal{E}(t) = \{ (i,j) \in \mathcal{V} \times \mathcal{V} : \|p_i(t) - p_j(t)\| \leq r_{\mathrm{sense}}, i\ne j\}$. 
The neighbor set of robot $i$ is defined as $\mathcal{N}_i(t) = \{ j \in \mathcal{V} : (i,j) \in \mathcal{E}(t) \}$. 
For clarity, the main symbols used in this paper are listed in Table~\ref{Table_Symbols}.

\begin{table}[H]
	\centering
	\caption{List of Main Symbols Used in This Paper} \label{Table_Symbols}
	\begin{tabular}{cl}
		\toprule
		\multicolumn{1}{c}{\textbf{Symbols}} & \multicolumn{1}{c}{\textbf{Description}} \\
		\midrule
		$p_i$ & Position of robot $i$ \\
		$v_i$ & Control input of robot $i$ \\
		$v_i^\mathrm{ms}$ & Meanshift control command for robot $i$ \\
		$v_i^\mathrm{cv}$ & Collision-avoidance control command for robot $i$ \\
		$q_k$ & Position of sample point $k$ \\
		$P_k$ & Mass of sample point $k$ \\
		$\hat{P}_{k,i}$ & Robot $i$'s mass estimate of sample point $k$ \\
		$q_{o,i}$ & Desired shape position interpreted by robot $i$ \\
		$\theta_{o,i}$ & Desired shape orientation interpreted by robot $i$ \\
		$\|\cdot\|$ & Euclidean norm \\
		$\beta$ & Bandwidth of the Gaussian kernel function \\
		$r_\mathrm{sense}$ & Sensing range of robots \\
		$r_\mathrm{avoid}$ & Collision-avoidance range of robots \\
		$v_\mathrm{max}$ & Maximum speed of robots \\
		$\mathcal{N}_i$ & Set of neighboring robots of robot $i$ \\
		$\mathcal{N}_i^\prime$ & Set of neighbors within robot $i$'s avoidance range \\
		$\mathcal{G}$ & Communication graph of the robot swarm \\
		\bottomrule
	\end{tabular}
\end{table}

\subsection{System Overview}

\begin{figure*}[!t] 
	\centering	\includegraphics[width=1\linewidth]{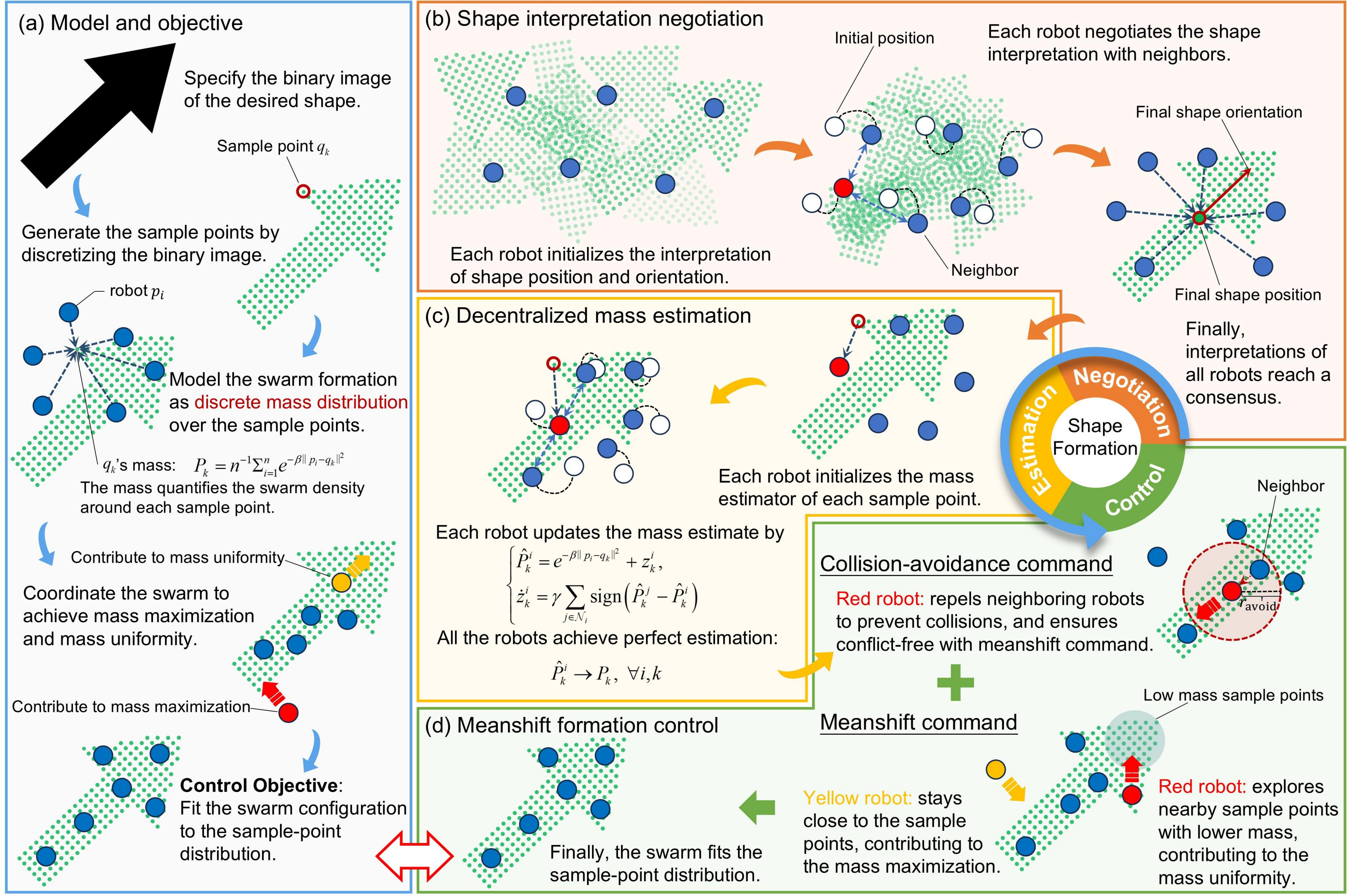}
	\caption{		
		Overview of the system. 
		(a) Illustration of the proposed model and control objective. 
		(b) Intuitive explanation of the shape interpretation negotiation process. 
		(c) Intuitive explanation of the decentralized mass estimation. 
		(d) Intuitive explanation of the meanshift control strategy.
		\label{fig_overview}}		
\end{figure*}

The objective of the swarm is to achieve shape formation using only local communication, starting from an arbitrary initial configuration. 
A set of discrete sample points specifies the desired shape, and the goal is to coordinate the swarm's spatial distribution to fit the sample-point distribution.
To this end, the swarm's spatial distribution is modeled using a mass-distribution representation (see Fig.~\ref{fig_overview}(a)). 
An overview of the proposed strategy is depicted in Fig.~\ref{fig_overview}. 

The first step is to design decentralized negotiation protocols for the shape parameters (see Fig.~\ref{fig_overview}(b)). 
Instead of being predefined, the position and orientation of the desired shape are determined autonomously by the robots. 
Initially, each robot independently determines these parameters based on its local perspective, which may lead to conflicting interpretations. 
Through the proposed consensus protocols, robots resolve these discrepancies via local interactions. 
Finally, all robots reach consensus on the shape interpretations within a finite time. 
The second step is to design a decentralized mass estimator (see Fig.~\ref{fig_overview}(c)). 
Since the mass distribution calculation relies on all robots' positions, a decentralized estimator is employed, where each robot estimates the global mass distribution through local interactions. 
The proposed estimator can theoretically ensure convergence to zero estimation error. 
The final step is to design a formation controller to accomplish shape formation (see Fig.~\ref{fig_overview}(d)). 
The meanshift controller consists of two commands: a meanshift command that guides robots toward nearby sample points with lower mass, and a collision-avoidance command that repulses neighboring robots. 
These commands work collaboratively to coordinate the swarm, enabling it to fit the sample-point distribution and form the desired shape. 
Furthermore, a convergence analysis is provided in Section~\ref{Subsec_analysis} to mathematically assess the system's feasibility and stability.

\section{Distribution Description of Shape Formation}
\label{Sec_statement}

In this section, we first propose a sample-point-based discrete mass distribution model for the formation objective and explore its advantages. 
Then, a distribution-similarity error metric is presented to assess the accomplishment of the formation objective. 
Finally, decentralized parameter negotiation protocols are proposed for robots to autonomously determine the position and orientation of the desired shape. 

\subsection{Mass-Distribution Based Formation Definition}
\label{Subsec_shapedefinition}

In this paper, we use a discrete mass distribution defined over a set of finite sample points to describe the swarm formation, as shown in Fig.~\ref{Fig_CtrlObjective}(a). 
The sample points are specified by a uniform discretization of the desired shape. 
Each sample point is described by a Euclidean parameter $q_k \in\mathbb{R}^d$, i.e., the spatial position of sample point $k\in \{1,...,m\}$. 
The set of all sample points is consistent with the desired shape in terms of both spatial arrangement and statistical properties, and thus it can characterize the geometric configuration of the desired shape. 
There are many benefits to using sample points to represent the desired shape, such as geometric flexibility, swarm scalability, and computational efficiency, which will be discussed later in this subsection.

Once the sample-point set for a desired shape is determined, the control objective is regarded as coordinating the robot swarm's spatial distribution to match the sample-point distribution.
The swarm's spatial distribution is described by a mass function assigning a nonnegative mass value to each sample point, representing the robot density around it.
Specifically, the mass of the $k$-th sample point is defined as 
\begin{align}
	P_k\left(\mathbf{p}\right)=\dfrac{1}{n} \sum_{i=1}^{n} e^{-\beta \left\| q_k - p_i \right\|^2}, \quad k\in \{1,...,m\}
	\label{Equ_massfunction}
\end{align}
where $\mathbf{p}=[p_1^\top,...,p_n^\top]^\top \in \mathbb{R}^{dn}$ refers to the configuration of the robot swarm, and the constant $\beta>0$ is a kernel bandwidth. 
Recall that $p_i$ is robot $i$'s position and $q_k$ is sample point $k$'s position. 
Here, we use the idea of \emph{Kernel Density Estimation} to estimate the spatial distribution of swarm formation, which is a non-parametric method for estimating a spatial distribution from discrete samples in the absence of prior distribution information \cite{Cover2001Wiley}.
As can be seen, the higher $P_k$ is, the greater the density of robots around $q_k$.
In this way, the overall discrete mass distribution (or mass distribution for simplicity) of the robot swarm can be denoted as $\mathbf{P}(\mathbf{p}) = [P_1(\mathbf{p}), \ldots, P_m(\mathbf{p})]^\top \in \mathbb{R}^m$. 

To sum up, our task is to design a decentralized control strategy to coordinate a robot swarm from any initial configuration to fit the sample-point distribution of a given desired shape. 
For this purpose, the objective of the control strategy is to achieve the following two objectives based on the mass distribution model:
\begin{itemize}
	\item \textit{Mass maximization}: For any $k\in \{1,...,m\}$, $P_k$ should be as large as possible. 
	\item \textit{Mass uniformity}: For any $k,l\in \{1,...,m\}$ and $k\neq l$, $P_k$ and $P_l$ should be as consistent as possible. 
\end{itemize}
Next, we use two examples to explain the concepts of mass uniformity and mass maximization, as shown in Fig.~\ref{Fig_CtrlObjective}(b--c). 
The mass maximization seeks to maximize the swarm density around each sample point, ensuring that robots concentrate densely near the desired shape, as demonstrated in Fig.~\ref{Fig_CtrlObjective}(b). 
The mass uniformity, on the other hand, aims to maintain a consistent robot density across all the sample points, promoting a uniform swarm spatial distribution across the desired shape, as shown in Fig.~\ref{Fig_CtrlObjective}(c). 
By balancing these two objectives, the swarm can fit the sample-point distribution. 

\begin{figure}[!t]
	\centering	\includegraphics[width=0.9\linewidth]{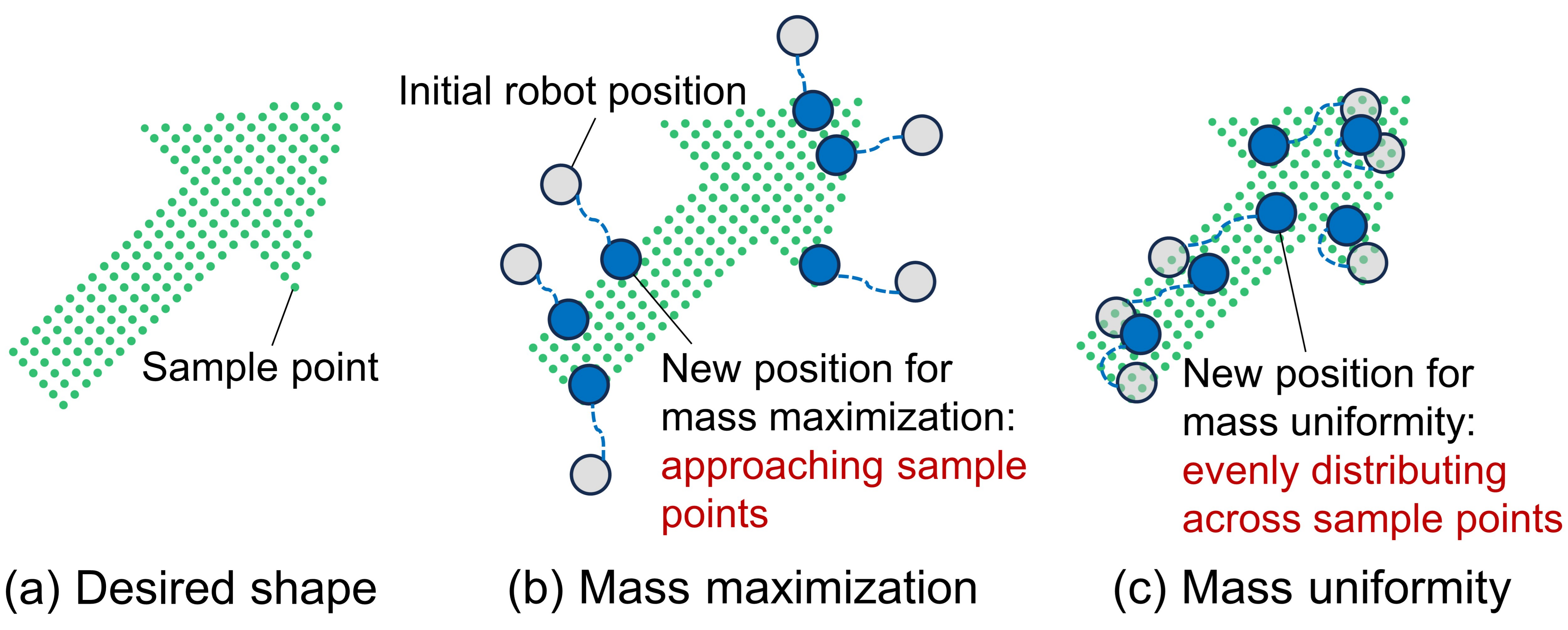}
	\caption{
		Intuitive explanation of the control objectives. 
		(a) Desired shape characterized by sample points. 
		(b) The intuitive explanation of the mass maximization objective. 
		(c) The intuitive explanation of the mass uniformity objective. 
		\label{Fig_CtrlObjective}}
\end{figure}

Some important remarks about the sample-point-based discrete mass distribution model are as follows. 
\begin{itemize}
	\item \textit{Geometric Flexibility}: Representing a shape as a set of sample points greatly simplifies the design of complex geometries. 
	This approach is more flexible than defining shapes via mathematical equations or inequality constraints \cite{Sun2025TIE}, which can become extremely cumbersome for intricate forms. 
	
	\item \textit{Swarm Scalability}: The distribution of sample points is independent of the number of robots, as points are not allocated to specific agents. 
	This feature provides inherent adaptability to swarm-size variations and self-healing capabilities. 
\end{itemize}
Moreover, compared to the state-of-the-art distribution-based methods \cite{Zheng2022Transporting}, \cite{Liu2024Self}, our model exhibits the following advantages.
\begin{itemize}
	\item \textit{Boundary Clarity}: Compared to methods \cite{Zheng2022Transporting}, \cite{Liu2024Self} which can only ambiguously characterize shape boundaries due to smoothness of the density function, the discrete sample points used in the mass distribution model can better represent the shape boundary, ensuring that the desired formation remains within the desired shape boundary.
	\item \textit{Computational Efficiency}: 
	Compared to \cite{Zheng2022Transporting}, which models the shape as a continuous density function defined over an infinite continuum, our discrete model is defined over a finite point set. 
	This results in a finite-dimensional vector representation that is significantly more efficient to compute and communicate. 
\end{itemize}

\subsection{Distribution-Similarity Error Metric of Shape Formation}
\label{Subsec_shapesimilarity}

To assess the deviation between the swarm's spatial distribution and the sample-point distribution, we propose a distribution-similarity error metric as 
\begin{align}
	F\left( \mathbf{p} \right) = F_{\mathrm{max}}\left( \mathbf{p} \right) + F_{\mathrm{uni}}\left( \mathbf{p} \right) 
	\label{Equ_errormetric}
\end{align}
where $F_{\mathrm{max}}$ and $F_{\mathrm{uni}}$ represent the mass maximization metric and the mass uniformity metric, respectively.
The mathematical expressions of the two metrics are defined as follows. 

The metric $F_{\mathrm{max}}$ is used to quantify the achievement of the mass maximization objective and is defined as
\begin{align}
	F_{\mathrm{max}}\left( \mathbf{p} \right) = -\ln \left(\sqrt{\sum_{k=1}^m P_k^2}\right)
	\label{Equ_maximization}
\end{align}
where $m$ is the number of sample points. 
As seen in \eqref{Equ_maximization}, the larger the total mass, the smaller the metric $F_{\mathrm{max}}$.
If robots move closer to the sample points, the increase in mass will cause $F_{\mathrm{max}}$ to decrease.

The metric $F_{\mathrm{uni}}$ evaluates the achievement of the mass uniformity objective and is defined as
\begin{align}
	F_{\mathrm{uni}}\left( \mathbf{p} \right) = -\dfrac{1}{m} \sum_{k=1}^m \ln \left(\sqrt{\dfrac{m P_k^2}{\sum_{l=1}^m P_l^2}}\right)
	\label{Equ_uniformity}
\end{align}
where $P_k,P_l$ denote the mass of sample points $k$ and $l$, respectively. 
As seen from \eqref{Equ_uniformity}, the closer $P_k$ and $P_l$ are, the smaller the metric $F_{\mathrm{uni}}$ is. 
In particular, if $P_k = P_l$, the metric $F_{\mathrm{uni}}$ is zero. 
Moreover, $F_{\mathrm{uni}}$ is a nonnegative function and reaches zero if and only if $P_k = P_l$ for all $k,l\in \{1,...,m\}$, as proven in Lemma \ref{Lem_uniformity}. 

\begin{lemma}
	The metric $F_{\mathrm{uni}}( \mathbf{p} ) \geq 0$ for all $\mathbf{p} \in \mathbb{R}^{dn}$, and $F_{\mathrm{uni}}(\mathbf{p}) = 0$ if and only if $P_k(\mathbf{p}) = P_l(\mathbf{p})$ for all $k,l \in \{1,\ldots,m\}$. 	
	\label{Lem_uniformity}
\end{lemma}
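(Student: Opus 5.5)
First note that every $P_k(\mathbf{p})$ is strictly positive, because it is an average of Gaussian kernels $e^{-\beta\|q_k-p_i\|^2}>0$. Hence all logarithms in \eqref{Equ_uniformity} are well defined and $F_{\mathrm{uni}}$ is finite for every $\mathbf{p}\in\mathbb{R}^{dn}$. Write $x_k = P_k^2>0$ and $S=\sum_{l=1}^m x_l$. Rewrite the metric as
\begin{align*}
F_{\mathrm{uni}}(\mathbf{p}) = -\frac{1}{2m}\sum_{k=1}^m \ln\!\left(\frac{m x_k}{S}\right) = \frac{1}{2}\left[\ln\!\left(\frac{S}{m}\right) - \frac{1}{m}\sum_{k=1}^m \ln x_k\right].
\end{align*}
The claim is then exactly the arithmetic--geometric mean inequality for the positive numbers $x_1,\ldots,x_m$.

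The main step is to apply concavity of $\ln$. By Jensen's inequality with uniform weights $1/m$,
\begin{align*}
\frac{1}{m}\sum_{k=1}^m \ln x_k \le \ln\!\left(\frac{1}{m}\sum_{k=1}^m x_k\right) = \ln\!\left(\frac{S}{m}\right).
\end{align*}
This gives $F_{\mathrm{uni}}(\mathbf{p})\ge 0$. An alternative that avoids Jensen is to set $y_k = m x_k/S$. These satisfy $\sum_k y_k = m$, and we have $F_{\mathrm{uni}} = -\frac{1}{2m}\sum_k \ln y_k$. Using $\ln y \le y-1$, we get $\sum_k \ln y_k \le \sum_k (y_k-1) = 0$.

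For the equality case, strict concavity of $\ln$ (equivalently, $\ln y = y-1$ iff $y=1$) shows that equality holds iff all $y_k=1$. That is, all $x_k = S/m$ are equal, so $P_k^2=P_l^2$ for all $k,l$. Since every $P_k>0$, this is equivalent to $P_k=P_l$. The converse is immediate: if all $P_k$ are equal, then each ratio $mP_k^2/\sum_l P_l^2$ equals $1$, and every logarithm vanishes.

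There is no real obstacle here. The only point needing care is positivity of $P_k$. It justifies the logarithms, and it lets us pass from $P_k^2=P_l^2$ to $P_k=P_l$. It comes directly from the definition \eqref{Equ_massfunction}.
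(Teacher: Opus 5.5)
Your proof is correct, but it is organized differently from the paper's. The paper rewrites $F_{\mathrm{uni}} = -\frac{1}{m}\sum_k \ln P_k - \frac{1}{2}\ln m + \ln\|\mathbf{P}\|$ and bounds it in two stages:
\begin{itemize}
\item Cauchy--Schwarz, $\mathbf{P}^\top\mathbf{1}_m \le \sqrt{m}\,\|\mathbf{P}\|$, gives $0 \le -\ln\bigl(\mathbf{P}^\top\mathbf{1}_m/(\sqrt{m}\,\|\mathbf{P}\|)\bigr)$;
\item Jensen, applied to the masses $P_k$ themselves rather than their squares, shows this quantity is at most $F_{\mathrm{uni}}$.
\end{itemize}
So $F_{\mathrm{uni}}=0$ forces equality in both steps.

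In terms of means of the $P_k$, $F_{\mathrm{uni}} = \ln(\mathrm{QM}/\mathrm{GM})$. The paper uses the chain $\mathrm{QM}\ge\mathrm{AM}\ge\mathrm{GM}$. You instead apply AM--GM once to the squares $P_k^2$, which gives $\mathrm{QM}\ge\mathrm{GM}$ in a single step.

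Your version is shorter and has only one equality case to track. It also states explicitly that $P_k>0$. Both arguments need this, for the logarithms and to pass from $P_k^2=P_l^2$ to $P_k=P_l$, but the paper leaves it unstated.

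What the paper's detour buys is the decomposition $F_{\mathrm{uni}} = -\ln\cos\angle(\mathbf{P},\mathbf{1}_m) + \ln(\mathrm{AM}/\mathrm{GM})$ into two nonnegative pieces. The first piece measures the angle between the mass vector and the uniform vector. This is a mildly informative interpretation, but it is not needed for the lemma.
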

\begin{proof}
	The proof is given in the Appendix. 
\end{proof}

The proposed metric $F$ is continuously differentiable with respect to the position $p_i$ of any robot $i$. 
According to the definition \eqref{Equ_errormetric}, the gradient of $F$ with respect to $p_i$ can be computed as $\nabla_{p_i} F\left(\mathbf{p}\right)=\frac{2\beta}{mn}\sum_{k=1}^m P_k^{-1} e^{-\beta\|p_i-q_k\|^2}(p_i-q_k)$, where $\nabla$ represents the gradient operator. 
Recall that $\beta$ is the kernel bandwidth. 
In this way, the gradient of $F$ with respect to $p$ is given by $\nabla_\mathbf{p} F(\mathbf{p}) = \left[\nabla_{p_1} F^\top(\mathbf{p}), \ldots, \nabla_{p_n} F^\top(\mathbf{p}) \right]^\top$. 
The specific differentiability result of $F$ is given by Lemma~\ref{Lem_Smooth}, which will be used later in Section~\ref{Sec_formation} to design and analyze the control law. 

\begin{lemma}
	The metric $F(\mathbf{p})$ is smooth, i.e., infinitely differentiable with respect to $\mathbf{p}$ on $\mathbb{R}^{dn}$. 
	\label{Lem_Smooth}
\end{lemma}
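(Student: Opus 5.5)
The plan is to first simplify $F$ algebraically into a sum of logarithms of the masses, and then argue smoothness by composition.

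Expanding \eqref{Equ_uniformity}, each summand equals $\frac12\ln m + \ln P_k - \frac12 \ln\big(\sum_{l} P_l^2\big)$. Averaging over $k$ gives
\begin{align*}
F_{\mathrm{uni}}(\mathbf{p}) = -\tfrac12\ln m - \tfrac1m\sum_{k=1}^m \ln P_k(\mathbf{p}) + \tfrac12\ln\Big(\sum_{l=1}^m P_l^2(\mathbf{p})\Big).
\end{align*}
Adding $F_{\mathrm{max}}(\mathbf{p}) = -\frac12\ln(\sum_l P_l^2)$, the two quadratic-sum terms cancel, leaving
\begin{align*}
F(\mathbf{p}) = -\tfrac12\ln m - \tfrac1m\sum_{k=1}^m \ln P_k(\mathbf{p}).
\end{align*}
This form is consistent with the stated gradient formula, which serves as a sanity check. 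The rearrangement of logarithms is legitimate because every $P_k$ is strictly positive, as shown next.

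The key step is positivity of the masses. By \eqref{Equ_massfunction}, each $P_k(\mathbf{p})$ is $\frac1n$ times a finite sum of terms $e^{-\beta\|q_k-p_i\|^2}$. Each such term is strictly positive for every $\mathbf{p}\in\mathbb{R}^{dn}$, since the exponential never vanishes. Hence $P_k(\mathbf{p})>0$ everywhere, and this holds globally with no compactness assumption needed.

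Smoothness then follows by composition. Each map $\mathbf{p}\mapsto -\beta\|q_k-p_i\|^2$ is a polynomial in the coordinates of $\mathbf{p}$, hence $C^\infty$. The exponential is $C^\infty$, so each $P_k$ is $C^\infty$ on $\mathbb{R}^{dn}$ as a finite sum of smooth functions. The logarithm is $C^\infty$ on $(0,\infty)$, and $P_k$ takes values only there, so each $\ln P_k$ is $C^\infty$ by the chain rule. A finite linear combination of these plus a constant is $C^\infty$, which proves the lemma.

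The only point needing care is ensuring the logarithm is never evaluated at zero. That is exactly the strict positivity from the Gaussian kernel above. Even without the simplification, the same argument applies to the original expression, because $\sum_l P_l^2>0$ as well.
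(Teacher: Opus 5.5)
Your proof is correct and follows essentially the same route as the paper. The paper also writes $F = -\frac{1}{m}\sum_{k=1}^m \ln P_k - \frac{1}{2}\ln m$, shows each $P_k$ is smooth as a finite sum of compositions of smooth maps, and uses strict positivity of $P_k$ to conclude that each $\ln P_k$ is smooth; the only difference is that you explicitly derive the cancellation of the $\ln\big(\sum_l P_l^2\big)$ terms, which the paper uses without spelling out.
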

\begin{proof}
	The proof is given in the Appendix. 
\end{proof}

\begin{figure}[!t]
	\centering	\includegraphics[width=0.9\linewidth]{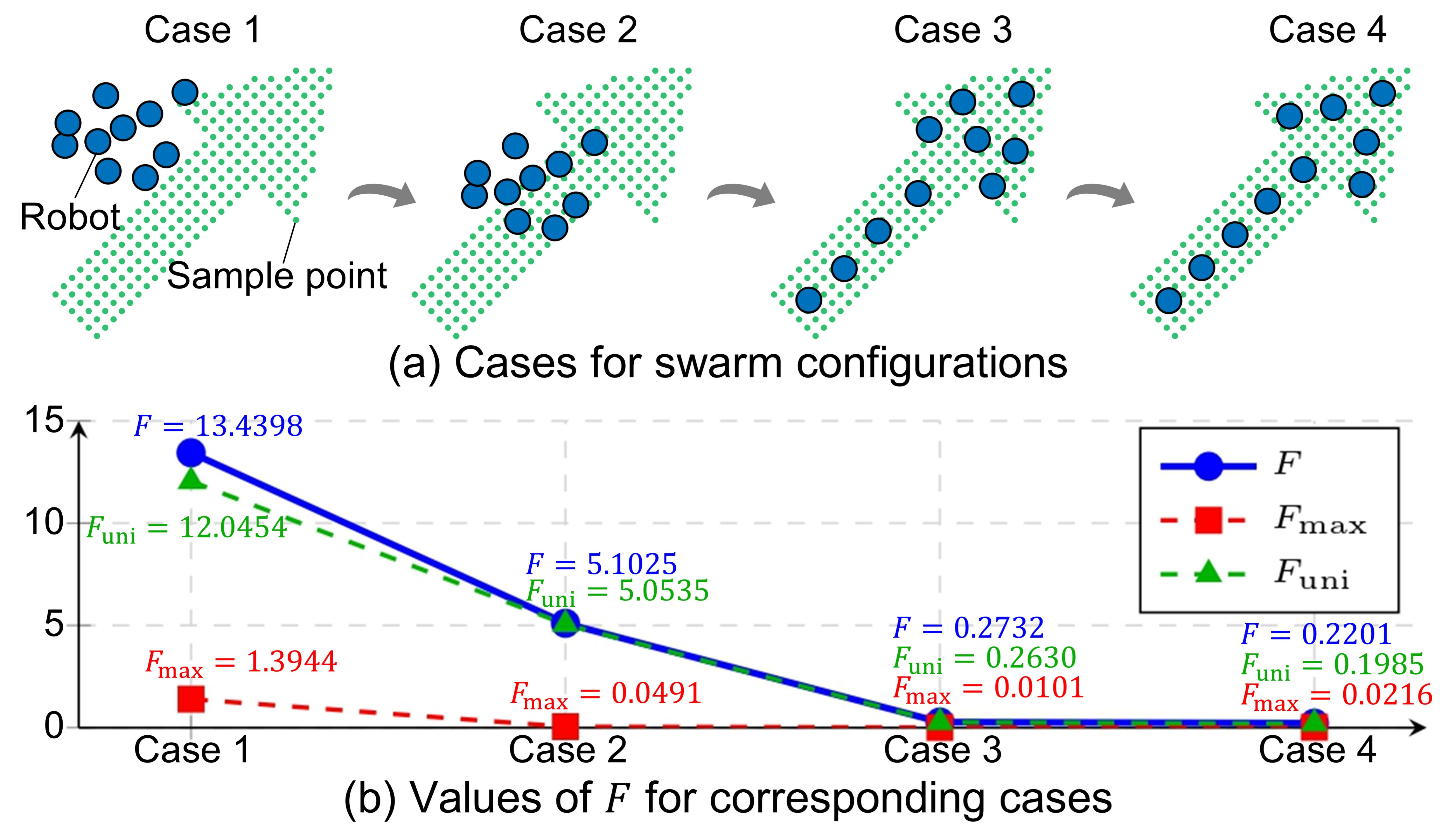}
	\caption{
		Intuitive explanation of the metric $F$. 
		(a) 4 cases of different swarm configurations with respect to the arrow shape. 
		(b) The values of $F, F_{\mathrm{max}}$ and $F_{\mathrm{uni}}$ corresponding to cases shown in (a). 
		\label{Fig_MetricF}}
\end{figure}

Finally, we use an intuitive example to explain the proposed metrics. 
Fig.~\ref{Fig_MetricF}(a) shows different swarm configurations $\mathbf{p}$ for the same desired shape, while Fig.~\ref{Fig_MetricF}(b) plots the corresponding values of $F$, $F_{\mathrm{max}}$, and $F_{\mathrm{uni}}$. 
In case 2, the robots are closer to the sample points than in case 1, resulting in a lower $F_{\mathrm{max}}$ value. 
Additionally, as the swarm distribution becomes more uniform from case 2 to case 4, $F_{\mathrm{uni}}$ decreases. 
Consequently, the similarity between the swarm's spatial distribution and that of the sample points increases from case 1 to case 4, and $F$ decreases accordingly. 
This demonstrates that $F$ effectively measures distribution similarity, and thus the control objective can be formulated as minimizing $F$.

\subsection{Position and Orientation of Formation Shape}
\label{Subsec_shapeparameters}

The position and orientation are two fundamental parameters for shape formation. 
To determine the absolute coordinates $q_k$ of each sample point, the swarm must agree on the shape's position and orientation.
In this paper, these parameters are negotiated by the robots in a decentralized manner, rather than being directly provided to all robots. 
Specifically, each robot initially interprets both parameters from its local perspective, which may conflict with those of other robots.
Then, the robots resolve the conflicts through local negotiation and reach consensus on the final position and orientation of the formation shape.

The position of a shape is defined as the position of the sample point that is closest to the center of the shape.
Denote $q_{o,i}$ as the position of the desired formation shape interpreted by robot $i$, for $i\in \{1,\ldots,n\}$.
Different robots' interpretations of $q_o$ can reach a consensus by the following decentralized protocol: 
\begin{align}
	\dot q_{o,i}=-c_1 \sum_{j \in \mathcal{N}_i} \mathrm{sign} \left(q_{o,i}-q_{o,j}\right)\left|q_{o,i}-q_{o,j}\right|^{\alpha}
	\label{Equ_protocol_pos}
\end{align}
where $c_1>0$ and $0<\alpha<1$ are two constants. 
The operators $\mathrm{sign}(\cdot)$ and $|\cdot|$ represent the signum and absolute functions defined component-wise, respectively. 
Recall that $\mathcal{N}_i$ is the neighbor set of robot $i$. 
Initially, each robot interprets the position of itself as the shape position, i.e., $q_{o,i}(t_0)=p_i(t_0)$. 
As seen from \eqref{Equ_protocol_pos}, the term in the sum symbol is the discrepancy in the interpretation of $q_{o,i}$ between robot $i$ and its neighboring robots, which aims to drive $q_{o,i}\rightarrow q_{o,j}$. 
Moreover, since $0<\alpha<1$, the negotiation consensus can be achieved in a finite time. 
The convergence analysis of the protocol \eqref{Equ_protocol_pos} is given in Theorem~\ref{Thm_Nego_Consensus}.

The orientation of a shape is the angle between a pre-defined orientation vector and the positive $x$-axis, where the orientation vector starts from the center of the shape and its direction can be arbitrarily selected.
Let $\theta_{o,i}$ denote the orientation angle of the desired formation shape interpreted by robot $i$.
The decentralized interpretations of $\theta_{o,i}$ can be governed by the following negotiation protocol:
\begin{align}
	\dot \theta_{o,i}=-c_2 \sum_{j \in \mathcal{N}_i} \mathrm{sign} \left(\theta_{o,i}-\theta_{o,j}\right)\left|\theta_{o,i}-\theta_{o,j}\right|^{\alpha}
	\label{Equ_protocol_ori}
\end{align}
where $c_2$ is a positive constant. 
The initial value of $\theta_{o,i}$ can be random or follow task orientation requirements.
As in protocol \eqref{Equ_protocol_pos}, the summed term in \eqref{Equ_protocol_ori} is also the discrepancy in the interpretation of $\theta_o$ between robot $i$ and its neighbors, driving $\theta_{o,i}\rightarrow \theta_{o,j}$.
In both position and orientation negotiations, each robot achieves consensus on the final formation shape using its neighbors' interpretations via wireless communication.

Next, we present the convergence analysis of protocols \eqref{Equ_protocol_pos} and \eqref{Equ_protocol_ori}. 
To this end, we first make the following assumption.

\begin{assumption}
	$\mathcal{G}(t)$ is connected for all $t \ge t_0$. 
	\label{Asm_graph}
\end{assumption}

Assumption~\ref{Asm_graph} is common in shape formation control \cite{Sun2023Assembly}, \cite{Nguyen2019TRO}. 
As an alternative, state-of-the-art methods on network-connectivity preservation can rigorously ensure that Assumption~\ref{Asm_graph} holds \cite{Gasparri2017TRO}, \cite{Li2023TNSE}. 
Now, we are ready to present the convergence analysis of protocols \eqref{Equ_protocol_pos} and \eqref{Equ_protocol_ori} in Theorem~\ref{Thm_Nego_Consensus}.

\begin{theorem}\label{Thm_Nego_Consensus}
	Suppose that $\bar q_o=\frac{1}{n}\sum \nolimits_{i=1}^n q_{o,i}(t_0)$ and $\bar \theta_o=\frac{1}{n}\sum \nolimits_{i=1}^n \theta_{o,i}(t_0)$. 
	If $\mathcal{G}(t)$ is connected for all $t \ge t_0$, then under protocols \eqref{Equ_protocol_pos} and \eqref{Equ_protocol_ori}, there exists a finite time $t^{\ast} \ge t_0$ such that $q_{o,i}(t) = \bar q_o$ and $\theta_{o,i}(t) = \bar \theta_o$ for all $t \geq t^*$. 
\end{theorem}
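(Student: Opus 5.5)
Since the protocols \eqref{Equ_protocol_pos} and \eqref{Equ_protocol_ori} act component-wise and are decoupled from each other, I would treat a single scalar component: let $x_i$ denote one coordinate of $q_{o,i}$ (or $\theta_{o,i}$), so that $\dot x_i=-c\sum_{j\in\mathcal{N}_i(t)}\mathrm{sign}(x_i-x_j)|x_i-x_j|^{\alpha}$. The argument has three steps: show the average is invariant, show the disagreement decays, then apply a standard finite-time stability lemma.

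\textbf{Step 1: invariance of the average.} The summand $\phi(x_i-x_j)=\mathrm{sign}(x_i-x_j)|x_i-x_j|^\alpha$ is odd, and $\mathcal{G}(t)$ is undirected. Hence $\sum_i\dot x_i=0$, and $\bar x=\frac1n\sum_i x_i(t_0)$ is preserved for all $t\ge t_0$. This is what identifies the limit as $\bar q_o$ and $\bar\theta_o$.

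\textbf{Step 2: Lyapunov decay.} With $\delta_i=x_i-\bar x$, I would take $V=\frac12\sum_i\delta_i^2$, which is a common Lyapunov function valid under switching of the graph. Using oddness to symmetrize,
\begin{align*}
\dot V=-\frac{c}{2}\sum_{(i,j)\in\mathcal{E}(t)}|x_i-x_j|^{1+\alpha}.
\end{align*}
Because $0<\alpha<1$, the inequality $\sum a_k^{p}\ge(\sum a_k)^{p}$ for $p=\frac{1+\alpha}{2}\le1$ gives
\begin{align*}
\sum_{(i,j)\in\mathcal{E}}|x_i-x_j|^{1+\alpha}\ge\Bigl(\sum_{(i,j)\in\mathcal{E}}(x_i-x_j)^2\Bigr)^{\frac{1+\alpha}{2}}=\bigl(2\delta^\top L(t)\delta\bigr)^{\frac{1+\alpha}{2}}.
\end{align*}
Since $\mathbf 1^\top\delta=0$ and $\mathcal{G}(t)$ is connected by Assumption~\ref{Asm_graph}, we have $\delta^\top L\delta\ge\lambda_2(L(t))\|\delta\|^2$. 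The graph has $n$ fixed nodes, so there are only finitely many connected graphs, and $\lambda^\ast=\min\lambda_2>0$ over them is a uniform lower bound. Therefore $\dot V\le -\kappa V^{\frac{1+\alpha}{2}}$ with $\kappa=\frac{c}{2}(4\lambda^\ast)^{\frac{1+\alpha}{2}}>0$ (constants up to bookkeeping).

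\textbf{Step 3: finite-time conclusion.} The standard finite-time lemma (Bhat--Bernstein) yields $V(t)=0$ for all $t\ge t_0+\frac{2V(t_0)^{(1-\alpha)/2}}{\kappa(1-\alpha)}$. This bound holds for each coordinate and for the orientation. Taking $t^\ast$ as the maximum over all coordinates and over $\theta$ gives the claim, with $q_{o,i}=\bar q_o$ and $\theta_{o,i}=\bar\theta_o$ thereafter.

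\textbf{Main obstacle.} The hard step is the rigor of the switching, discontinuous right-hand side. Two things must be handled: the existence of solutions, and the differentiability of $V$ along trajectories when $\mathcal{N}_i(t)$ jumps. I would argue that the right-hand side is continuous in $x$ for each fixed graph, since $\phi$ is continuous at $0$ when $\alpha>0$. I would also assume finitely many switches on bounded intervals, or equivalently interpret solutions in the Carathéodory sense. Then $V$ is absolutely continuous and the differential inequality holds almost everywhere, which is enough for the comparison argument in Step 3.
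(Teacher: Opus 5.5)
Your proposal is correct and follows essentially the same route as the paper: the paper uses the same Lyapunov function $V=\frac12\sum_i\delta_i^\top\delta_i$ and the same bound $\dot V\le-\frac{c}{2}(4\lambda_2)^{\frac{1+\alpha}{2}}V^{\frac{1+\alpha}{2}}$, which it cites from an external finite-time consensus theorem while you derive it directly via the inequality $\sum a_k^p\ge(\sum a_k)^p$. It then takes a uniform positive lower bound on $\lambda_2$ from the finiteness of possible graphs and applies the Bhat--Bernstein lemma, giving the same settling-time bound as yours. Your explicit treatment of average invariance and of Carath\'eodory solutions under switching only spells out what the paper leaves implicit.
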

\begin{proof}
	The proof is given in the Appendix. 
\end{proof}

\section{Meanshift Strategy for Shape Formation}
\label{Sec_formation}

To achieve the formation objective specified by the discrete mass distribution model, we now propose a decentralized meanshift strategy. 
Specifically, we first present a decentralized mass estimator that enables each robot to estimate the mass of all the sample points through local interactions. 
Then, we design a decentralized mass-feedback controller that coordinates the spatial distribution of the robot swarm to fit the sample-point distribution. 
Finally, we provide a convergence analysis of the proposed strategy. 

\subsection{Decentralized Mass Estimation}
\label{Subsec_mass_estimation}

Given a desired shape specified by sample points, the first step is to obtain the current mass at each sample point. 
The mass of each sample point changes dynamically as the robot swarm moves. 
A straightforward approach is to broadcast the mass distribution to all robots.
The limitation is that a centralized node is required to calculate the entire mass distribution, creating a single point of failure.
Instead, our approach is to let the robots estimate the mass in a decentralized manner. 

The mass value $P_k$ denotes the density of all the robots around sample point $k$, as defined in \eqref{Equ_massfunction}. 
Suppose that $\hat{P}_{k,i}$ represents robot $i$'s estimate of $P_k$ from its local perspective. 
The mass estimate $\hat{P}_{k,i}$ of each robot can converge to $P_k$ by the following decentralized estimator: 
\begin{align}
	\begin{split}
		\hat{P}_{k,i} &= e^{-\beta \| p_i - q_k \|^2} + z_{k,i} \\
		\dot{z}_{k,i} &= \gamma \sum_{j \in \mathcal{N}_i} \operatorname{sign}\left(\hat{P}_{k,j} - \hat{P}_{k,i} \right)
	\end{split}
	\label{Equ_massestimator}
\end{align}
where $\gamma > 0$ is a constant gain, and $z_{k,i} \in \mathbb{R}$ is an internal state initialized as $z_{k,i}(t_0) = 0$. 
Recall that $\beta > 0$ denotes the kernel bandwidth. 
As seen from \eqref{Equ_massestimator}, the term in the sum symbol represents the discrepancy between robot $i$'s mass estimate and those of its neighboring robots, which aims to drive $\hat{P}_{k,i} \rightarrow \hat{P}_{k,j}$. 
Moreover, the term $e^{-\beta\|p_i - q_k\|^2}$ is a reference signal, which is used to ensure that $\hat{P}_{k,i}$ converges to the average of all robots' reference signals, i.e.,
$\hat{P}_{k,i} \rightarrow \frac{1}{n}\sum_{j=1}^n e^{-\beta\|p_j - q_k\|^2}$.
Next, we present the convergence analysis of the proposed estimator \eqref{Equ_massestimator} in Theorem~\ref{Thm_Esti_Converge}.

\begin{theorem}\label{Thm_Esti_Converge}
	Suppose that each robot's position is arbitrarily initialized and its speed is bounded by a maximum speed $v_{\mathrm{max}} > 0$. 
	Under Assumption~\ref{Asm_graph}, if the parameter $\gamma$ satisfies $\gamma > (n-1)\sqrt{2\beta/e} \, v_{\mathrm{max}}$, then for each $i \in \{1, \ldots, n\}$ and $k \in \{1, \ldots, m\}$, the mass estimator $\hat{P}_{k,i}$ defined in \eqref{Equ_massestimator} asymptotically converges to the true mass $P_k$ defined in \eqref{Equ_massfunction}.
\end{theorem}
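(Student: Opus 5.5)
This is the standard dynamic average consensus argument with a signum protocol, in the spirit of Chen--Cao--Ren. It has three steps: a conservation identity, a Lyapunov bound on disagreement, and a connectivity argument.

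\emph{Step 1: conservation.} Fix $k$ and write $r_{k,i}=e^{-\beta\|p_i-q_k\|^2}$. Since $\mathcal{G}(t)$ is undirected and $\operatorname{sign}$ is odd, the pairwise terms in \eqref{Equ_massestimator} cancel, so $\sum_{i=1}^n \dot z_{k,i}=0$. With $z_{k,i}(t_0)=0$ this gives $\sum_i z_{k,i}(t)=0$ for all $t$. Hence
\begin{align*}
\frac1n\sum_{i=1}^n \hat P_{k,i}(t)=\frac1n\sum_{i=1}^n r_{k,i}(t)=P_k(\mathbf p(t)).
\end{align*}
It therefore suffices to show that the disagreement $\hat P_{k,i}-\hat P_{k,j}\to 0$ for all $i,j$.

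\emph{Step 2: bound on the reference derivative.} We have $\dot r_{k,i}=-2\beta e^{-\beta\|p_i-q_k\|^2}(p_i-q_k)^\top \dot p_i$. The function $s\mapsto 2\beta s e^{-\beta s^2}$ attains its maximum at $s=1/\sqrt{2\beta}$, with value $\sqrt{2\beta/e}$. Together with $\|\dot p_i\|\le v_{\max}$ this gives $|\dot r_{k,i}|\le \sqrt{2\beta/e}\,v_{\max}=:\rho$. Consequently $|\dot r_{k,i}-\dot r_{k,j}|\le 2\rho$.

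\emph{Step 3: Lyapunov argument on the extremes.} Let $V=\max_i\hat P_{k,i}-\min_i\hat P_{k,i}\ge 0$. The right-hand side is discontinuous, so solutions are taken in the Filippov sense and $V$ is treated through its upper Dini derivative. An alternative is $V=\frac12\sum_{(i,j)}|\hat P_{k,i}-\hat P_{k,j}|$ handled with nonsmooth analysis.

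Consider an agent $i^\ast$ attaining the maximum. Each of its neighbors contributes $-1$ (or $0$ if tied) to $\dot z_{k,i^\ast}$.
\begin{itemize}
\item The simplest bound uses only that the maximizer has at least one neighbor that is not tied with it, and the same for the minimizer. This gives $D^+V\le 2\rho-2\gamma$.
\item Ties complicate this: among a tied cluster of maximizers, only some agents touch non-maximal agents.
\item The condition $\gamma>(n-1)\rho$ suggests a sharper argument. Sort the estimates and take the set $S$ of top-ranked agents. By connectivity, at least one edge crosses from $S$ to its complement. Sum $\dot{\hat P}$ over $S$: the internal edges cancel, so the sum is at most $|S|\rho-\gamma$. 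If $|S|\le n-1$, then $\gamma>(n-1)\rho$ makes this strictly negative.
\item Using $V=\sum_i|\hat P_{k,i}-\text{avg}|$, or the sum of sorted-gap functionals, yields $D^+V\le -(\gamma-(n-1)\rho)<0$ whenever disagreement is nonzero.
\end{itemize}

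\emph{Conclusion.} This is finite-time, hence asymptotic, consensus. Step 1 then gives $\hat P_{k,i}\to P_k$ for every $i$ and every $k$.

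The main obstacle is the nonsmooth step: rigorously handling Filippov sliding on tie sets, and choosing the functional so that the crossing-edge argument yields exactly the $(n-1)\rho$ margin under a time-varying connected graph. I would first try the sum-over-top-$\ell$-set functional, $\max_{|S|=\ell}\sum_{i\in S}\hat P_{k,i}$, for $\ell=1,\dots,n-1$. Each such functional has upper Dini derivative at most $\ell\rho-\gamma<0$ whenever the top-$\ell$ set is strictly separated from the rest, which forces all sorted gaps to close.
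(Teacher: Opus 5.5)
\textbf{There is a gap in Step~3.}

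Your Steps~1 and~2 are correct. Step~2 is exactly the paper's computation: it bounds $|f_i|$ by $\sqrt{2\beta/e}\,v_{\max}$ via Lemma~\ref{Lem_Max_fi}. The paper stops there and invokes Theorem~3 of \cite{Chen2012Distributed}, whose hypotheses are zero initial sum of the $z_{k,i}$, connectivity, $|f_i|\le\bar f$ and $\gamma>(n-1)\bar f$. That theorem already packages your conservation identity together with the consensus argument.

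You instead try to re-prove that theorem, and Step~3 does not yet do so. It offers several candidate functionals without completing an argument for any of them, and the two you lean on fail as stated.
\begin{itemize}
\item The max--min bound $D^+V\le 2\rho-2\gamma$ is false when the extremes are tied, as you note yourself.
\item For the top-$\ell$ sums $\Phi_\ell$, the bound $\ell\rho-\gamma$ holds only while the $\ell$-th sorted gap is open. While that gap is closed, $\Phi_\ell$ can increase. Take $n=3$, the path $1$--$2$--$3$, $\hat P_{k,1}>\hat P_{k,2}=\hat P_{k,3}$ and $f_1=f_2=f_3=\rho$. The tie $\{2,3\}$ is then an attracting sliding mode with $\dot{\hat P}_{k,2}=\dot{\hat P}_{k,3}=\rho+\gamma/2$. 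Hence $\dot\Phi_2=2\rho-\gamma/2$, which is positive for $\gamma=2.1\rho>(n-1)\rho$.
\end{itemize}
So ``each $\Phi_\ell$ decreases whenever its gap is open'' does not by itself force the gaps to close, and you give no further argument that would. Your claim for $\sum_i|\hat P_{k,i}-\mathrm{avg}|$ happens to be true, but it is only asserted.

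\textbf{A repair that removes the tie issue.} Set $e_i=\hat P_{k,i}-P_k$ and use the smooth function $V=\tfrac12\sum_i e_i^2$.
\begin{itemize}
\item Let $\mathbf u_i$ be the standard basis vectors. Every element of the Filippov set of the sign part has the form $\gamma\sum_{\{i,j\}}\sigma_{ij}(\mathbf u_i-\mathbf u_j)$, summed over undirected edges of $\mathcal G(t)$. Here $\sigma_{ij}\in[-1,1]$, and $\sigma_{ij}=\operatorname{sign}(e_j-e_i)$ off ties.
\item Its inner product with $e$ is therefore $-\gamma\sum_{\{i,j\}}|e_i-e_j|$, whatever the selection on ties.
\item With $\sum_i e_i=0$ from your Step~1, $\sum_i e_i f_i\le\rho\sum_i|e_i|\le\tfrac n2\rho\,(\max_i e_i-\min_i e_i)$.
\item Connectivity at each instant gives $\sum_{\{i,j\}}|e_i-e_j|\ge\max_i e_i-\min_i e_i$. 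Switching of the graph causes no extra difficulty.
\item Combining these, $\dot V\le-(\gamma-\tfrac n2\rho)(\max_i e_i-\min_i e_i)\le-(\gamma-\tfrac n2\rho)\sqrt{2V/n}$.
\end{itemize}
This gives finite-time convergence, since $\gamma>(n-1)\rho\ge\tfrac n2\rho$ for $n\ge 2$.

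Your max--min functional can also be salvaged. For a.e.\ $t$, all tied maximizers share the derivative of the maximum. Summing over the top cluster $C^+$ gives $\tfrac{d}{dt}\max_i\hat P_{k,i}\le\rho-\gamma/|C^+|$, and symmetrically at the bottom. Since $|C^+|+|C^-|\le n$, this yields $\dot V\le 2\rho-4\gamma/n<0$.

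You do not need a functional that reproduces the $(n-1)\rho$ margin exactly; that condition is only sufficient. Alternatively, simply cite the result, as the paper does.
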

\begin{proof}
	The proof is given in the Appendix. 
\end{proof}

Note that the velocity of each robot is bounded by a saturation function, which will be presented later in Section~\ref{Subsec_controller}. 

\subsection{Meanshift Controller via Mass Feedback}
\label{Subsec_controller}	

To implement the fitting of the swarm's distribution to the given sample points, we design the following decentralized controller:
\begin{align}  
	v_i=\mathrm{sat}\left(v_i^\mathrm{ms}+v_i^\mathrm{cv}\right)
	\label{Equ_controllaw}
\end{align}
where $v_i^\mathrm{ms}$ is the meanshift velocity command, and $v_i^\mathrm{cv}$ is the collision-avoidance velocity command. 
Here, the saturation function $\mathrm{sat}(\cdot)$ is given by
\begin{align}\label{Equ_SatFcn}
	\mathrm{sat}(z) = \begin{cases}
		\dfrac{z}{\|z\|}v_{\mathrm{max}}, & \|z\| > v_{\mathrm{max}}\\
		z, & \text{otherwise}
	\end{cases}, \forall z \in \mathbb{R}^d
\end{align}
where $v_\mathrm{max}>0$ is the maximum speed of the robot. 
The roles of the two velocity commands are illustrated as follows. 

\subsubsection{Meanshift Velocity Command}

The meanshift command $v_i^\mathrm{ms}$ in \eqref{Equ_controllaw} aims to drive the robot swarm to fit the sample-point distribution by decreasing the distribution-similarity error metric $F$. 
To this end, $v_i^\mathrm{ms}$ is designed as 
\begin{align}
	v_i^\mathrm{ms} = - \kappa_1  \nabla_{p_i} F\left(\mathbf{p}\right)
	\label{Equ_mscommand}
\end{align}
with a positive gain $\kappa_1=\sigma_1 n / [2\beta \sum_{k=1}^m (P_k)^{-1}\omega(\|q_k-p_i\|)]$, where $\sigma_1$ is a positive constant. 
Recall that $P_k$ is the mass value of sample point $k$ and $q_k$ is the position of the $k$-th sample point. 
Here, the weight function $\omega(z):\mathbb{R}_{\ge 0} \rightarrow \mathbb{R}_{>0}$ is given by $\omega(z)=e^{-\beta z^2}$ where $\beta>0$ denotes the kernel bandwidth. 
This function is monotonically decreasing.
As a result, the weight $\omega(\|q_k-p_i\|)$ is large when the distance between robot $i$ and sample point $k$ is small. 
As can be seen, $n$, $P_k$, and $F(\mathbf{p})$ are three global quantities and thus the definition of $v_i^\mathrm{ms}$ in \eqref{Equ_mscommand} is global. 
Next, we localize the definition \eqref{Equ_mscommand} from the perspective of the individual robot.

By taking the gradient of $F$ and replacing $P_k$ with $\hat{P}_{k,i}$, the decentralized form of $v_i^\mathrm{ms}$ from robot $i$'s perspective is 
\begin{align}
	v_i^\mathrm{ms} = \dfrac{\frac{\sigma_1}{m}\sum_{k=1}^{m}(\hat{P}_{k,i})^{-1}\omega\left(\left\|q_k-p_i\right\|\right)\left(q_k-p_i\right)}{\sum_{k=1}^{m}(\hat{P}_{k,i})^{-1}\omega\left(\left\|q_k-p_i\right\|\right)} .
	\label{Equ_mscommand_mod}
\end{align}
As seen in \eqref{Equ_mscommand_mod}, $v_i^\mathrm{ms}$ follows a modified meanshift algorithm, which will be specifically explained later.
Let $\psi(\hat{P}_{k,i},\|q_k-p_i\|)=(\hat{P}_{k,i})^{-1}\omega(\|q_k-p_i\|)$ be the weight function of sample point $k$. 
This function is affected by both the density of robots around point $k$ and the distance between robot $i$ and point $k$. 
Specifically, the closer robot $i$ is to point $k$, the higher $\omega(\|q_k-p_i\|)$ is.
Meanwhile, the fewer other robots are around the point $k$, the greater $(\hat{P}_{k,i})^{-1}$ is. 
As a result, more weights are assigned to the sample points that are closer to robot $i$ and have fewer robots around them. 
This encourages the robots to stay as close to the sample points as possible while guaranteeing that the density of robots around each sample point is as consistent as possible. 
Hence, $v_i^\mathrm{ms}$ can reduce the distribution-similarity error metric $F$, as proved later in Theorem~\ref{Thm_CtrlConvergence}.

The meanshift algorithm is a non-parametric iterative technique for mode-seeking and density estimation, originally developed in machine learning and computer vision. 
Its core principle involves shifting cluster centers toward the weighted average of neighboring points in each iteration. 
In multi-robot systems, it is used to design control laws \cite{Sun2025MeanShift}, \cite{Sun2023Assembly}, enabling robots to move toward the weighted average center of reference positions, expressed as $ v_i = (\sum_{k} w_k q_k)/(\sum_{k} w_k) $ with \( w_k > 0 \) representing the weight for point \( q_k \). 
Our approach enhances this framework by integrating our mass distribution model, where mass and robot-point distance influence the weights.

\subsubsection{Collision-Avoidance Velocity Command}

The collision-avoidance command $v_i^\mathrm{cv}$ prevents collisions with neighboring robots while remaining conflict-free with the shape formation command $v_i^\mathrm{ms}$. 
It is specifically defined as
\begin{align}\label{Equ_CollAvdCmd}
	v_i^\mathrm{cv} = \kappa_2 \tilde{v}_i^\mathrm{cv}
\end{align}
where 
\begin{align*}
	\tilde{v}_i^\mathrm{cv} = \sigma_2 \sum_{
		j \in \mathcal{N}_i^{\prime}} \dfrac{r_\mathrm{avoid} - \|p_i - p_j\|}{\|p_i - p_j\| + \varepsilon}(p_i - p_j)
\end{align*}
is a repulsive term, and the self-tuning gain $\kappa_2$ is given by
\begin{align*}
	\kappa_2 = \begin{cases}
		\varphi, & \left( v_i^\mathrm{ms} \right)^\top \tilde{v}_i^\mathrm{cv} \ge 0 \\
		\varphi \cdot \min \left\{ -\dfrac{(1-\varepsilon) \| v_i^\mathrm{ms} \|^2}{\left( v_i^\mathrm{ms} \right)^\top \tilde{v}_i^\mathrm{cv}}, 1\right\}, & \left( v_i^\mathrm{ms} \right)^\top \tilde{v}_i^\mathrm{cv} < 0.
	\end{cases}
\end{align*}
Here, $\sigma_2>0$ and $0<\varepsilon<1$ are constants, $r_\mathrm{avoid} > 0$ is the collision-avoidance range, and $\mathcal{N}_i^{\prime} = \{ j \in \mathcal{N}_i : \|p_i-p_j\| \le r_\mathrm{avoid} \}$ where $r_\mathrm{avoid} \in (0, r_\mathrm{sense})$ is a collision-avoidance range. 
The coefficient $\varphi = \min \{ \|v_i^\mathrm{ms}\|^2 /\varepsilon, 1 \}$ ensures that $\kappa_2$ is locally Lipschitz at $v_i^\mathrm{ms} = 0$. 
The repulsive term $\tilde{v}_i^\mathrm{cv}$ decreases with increasing inter-robot distance and vanishes when the distance is beyond $r_\mathrm{avoid}$.

The gain $\kappa_2$ is designed to prevent $v_i^\mathrm{cv}$ from opposing $v_i^\mathrm{ms}$. 
In the first case, when $( v_i^\mathrm{ms} )^\top \tilde{v}_i^\mathrm{cv} \ge 0$, the two commands are aligned, so $\kappa_2 = \varphi$ and no adjustment is needed. 
In the second case, when $( v_i^\mathrm{ms} )^\top \tilde{v}_i^\mathrm{cv} < 0$, the commands are opposed, and $\kappa_2$ is reduced to satisfy the conflict-free condition:
\begin{align}
	(v_i^\mathrm{ms} + v_i^\mathrm{cv})^\top v_i^\mathrm{ms} \ge \varepsilon \|v_i^\mathrm{ms}\|^2.
	\label{Equ_ConflictFree}
\end{align}
Lemma~\ref{Lem_ConflictFree} proves that this condition holds, which is used in the convergence analysis of Theorem~\ref{Thm_CtrlConvergence}.

\begin{lemma}\label{Lem_ConflictFree}
	The inequality \eqref{Equ_ConflictFree} always holds. 
\end{lemma}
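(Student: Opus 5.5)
We prove \eqref{Equ_ConflictFree} directly from the definition $v_i^\mathrm{cv}=\kappa_2\tilde v_i^\mathrm{cv}$. Expanding the left-hand side gives
\begin{align*}
(v_i^\mathrm{ms}+v_i^\mathrm{cv})^\top v_i^\mathrm{ms} = \|v_i^\mathrm{ms}\|^2 + \kappa_2 (v_i^\mathrm{ms})^\top \tilde v_i^\mathrm{cv}.
\end{align*}
It therefore suffices to show
\begin{align*}
\kappa_2 (v_i^\mathrm{ms})^\top \tilde v_i^\mathrm{cv} \ge -(1-\varepsilon)\|v_i^\mathrm{ms}\|^2 .
\end{align*}
First note that $\varphi=\min\{\|v_i^\mathrm{ms}\|^2/\varepsilon,1\}\in[0,1]$. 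Hence $\kappa_2\ge 0$ in both branches of its definition, because the inner minimum in the second branch is positive whenever $(v_i^\mathrm{ms})^\top\tilde v_i^\mathrm{cv}<0$. We then split into the two cases used in the definition of $\kappa_2$.

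\emph{Case 1:} $(v_i^\mathrm{ms})^\top\tilde v_i^\mathrm{cv}\ge 0$. Here $\kappa_2=\varphi\ge 0$, so the cross term is nonnegative. The left-hand side is then at least $\|v_i^\mathrm{ms}\|^2\ge\varepsilon\|v_i^\mathrm{ms}\|^2$, since $0<\varepsilon<1$.

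\emph{Case 2:} $(v_i^\mathrm{ms})^\top\tilde v_i^\mathrm{cv}<0$. This case forces $v_i^\mathrm{ms}\neq 0$. Set $a=-(1-\varepsilon)\|v_i^\mathrm{ms}\|^2/((v_i^\mathrm{ms})^\top\tilde v_i^\mathrm{cv})>0$, so that $\kappa_2=\varphi\min\{a,1\}$.
\begin{itemize}
\item If $a\le 1$, then $\kappa_2 (v_i^\mathrm{ms})^\top\tilde v_i^\mathrm{cv}=-\varphi(1-\varepsilon)\|v_i^\mathrm{ms}\|^2\ge-(1-\varepsilon)\|v_i^\mathrm{ms}\|^2$, using $\varphi\le1$.
\item If $a>1$, then $\kappa_2=\varphi$. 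Since $(v_i^\mathrm{ms})^\top\tilde v_i^\mathrm{cv}<0$ and $0\le\varphi\le 1$, we get $\varphi(v_i^\mathrm{ms})^\top\tilde v_i^\mathrm{cv}\ge(v_i^\mathrm{ms})^\top\tilde v_i^\mathrm{cv}$. Moreover, $a>1$ is equivalent to $(v_i^\mathrm{ms})^\top\tilde v_i^\mathrm{cv}>-(1-\varepsilon)\|v_i^\mathrm{ms}\|^2$.
\end{itemize}
In both subcases the required bound holds, so summing with $\|v_i^\mathrm{ms}\|^2$ yields \eqref{Equ_ConflictFree}.

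The argument is elementary. The only point needing care is the role of the factor $\varphi$: one must check that it lies in $[0,1]$ and is nonnegative, so that multiplying by it never flips or enlarges a negative cross term. One should also confirm that the degenerate situation $v_i^\mathrm{ms}=0$ causes no division by zero. It falls in Case 1, since then $(v_i^\mathrm{ms})^\top\tilde v_i^\mathrm{cv}=0$ and $\varphi=0$, and \eqref{Equ_ConflictFree} reads $0\ge 0$. Thus the second branch, where the quotient appears, is only ever entered with $v_i^\mathrm{ms}\ne0$.
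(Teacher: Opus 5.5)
Your proof is correct and follows the paper's argument: you reduce \eqref{Equ_ConflictFree} to the bound $\kappa_2 (v_i^\mathrm{ms})^\top\tilde v_i^\mathrm{cv}\ge-(1-\varepsilon)\|v_i^\mathrm{ms}\|^2$ and split on the sign of $(v_i^\mathrm{ms})^\top\tilde v_i^\mathrm{cv}$. The paper handles your two subcases $a\le1$ and $a>1$ in a single inequality, using $\varphi\min\{a,1\}\le a$ together with a negative multiplier; this step silently relies on the fact $\varphi\le 1$ that you state explicitly.
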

\begin{proof}
	The proof is given in the Appendix. 
\end{proof}

The computational complexity of the control strategy for each robot is independent of the total swarm size $n$. 
Specifically, the computational cost per robot per control cycle is $O(m + |\mathcal{N}_i'|)$, derived from three primary operations:
1) Decentralized mass estimation: Each robot exchanges and updates mass estimates for all $m$ sample points with its neighbors.
2) Meanshift command computation: The robot calculates contributions from all $m$ sample points to determine its movement direction.
3) Collision-avoidance command: The robot computes the relative positions of all neighboring robots $j \in \mathcal{N}_i'$. 
Thus, the algorithm scales efficiently as the swarm grows.

\subsection{Convergence Analysis}
\label{Subsec_analysis}

In this section, we present the convergence analysis of the proposed controller \eqref{Equ_controllaw}. 
To this end, we make the following assumptions on the mass estimate and the desired shape.

\begin{assumption}
	It is assumed that each robot has a perfect estimate of the mass distribution, i.e., 
	$\hat{P}_{k,i} = P_k$ for all $i \in \{1,\ldots,n\}$ and $k\in \{1,\ldots,m\}$.
	\label{Asm_Pki=Pk}
\end{assumption}

Assumption~\ref{Asm_Pki=Pk} is satisfied if the local mass estimate $\hat{P}_{k,i}$ of each robot reaches a consensus and converges to the true mass value $P_k$. 
The proposed mass estimator \eqref{Equ_massestimator} can achieve this assumption, which has been proved in Theorem~\ref{Thm_Esti_Converge}. 

\begin{assumption}
	The desired shape is assumed to be convex.
	\label{Asm_ShapeConvex}
\end{assumption}

Assumption~\ref{Asm_ShapeConvex} ensures that the convex hull of all sample points coincides with the desired shape.
Without this assumption, the convergence analysis is still valid but only guarantees that all robots achieve stability (as shown in Theorem~\ref{Thm_CtrlConvergence}) and converge to the convex hull of the sample points.
In fact, it is difficult to analyze the convergence of behavior-based formation methods such as \eqref{Equ_controllaw} \cite{Liu2023Distributed},  \cite{cheah2009region}. 
Our previous work has demonstrated the convergence for forming rectangular shapes \cite{Sun2023Assembly}. 
In the following, we extend this convergence result to arbitrary convex shapes in Theorem~\ref{Thm_Converge2Shape}. 

Now, we are ready to present the main result of our work in Theorem~\ref{Thm_CtrlConvergence}. 
The idea of the proof is to utilize LaSalle's Invariance principle (see Lemma~8) to show that the swarm configuration $\mathbf{p}$ can converge to the set of critical points of $F(\mathbf{p})$, denoted as $	\mathcal{D} = \{ \mathbf{p} \in \mathbb{R}^{dn} : \nabla_\mathbf{p} F(\mathbf{p}) = 0 \}$. 

To apply LaSalle's invariance principle, we first show that the swarm trajectory $\mathbf{p}(t)$ is bounded, as stated in Lemma~\ref{Lem_p_bounded}.

\begin{lemma}\label{Lem_p_bounded}
	Under Assumption~\ref{Asm_Pki=Pk}, by the action of controller \eqref{Equ_controllaw}, the trajectory of the swarm $\mathbf{p}(t)$ starting from any initial state $\mathbf{p}(t_0) \in \mathbb{R}^{dn}$ is bounded for all $t \ge t_0$. 
\end{lemma}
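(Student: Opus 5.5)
The plan is a Lyapunov-type monotonicity argument: show that $F(\mathbf{p}(t))$ is non-increasing along closed-loop trajectories, and that its sublevel sets are bounded. Boundedness of $\mathbf{p}(t)$ then follows at once. By Lemma~\ref{Lem_Smooth}, $F$ is smooth, so $\dot F = \sum_{i=1}^n \nabla_{p_i}F^\top \dot p_i$ is well defined.

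\textbf{Step 1: $F$ is non-increasing.} Under Assumption~\ref{Asm_Pki=Pk}, the decentralized command \eqref{Equ_mscommand_mod} coincides with $v_i^\mathrm{ms}=-\kappa_1\nabla_{p_i}F$, where $\kappa_1>0$. Hence $\nabla_{p_i}F=-\kappa_1^{-1}v_i^\mathrm{ms}$. Write $u_i=v_i^\mathrm{ms}+v_i^\mathrm{cv}$. Then $\mathrm{sat}(u_i)=\lambda_i u_i$ with $\lambda_i=\min\{1,v_\mathrm{max}/\|u_i\|\}\in(0,1]$. Lemma~\ref{Lem_ConflictFree} gives
\begin{align*}
\dot F=-\sum_{i=1}^n \kappa_1^{-1}\lambda_i\,(v_i^\mathrm{ms})^\top u_i\le -\sum_{i=1}^n \kappa_1^{-1}\lambda_i\,\varepsilon\|v_i^\mathrm{ms}\|^2\le 0 .
\end{align*}
Therefore $F(\mathbf{p}(t))\le F(\mathbf{p}(t_0))$ for all $t\ge t_0$. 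A side remark: the right-hand side is locally Lipschitz (the role of $\varphi$), so solutions exist, and the speed bound $v_\mathrm{max}$ rules out finite escape.

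\textbf{Step 2: sublevel sets of $F$ are bounded.} This is the main obstacle. By Lemma~\ref{Lem_uniformity}, $F_\mathrm{uni}\ge 0$, so $F(\mathbf{p}(t))\le F(\mathbf{p}(t_0))$ implies $F_\mathrm{max}(\mathbf{p}(t))\le F(\mathbf{p}(t_0))$. This in turn gives $\sqrt{\sum_k P_k^2}\ge e^{-F(\mathbf{p}(t_0))}=:c>0$. That bounds the swarm only collectively: some robot must remain near the sample points, but individual robots could still drift. So I would use $F_\mathrm{uni}$ more carefully.

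Rewrite $F=-\frac{1}{m}\sum_k\ln P_k-\frac12\ln m$, which follows directly by expanding \eqref{Equ_uniformity}. A bound $F\le C$ then gives $\sum_k\ln P_k\ge -m(C+\frac12\ln m)$. Since each $P_k\le 1$, this yields $P_k\ge \delta:=e^{-m(C+\frac12\ln m)}$ for every $k$. That is still a collective bound: each sample point has some robot within distance $\sqrt{\ln(1/\delta)/\beta}$, but a single robot $i$ far away is not excluded, because it contributes little to every $P_k$.

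\textbf{Step 3: bound each robot individually.} Here I would argue directly from the dynamics rather than from level sets. The meanshift term \eqref{Equ_mscommand_mod} equals $\frac{\sigma_1}{m}(\bar q_i-p_i)$, where $\bar q_i$ is a convex combination of the $q_k$. It therefore points into the convex hull $\mathcal{Q}$ of the sample points. I would consider the function $V=\max_i \|p_i-q_c\|^2$, with $q_c$ any fixed point of $\mathcal{Q}$ and $R$ large enough that $\mathcal{Q}\subset B(q_c,R)$. For the outermost robot $i^\ast$, repulsive contributions from robots $j$ satisfy $(p_{i^\ast}-q_c)^\top(p_{i^\ast}-p_j)\ge 0$ when $\|p_j-q_c\|\le\|p_{i^\ast}-q_c\|$ (a geometric fact about the farthest point). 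Combined with condition \eqref{Equ_ConflictFree}, which says $u_i$ makes an acute angle with $v_i^\mathrm{ms}$, I expect to show that $\frac{d}{dt}\|p_{i^\ast}-q_c\|^2\le 0$ whenever this distance exceeds $R$ plus a margin. If that fails, I would fall back to the cruder argument: $\|\dot p_i\|\le v_\mathrm{max}$, and $v_i^\mathrm{ms}$ has a component toward $\mathcal{Q}$ that dominates once the robot is far away. In either case, $V(t)\le\max\{V(t_0),(R+r_\mathrm{avoid})^2\}$, which bounds $\mathbf{p}(t)$.
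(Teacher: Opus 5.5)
Your Steps 1--2 are a dead end, as you suspect. Sublevel sets of $F$ are unbounded: if one robot is sent to infinity, each $P_k$ tends to the positive contribution of the other $n-1\ge 1$ robots, so $F$ stays bounded. Hence $\dot F\le 0$ cannot confine an individual robot. The argument you then commit to in Step 3 fails exactly where you write ``I expect to show''. For the outermost robot the repulsion does have a nonnegative \emph{outward} radial component, and \eqref{Equ_ConflictFree} does not cancel it. That condition only limits the component of $v_i^\mathrm{cv}$ along $-v_i^\mathrm{ms}$, not along $e=(p_i-q_c)/\|p_i-q_c\|$.

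Concretely, put a single neighbor radially inward at distance $r_\mathrm{avoid}/2$, so $\tilde v_i^\mathrm{cv}=c\,e$ with $c\approx\sigma_2 r_\mathrm{avoid}/2$. Write $v_i^\mathrm{ms}=-a e+b t$ with $t\perp e$, $a>0$, $b\ne 0$; this is generic, since $\bar q_i$ need not lie on the ray from $q_c$ through $p_i$. Suppose $\|v_i^\mathrm{ms}\|^2\ge\varepsilon$ and $c\ge(1-\varepsilon)(a^2+b^2)/a$ (e.g.\ $\sigma_2$ large). Then $\kappa_2 c=(1-\varepsilon)(a^2+b^2)/a$, and the radial component of $u_i$ is $((1-\varepsilon)b^2-\varepsilon a^2)/a$. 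This is positive as soon as $(1-\varepsilon)b^2>\varepsilon a^2$, even though \eqref{Equ_ConflictFree} holds (with equality). So the outermost robot can move outward at distance $R+r_\mathrm{avoid}$, and your final bound $V(t)\le\max\{V(t_0),(R+r_\mathrm{avoid})^2\}$ is false. The margin must involve the gains, because the outward push is limited only through $\kappa_2\le 1$ and $\sigma_2$.

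Your fallback is the right idea and is in fact the paper's proof, but as stated it lacks the estimate that makes it work. The bound $\|\dot p_i\|\le v_\mathrm{max}$ is irrelevant (it only gives linear growth), and ``dominates'' needs a uniform bound on what is being dominated. Two facts are needed:
(i) $\kappa_2\in[0,1]$, and each repulsion term has norm $\frac{(r_\mathrm{avoid}-d)d}{d+\varepsilon}\le r_\mathrm{avoid}$, so $\|v_i^\mathrm{cv}\|\le\sigma_2(n-1)r_\mathrm{avoid}$ for all $\mathbf{p}$;
(ii) since $v_i^\mathrm{ms}=\frac{\sigma_1}{m}(\bar q_i-p_i)$ with $\|\bar q_i-q_c\|\le R$, one has $(p_i-q_c)^\top v_i^\mathrm{ms}\le-\frac{\sigma_1}{m}\|p_i-q_c\|(\|p_i-q_c\|-R)$.

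Because $\mathrm{sat}$ only rescales by $\lambda_i>0$, these give $\frac{d}{dt}\|p_i-q_c\|\le\lambda_i(\sigma_2(n-1)r_\mathrm{avoid}-\frac{\sigma_1}{m}(\|p_i-q_c\|-R))<0$ whenever $\|p_i-q_c\|>R+\sigma_2 m(n-1)r_\mathrm{avoid}/\sigma_1$. This holds for each robot separately, with no need for $F$ or the farthest-robot device. The usual first-exit argument then bounds each $p_i$. The paper does exactly this, centered at the origin and using a similar-triangles estimate in place of (ii), and obtains the threshold $L=2R+\sigma_2 m(n-1)r_\mathrm{avoid}/\sigma_1$. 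Your centered inner-product inequality (ii) is a slightly cleaner route to the same linear inward drift.
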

\begin{proof}
	The proof is given in the Appendix.
\end{proof}

Then, we show that $v_i$ is locally Lipschitz in $\mathbf{p}$ by Lemma~\ref{Lemma_Vi_Lipschitz}. 
For this purpose, we first demonstrate that $\mathrm{sat}(\cdot)$, $v_i^\mathrm{ms}$ and $v_i^\mathrm{cv}$ are locally Lipschitz by Lemmas~\ref{Lem_MS_Lipschitz} and \ref{Lem_CV_Lipschitz}.

\begin{lemma}\label{Lem_MS_Lipschitz}
	Under Assumption~\ref{Asm_Pki=Pk}, the command $v_i^\mathrm{ms}$ defined in \eqref{Equ_mscommand} is locally Lipschitz in $\mathbf{p} \in \mathbb{R}^{dn}$ for all $i\in \{1, \ldots ,n \}$. 
\end{lemma}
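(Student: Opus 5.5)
Under Assumption~\ref{Asm_Pki=Pk}, the command \eqref{Equ_mscommand} coincides with the explicit form \eqref{Equ_mscommand_mod} with $\hat P_{k,i}$ replaced by $P_k(\mathbf{p})$. We therefore write
\begin{align*}
	v_i^\mathrm{ms}(\mathbf{p}) = \frac{\sigma_1}{m}\,\frac{N_i(\mathbf{p})}{D_i(\mathbf{p})}, \quad N_i=\sum_{k=1}^m P_k^{-1}\omega(\|q_k-p_i\|)(q_k-p_i), \quad D_i=\sum_{k=1}^m P_k^{-1}\omega(\|q_k-p_i\|).
\end{align*}
The plan is to show that $N_i$ and $D_i$ are continuously differentiable, indeed smooth, on all of $\mathbb{R}^{dn}$, and that $D_i>0$ everywhere. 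Then $N_i/D_i$ is $C^1$ on $\mathbb{R}^{dn}$. Any $C^1$ map is locally Lipschitz: on a compact convex neighborhood its derivative is bounded, and the mean value inequality gives the Lipschitz bound.

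First, each $P_k(\mathbf{p})=\frac1n\sum_j e^{-\beta\|q_k-p_j\|^2}$ is a finite sum of compositions of the polynomial $\|q_k-p_j\|^2$ with the exponential, so it is smooth. Moreover $P_k(\mathbf{p})>0$ for every $\mathbf{p}$, since each summand is strictly positive. Hence $P_k^{-1}$ is smooth on $\mathbb{R}^{dn}$, being a composition of $x\mapsto 1/x$ (smooth on $(0,\infty)$) with a smooth positive function. Next, $\omega(\|q_k-p_i\|)=e^{-\beta\|q_k-p_i\|^2}$ depends on $\|\cdot\|^2$ rather than $\|\cdot\|$, so it is smooth with no nondifferentiability at $p_i=q_k$. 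Also $q_k-p_i$ is affine. Products and finite sums of smooth functions are smooth, so $N_i$ and $D_i$ are smooth. Finally, $D_i>0$ because each term is a product of strictly positive quantities. So the quotient is smooth and in particular locally Lipschitz.

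Alternatively one can observe $v_i^\mathrm{ms}=-\kappa_1\nabla_{p_i}F$. By Lemma~\ref{Lem_Smooth}, $\nabla_{p_i}F$ is smooth, and $\kappa_1=\sigma_1 n/(2\beta D_i)$ is smooth since $D_i>0$. The product of smooth functions is smooth, hence locally Lipschitz. I would present this route as the main argument because it reuses Lemma~\ref{Lem_Smooth} directly, and keep the explicit-quotient computation as a check that the formulas agree.

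The only delicate point is making sure no denominator vanishes and no nonsmooth operation such as $\|\cdot\|$ appears unsquared. Both issues are settled by the strict positivity of the Gaussian kernel and the fact that $\omega$ enters only through $\|q_k-p_i\|^2$. No global Lipschitz constant is claimed. Globally the bound can degrade because $P_k$ can become arbitrarily small far away, which is why the result is only local.
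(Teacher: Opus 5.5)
Your main route---writing $v_i^\mathrm{ms}=-\kappa_1\nabla_{p_i}F$ with $\kappa_1=\sigma_1 n/(2\beta D_i)$, taking smoothness of $\nabla_{p_i}F$ from Lemma~\ref{Lem_Smooth}, using $P_k>0$ (hence $D_i>0$) to get a smooth quotient, and concluding local Lipschitzness from smoothness---is exactly the paper's proof, and your explicit-quotient computation is a correct, slightly more self-contained check of the same fact. Your closing remark on global Lipschitzness is unnecessary, and its stated reason is off: $P_k\ge \frac{1}{n}e^{-\beta\|q_k-p_i\|^2}$ gives $P_k^{-1}\omega(\|q_k-p_i\|)\le n$, so small $P_k$ is not by itself the obstruction; this does not affect the proof.
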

\begin{proof}
	The proof is given in the Appendix.
\end{proof}

\begin{lemma}\label{Lem_CV_Lipschitz}
	The collision-avoidance command $v_i^\mathrm{cv}$ defined in \eqref{Equ_CollAvdCmd} is locally Lipschitz in $\mathbf{p} \in \mathbb{R}^{dn}$ for all $i\in \{1, \ldots ,n \}$.
\end{lemma}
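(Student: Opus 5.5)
The plan is to write $v_i^\mathrm{cv}=\kappa_2\tilde v_i^\mathrm{cv}$ and prove local Lipschitzness of the two factors separately. Since products, sums, minima and compositions of locally Lipschitz functions are locally Lipschitz, this is enough. We already have Lemma~\ref{Lem_MS_Lipschitz}: under Assumption~\ref{Asm_Pki=Pk}, $v_i^\mathrm{ms}$ is locally Lipschitz in $\mathbf{p}$ (in fact smooth, being a smooth ratio with positive denominator), and so is $\|v_i^\mathrm{ms}\|^2$.

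\textbf{The repulsive term.} Write $\tilde v_i^\mathrm{cv}=\sigma_2\sum_{j\ne i} g(p_i-p_j)$, with
\[
g(x)=\frac{\max\{r_\mathrm{avoid}-\|x\|,0\}}{\|x\|+\varepsilon}\,x .
\]
The maximum makes terms with $j\notin\mathcal{N}_i'$ vanish, so summing over all $j\ne i$ is legitimate. The function $x\mapsto \max\{r_\mathrm{avoid}-\|x\|,0\}$ is globally Lipschitz. The denominator $\|x\|+\varepsilon\ge\varepsilon>0$, so $1/(\|x\|+\varepsilon)$ is Lipschitz. The factor $x$ is Lipschitz. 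On bounded sets all three are bounded, so their product $g$ is locally Lipschitz, including at $x=0$ and across the switching sphere $\|x\|=r_\mathrm{avoid}$. Composing with the linear maps $\mathbf{p}\mapsto p_i-p_j$ shows $\tilde v_i^\mathrm{cv}$ is locally Lipschitz.

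\textbf{The self-tuning gain.} This is the main obstacle. Set $a=\|v_i^\mathrm{ms}\|^2$ and $b=(v_i^\mathrm{ms})^\top\tilde v_i^\mathrm{cv}$; both are locally Lipschitz in $\mathbf{p}$. The gain can then be written in a single formula:
\[
\kappa_2=\varphi\cdot\min\Bigl\{\tfrac{(1-\varepsilon)a}{\max\{-b,0\}},\,1\Bigr\},
\]
with the convention that the ratio is $+\infty$ when $b\ge0$. Equivalently, $\kappa_2=\varphi\,h(a,b)$ with
\[
h(a,b)=\frac{(1-\varepsilon)a}{\max\{(1-\varepsilon)a,\,-b\}}.
\]
This expression is valid whenever $a>0$. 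On any compact set where $a$ stays away from $0$, $h$ is a ratio of Lipschitz functions whose denominator is at least $(1-\varepsilon)a$, hence bounded below, so $\kappa_2$ is locally Lipschitz there.

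\textbf{Near the degenerate set $a=0$.} Here $h\in[0,1]$ can be discontinuous, so the product $\kappa_2\tilde v_i^\mathrm{cv}=\varphi\,h\,\tilde v_i^\mathrm{cv}$ must be handled directly, using the factor $\varphi=\min\{a/\varepsilon,1\}\le a/\varepsilon$. Fix $\mathbf{p}^*$ with $v_i^\mathrm{ms}(\mathbf{p}^*)=0$. Then for $\mathbf{p}$ near $\mathbf{p}^*$,
\[
\|v_i^\mathrm{cv}(\mathbf{p})-v_i^\mathrm{cv}(\mathbf{p}^*)\|=\|v_i^\mathrm{cv}(\mathbf{p})\|\le \frac{a}{\varepsilon}\,\|\tilde v_i^\mathrm{cv}\|\le C\|v_i^\mathrm{ms}(\mathbf{p})\|^2\le C'\|\mathbf{p}-\mathbf{p}^*\|^2 .
\]
This already gives the Lipschitz estimate at such points.

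For two nearby points $\mathbf{p},\mathbf{p}'$ with $a,a'>0$ small, the gradient of $\varphi h$ needs to be bounded. The plan is to show that the derivative of $h$ has size $O(1/a)\cdot(|\delta a|+|\delta b|)$. The bound $|b|\le \|v_i^\mathrm{ms}\|\,\|\tilde v_i^\mathrm{cv}\|=O(\sqrt a)$ should control the $b$-variation, and $\delta b$ is itself of order $\|\delta v_i^\mathrm{ms}\|+\sqrt a\,\|\delta\mathbf{p}\|$. Multiplying by $\varphi\le a/\varepsilon$ then cancels the $1/a$ singularity, so $\varphi h$ is Lipschitz on a neighbourhood with a uniform constant. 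The paper's own remark that ``$\varphi$ ensures $\kappa_2$ is locally Lipschitz at $v_i^\mathrm{ms}=0$'' supports exactly this mechanism.

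Combining the estimates for $\tilde v_i^\mathrm{cv}$ and for $\kappa_2\tilde v_i^\mathrm{cv}$ away from and near $a=0$ yields local Lipschitzness of $v_i^\mathrm{cv}$ on all of $\mathbb{R}^{dn}$.
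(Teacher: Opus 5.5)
Your proposal is correct and follows essentially the paper's route. It uses the same factorization $v_i^\mathrm{cv}=\kappa_2\tilde v_i^\mathrm{cv}$ and the same treatment of the repulsive term. Near $v_i^\mathrm{ms}=0$ it uses the same mechanism: $\varphi\le\|v_i^\mathrm{ms}\|^2/\varepsilon$ cancels the $O(1/\|v_i^\mathrm{ms}\|^2)$ growth of the derivative of the second factor of $\kappa_2$. Your single formula $h=(1-\varepsilon)a/\max\{(1-\varepsilon)a,-b\}$ neatly replaces the paper's three-case analysis for $v_i^\mathrm{ms}\neq 0$.

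Two details need tightening. First, the bound $|\partial_b h|<1/((1-\varepsilon)a)$ comes from $-b>(1-\varepsilon)a$ on the branch where $h<1$ (the paper's $|\xi^\top\zeta|>(1-\varepsilon)\|\xi\|^2$), not from $|b|=O(\sqrt a)$. Second, you should make explicit how this derivative bound gives one Lipschitz constant on a whole neighbourhood; the paper does this with its measure-zero segment argument in $(\xi,\zeta)$-space. For instance, if the segment joining $\mathbf{p}_1,\mathbf{p}_2$ contains a zero $\mathbf{p}^*$ of $v_i^\mathrm{ms}$, your quadratic estimate at $\mathbf{p}^*$ already bounds the difference by $2C'\|\mathbf{p}_1-\mathbf{p}_2\|^2$. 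Otherwise, the uniform local bound integrates along a segment on which $a>0$.
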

\begin{proof}
	The proof is given in the Appendix.
\end{proof}

\begin{lemma}\label{Lemma_Vi_Lipschitz}
	Under Assumption~\ref{Asm_Pki=Pk}, the command $v_i$ defined in \eqref{Equ_controllaw} is locally Lipschitz continuous with respect to $\mathbf{p} \in \mathbb{R}^{dn}$ for all $i\in \{1, \ldots ,n \}$.
\end{lemma}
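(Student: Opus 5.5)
The plan is to write $v_i$ as a composition and a sum of locally Lipschitz maps, and then use that such maps are closed under addition and composition. By Lemmas~\ref{Lem_MS_Lipschitz} and \ref{Lem_CV_Lipschitz}, under Assumption~\ref{Asm_Pki=Pk} both $v_i^\mathrm{ms}$ and $v_i^\mathrm{cv}$ are locally Lipschitz in $\mathbf{p}$. Their sum $u_i(\mathbf{p}) = v_i^\mathrm{ms}(\mathbf{p}) + v_i^\mathrm{cv}(\mathbf{p})$ is therefore locally Lipschitz: on any compact neighborhood we can take the sum of the two local constants. It remains to show that $\mathrm{sat}(\cdot)$ in \eqref{Equ_SatFcn} is globally Lipschitz on $\mathbb{R}^d$. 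Then $v_i = \mathrm{sat}\circ u_i$ is locally Lipschitz, with local constant equal to the product of the two constants on the relevant neighborhood.

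For the saturation, note that $\mathrm{sat}(z)$ is exactly the Euclidean projection of $z$ onto the closed ball $\mathcal{B} = \{ z \in \mathbb{R}^d : \|z\| \le v_\mathrm{max}\}$. Indeed, for $\|z\|>v_\mathrm{max}$ the closest point of $\mathcal{B}$ is $v_\mathrm{max} z/\|z\|$, and for $\|z\| \le v_\mathrm{max}$ it is $z$ itself. Projection onto a nonempty closed convex set is nonexpansive, so $\|\mathrm{sat}(z_1)-\mathrm{sat}(z_2)\| \le \|z_1-z_2\|$ for all $z_1,z_2$, and the Lipschitz constant is $1$.

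If a self-contained argument is preferred, I would check the three cases directly: both points inside $\mathcal{B}$ (trivial); both outside, where radial projection onto a sphere of radius $v_\mathrm{max}$ is nonexpansive for points of norm at least $v_\mathrm{max}$; and one inside, one outside, handled with the variational inequality $(z - \Pi(z))^\top(y-\Pi(z)) \le 0$ for $y\in\mathcal{B}$.

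Finally, the composition: fix $\mathbf{p}_0$ and a neighborhood $U$ on which $u_i$ has Lipschitz constant $L_U$. For $\mathbf{p},\mathbf{p}'\in U$,
\begin{align*}
\|v_i(\mathbf{p})-v_i(\mathbf{p}')\| \le \|u_i(\mathbf{p})-u_i(\mathbf{p}')\| \le L_U\|\mathbf{p}-\mathbf{p}'\|,
\end{align*}
which proves the claim.

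There is no real obstacle here, since all the difficult work sits in Lemmas~\ref{Lem_MS_Lipschitz} and \ref{Lem_CV_Lipschitz}. That includes the nonsmooth gain $\kappa_2$, the switching of the neighbor set $\mathcal{N}_i'$, and the positivity of $P_k$, which keeps $(P_k)^{-1}$ smooth. The only point requiring care is the nonexpansiveness of $\mathrm{sat}$, which the projection argument settles.
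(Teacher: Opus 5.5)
Your proposal is correct and follows essentially the same route as the paper. Both arguments invoke Lemmas~\ref{Lem_MS_Lipschitz} and \ref{Lem_CV_Lipschitz} for the two commands, identify $\mathrm{sat}(\cdot)$ as the Euclidean projection onto the closed ball of radius $v_\mathrm{max}$ (hence nonexpansive, with Lipschitz constant $1$), and conclude that the composition $\mathrm{sat}(v_i^\mathrm{ms}+v_i^\mathrm{cv})$ is locally Lipschitz in $\mathbf{p}$.
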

\begin{proof}
	The proof is given in the Appendix.
\end{proof}

With the above preparation, the convergence of the decentralized controller \eqref{Equ_controllaw} is analyzed in the following theorems. 

\begin{theorem}\label{Thm_CtrlConvergence}
	Under Assumption~\ref{Asm_Pki=Pk}, by the action of \eqref{Equ_controllaw}, $\mathbf{p}(t)$ converges to $\mathcal{D}$ as $t \rightarrow \infty$ from any initial state $\mathbf{p}(t_0) \in \mathbb{R}^{dn}$.
\end{theorem}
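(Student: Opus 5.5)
We take $V(\mathbf{p})=F(\mathbf{p})$ as a Lyapunov-like function and apply LaSalle's invariance principle. Under Assumption~\ref{Asm_Pki=Pk}, the decentralized form \eqref{Equ_mscommand_mod} coincides with the global form \eqref{Equ_mscommand}, so $v_i^\mathrm{ms}=-\kappa_1\nabla_{p_i}F$ with $\kappa_1>0$.

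The steps are as follows.

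\textbf{Well-posedness and compactness.} By Lemma~\ref{Lemma_Vi_Lipschitz}, the closed-loop vector field $\dot{\mathbf{p}}=[v_1^\top,\ldots,v_n^\top]^\top$ is locally Lipschitz, so solutions exist and are unique. By Lemma~\ref{Lem_p_bounded}, every trajectory stays in a bounded set. Hence the closure $\Omega$ of the trajectory is compact and positively invariant in the sense LaSalle requires, and the $\omega$-limit set is nonempty, compact and invariant. By Lemma~\ref{Lem_Smooth}, $F$ is $C^1$ on $\Omega$.

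\textbf{Sign of $\dot F$.} We compute
\[
\dot F=\sum_{i=1}^n \nabla_{p_i}F^\top v_i=-\sum_{i=1}^n \kappa_1^{-1}(v_i^\mathrm{ms})^\top \mathrm{sat}(v_i^\mathrm{ms}+v_i^\mathrm{cv}).
\]
Write $u_i=v_i^\mathrm{ms}+v_i^\mathrm{cv}$. The saturation only rescales a vector by the factor $s_i=\min\{1,v_\mathrm{max}/\|u_i\|\}\in(0,1]$. Therefore $(v_i^\mathrm{ms})^\top\mathrm{sat}(u_i)=s_i (v_i^\mathrm{ms})^\top u_i\ge s_i\varepsilon\|v_i^\mathrm{ms}\|^2$ by Lemma~\ref{Lem_ConflictFree}. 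It follows that
\[
\dot F\le-\varepsilon\sum_i s_i\kappa_1^{-1}\|v_i^\mathrm{ms}\|^2 = -\varepsilon\sum_i s_i\kappa_1\|\nabla_{p_i}F\|^2\le 0.
\]

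\textbf{Identifying the largest invariant set.} On the compact set $\Omega$, both $\kappa_1$ and $s_i$ are bounded away from zero:
\begin{itemize}
\item $\kappa_1$ is positive and continuous, because $P_k>0$ and $\omega>0$;
\item $\|u_i\|$ is bounded, because $v_i^\mathrm{ms}$ and $v_i^\mathrm{cv}$ are continuous, so $s_i$ has a positive lower bound.
\end{itemize}
Hence $\dot F=0$ forces $\nabla_{p_i}F=0$ for every $i$, i.e.\ $\mathbf{p}\in\mathcal D$. Conversely, at points of $\mathcal D$ we have $v_i^\mathrm{ms}=0$, so $\varphi=0$, hence $\kappa_2=0$ and $v_i=0$. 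Points of $\mathcal D$ are therefore equilibria, and $\mathcal D$ itself is invariant. LaSalle's principle (Lemma~8) then gives that $\mathbf{p}(t)$ converges to the largest invariant set contained in $\{\dot F=0\}\cap\Omega\subseteq\mathcal D$.

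\textbf{Main obstacle.} The delicate point is the $\dot F$ bound in the presence of both saturation and the self-tuning collision gain. One must check that saturation preserves the conflict-free inequality; it does, since saturation is a positive scaling. One must also check that $\dot F=0$ genuinely implies $\nabla F=0$, rather than the merely degenerate case $s_i\to0$. The compactness from Lemma~\ref{Lem_p_bounded} is what rules out the degenerate case.

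A minor subtlety is that $F$ is not a classical positive-definite Lyapunov function. However, LaSalle's principle only requires $\dot F\le0$ on a compact positively invariant set, and $F$ is bounded below there by continuity, so the argument goes through.
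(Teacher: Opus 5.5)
Your proposal is correct and follows essentially the same route as the paper: treat the saturation as a positive scaling $\Gamma$, use Lemma~\ref{Lem_ConflictFree} to get $\dot F \le -\varepsilon\sum_i \Gamma\,\kappa_1\|\nabla_{p_i}F\|^2$, identify $\{\dot F=0\}$ with $\mathcal{D}$, and conclude by LaSalle using Lemmas~\ref{Lem_p_bounded}, \ref{Lemma_Vi_Lipschitz} and \ref{Lem_Smooth}. Taking $\Omega$ as the closure of the orbit is slightly tidier than the paper's norm ball, and noting that points of $\mathcal{D}$ are equilibria is harmless; the uniform lower bounds on $s_i$ and $\kappa_1$ are unnecessary, since pointwise positivity already gives $\dot F=0\Rightarrow\nabla F=0$.
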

\begin{proof}
	Define 
	\begin{align*}
		\Gamma(u) = \begin{cases}
			\dfrac{v_\mathrm{max}}{u}, & u \ge v_\mathrm{max}\\
			1, & 0 \le u < v_\mathrm{max}
		\end{cases}
	\end{align*}
	for $u \ge 0$. 
	Then $v_i$ defined in \eqref{Equ_controllaw} can be reformulated as 
	\begin{align}
		v_i = \Gamma(\| v_i^\mathrm{ms} + v_i^\mathrm{cv}\|)(v_i^\mathrm{ms} + v_i^\mathrm{cv}). 
		\label{Equ_CtrlLaw_mod}
	\end{align}
	Recall that each robot knows the actual mass distribution, i.e., $\hat{P}_{k,i} = P_k$ by Assumption~\ref{Asm_Pki=Pk}. 
	Thus, the decentralized meanshift command $v_i^\mathrm{ms}$ can be formulated in the centralized form \eqref{Equ_mscommand}, i.e., $v_i^\mathrm{ms} = -\kappa_1(p_i, \mathbf{P}) \nabla_{p_i} F(\mathbf{p})$ where $\kappa_1(p_i, \mathbf{P})$ is a positive self-tuning coefficient determined by $p_i$ and $\mathbf{P}$.    
	
	Substituting \eqref{Equ_CtrlLaw_mod} into the time derivative of $F$ yields
	\begin{align}
		\label{Equ_DotFNegtive}
		\begin{split}
			\dot{F} &= \sum_{i=1}^n \nabla_{p_i}F^\top \Gamma(\| v_i^\mathrm{ms} + v_i^\mathrm{cv}\|)\left(v_i^\mathrm{ms} + v_i^\mathrm{cv} \right)  \\
			&= - \sum_{i=1}^n \Gamma(\left\| v_i^\mathrm{ms} + v_i^\mathrm{cv} \right\|) \dfrac{ (v_i^\mathrm{ms})^\top}{\kappa_1 \left(p_i,\mathbf{P} \right)} (v_i^\mathrm{ms} + v_i^\mathrm{cv})  \\
			&\le  - \sum_{i=1}^n \Gamma(\| v_i^\mathrm{ms} + v_i^\mathrm{cv}\|)  \dfrac{\varepsilon \|v_i^\mathrm{ms}\|^2}{ \kappa_1 \left(p_i,\mathbf{P} \right) }\\
			&= - \varepsilon \sum_{i=1}^n \Gamma(\| v_i^\mathrm{ms} + v_i^\mathrm{cv}\|)\kappa_1 \left(p_i, \mathbf{P} \right) \|\nabla_{p_i}F\|^2 
		\end{split}	        
	\end{align}
	where the inequality is derived in Lemma~\ref{Lem_ConflictFree}. 
	Since $\Gamma, \kappa_1 > 0$, \eqref{Equ_DotFNegtive} implies
	\begin{align*}
		\dot{F}(\mathbf{p}) = 0 
		\;\Leftrightarrow\; \nabla_{p_i} F(\mathbf{p}) = 0 \; \forall i 
		\;\Leftrightarrow\; \nabla F(\mathbf{p}) = 0 
		\;\Leftrightarrow\; \mathbf{p} \in \mathcal{D}.
	\end{align*}
	As a result, $\mathcal{D} = \{ \mathbf{p} \in \mathbb{R}^{dn} : \dot{F}(\mathbf{p}) = 0 \}$.
	
	Next, we apply LaSalle's invariance principle (Lemma~\ref{Lem_LasInvPrin}) by considering $F$ as the Lyapunov function candidate. 
	The system dynamic is $\dot{\mathbf{p}} = f(\mathbf{p})$, where $f(\mathbf{p}) = [v_1^\top, \ldots, v_n^\top]^\top$ is locally Lipschitz in $\mathbf{p}$ since each $v_i$ is locally Lipschitz in $\mathbf{p}$ by Lemma~\ref{Lemma_Vi_Lipschitz}. 
	Let $\Omega = \{ \mathbf{p} \in \mathbb{R}^{dn} : \|\mathbf{p}\| \leq M \}$ be the compact positively invariant set, where $M$ bounds the swarm trajectory $\mathbf{p}$ for $i\in \{1,\ldots,n\}$ by Lemma~\ref{Lem_p_bounded}. 
	The Lyapunov function candidate $F$ is differentiable by Lemma~\ref{Lem_Smooth} and satisfies $\dot{F}\le 0$ via \eqref{Equ_DotFNegtive}. 
	Thus, by Lemma~\ref{Lem_LasInvPrin}, $\mathbf{p}$ converges to the largest invariant subset in $\{ \mathbf{p} \in \mathbb{R}^{dn} : \dot{F}(\mathbf{p}) = 0 \}$, so $\mathbf{p} \rightarrow \mathcal{D}$.
\end{proof}

\begin{theorem}
	\label{Thm_Converge2Shape}
	Under Assumption~\ref{Asm_Pki=Pk} and \ref{Asm_ShapeConvex}, by the action of the controller \eqref{Equ_controllaw}, all robots converge to the desired shape.
\end{theorem}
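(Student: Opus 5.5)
By Theorem~\ref{Thm_CtrlConvergence}, $\mathbf{p}(t)$ converges to $\mathcal{D}$, the set where $\nabla_{p_i}F(\mathbf{p})=0$ for every $i$. The plan is therefore to characterize the critical points robot by robot and show that each robot in such a configuration lies in the convex hull of the sample points. Assumption~\ref{Asm_ShapeConvex} then identifies this hull with the desired shape.

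\textbf{Step 1 (critical points are weighted averages).} Use the gradient formula preceding Lemma~\ref{Lem_Smooth}:
\begin{align*}
\nabla_{p_i}F(\mathbf{p})=\frac{2\beta}{mn}\sum_{k=1}^m P_k^{-1}e^{-\beta\|p_i-q_k\|^2}(p_i-q_k).
\end{align*}
Set $w_{k,i}=P_k^{-1}\omega(\|q_k-p_i\|)$. Each $P_k$ is strictly positive, since it is a sum of Gaussians, so every $w_{k,i}>0$. Hence $\nabla_{p_i}F=0$ holds if and only if
\begin{align*}
p_i=\frac{\sum_{k=1}^m w_{k,i}q_k}{\sum_{k=1}^m w_{k,i}}.
\end{align*}
This is a strict convex combination of the sample points, so $p_i\in\mathrm{conv}\{q_1,\ldots,q_m\}$. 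Equivalently, $v_i^\mathrm{ms}$ in \eqref{Equ_mscommand_mod} vanishes exactly at such points. Consequently $\mathcal{D}\subseteq \mathcal{C}^n$, where $\mathcal{C}=\mathrm{conv}\{q_k\}$.

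\textbf{Step 2 (passing to the limit).} The trajectory is bounded by Lemma~\ref{Lem_p_bounded}, and it converges to $\mathcal{D}$, meaning $\mathrm{dist}(\mathbf{p}(t),\mathcal{D})\to 0$. Since $\mathcal{D}\subseteq\mathcal{C}^n$, it follows that $\mathrm{dist}(p_i(t),\mathcal{C})\to0$ for each $i$. The set $\mathcal{C}$ is compact, so this is genuine convergence to a closed target set. I would phrase this through the $\omega$-limit set from the LaSalle argument: it is nonempty, lies in $\mathcal{D}$, and is therefore contained in $\mathcal{C}^n$.

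\textbf{Step 3 (convexity gives the shape).} The sample points are a uniform discretization of the desired shape $S$, so $q_k\in S$. If $S$ is convex, then $\mathcal{C}\subseteq S$. Assumption~\ref{Asm_ShapeConvex} is stated in the paper as the statement that $\mathcal{C}$ coincides with $S$, so each robot converges to $S$.

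\textbf{Main obstacle.} The step I expect to need care is Step 2. Convergence to $\mathcal{D}$ in the LaSalle sense gives only distance-to-set convergence, not convergence to a single equilibrium. This suffices here because $\mathcal{D}\subseteq\mathcal{C}^n$ is enough.

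A second point to check is whether the collision term could create equilibria outside $\mathcal{C}$. The analysis of Theorem~\ref{Thm_CtrlConvergence} already rules this out. By Lemma~\ref{Lem_ConflictFree}, $\dot F=0$ forces $\nabla_{p_i}F=0$ for every $i$, and then $\varphi=0$, so $v_i^\mathrm{cv}=0$. Thus no spurious equilibria arise from the repulsive term.

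Finally, I would remark that "converge to the desired shape" here means convergence into the region $S$, not exact uniform coverage of it.
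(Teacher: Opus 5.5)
Your proposal is correct and follows the paper's proof. Both arguments write each $p_i$ at a critical point, where $v_i^{\mathrm{ms}}=0$ (equivalently $\nabla_{p_i}F=0$), as a strictly positive weighted average of the $q_k$, and then use convexity of the desired shape to place $p_i$ inside it; your Step 2 ($\mathrm{dist}(\mathbf{p}(t),\mathcal{D})\to 0$ together with $\mathcal{D}\subseteq\mathcal{C}^n$) makes explicit the limiting argument that the paper leaves implicit.
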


\begin{proof}
	Theorem~\ref{Thm_CtrlConvergence} suggests that $\mathbf{p}$ converges to $\mathcal{D} = \{ \mathbf{p} \in \mathbb{R}^{dn} : \nabla_\mathbf{p} F(\mathbf{p}) = 0 \}$. 
	By \eqref{Equ_mscommand}, $\nabla_\mathbf{p} F(\mathbf{p}) = 0$ implies $v_i^\mathrm{ms} = 0$, which leads to 
	\begin{align}
		p_i = \dfrac{\sum_{k=1}^m \alpha_k q_k }{\sum_{k=1}^m \alpha_k} 
		\label{Equ_PiConvex}
	\end{align}  
	where $\alpha_k = (P_k)^{-1} e^{-\beta \| p_i - q_k \|^2} > 0$.  
	Equation \eqref{Equ_PiConvex} demonstrates that $p_i$ is a convex combination of the sample points.  
	Note that all sample points lie within the desired shape and, by Assumption~\ref{Asm_ShapeConvex}, the desired shape is convex. 
	As a result, the convex combination of the sample points must be contained in the desired shape. 
	Therefore, $p_i$ is contained in the desired shape, implying that all robots converge to the desired shape. 
\end{proof}

\section{Performance Evaluation}
\label{Sec_evaluation}

In this section, we evaluate the performance of the proposed strategy in both simulations and experiments. 
First, we introduce the performance metrics that we consider. 
Then, we present the simulation results to analyze the performance of the proposed strategy.
Finally, we present experimental results to verify the feasibility of the proposed method in practical applications.

\subsection{Performance Metrics}
\label{Subsec_metric}

The first metric, mass estimation error, evaluates the accuracy of the mass estimator. 
The metric is defined as $E_{\mathrm{est}} = \max_{1 \le i \le n,1 \le k \le m} | \hat{P}_{k,i} - P_k |$, where $\hat{P}_{k,i}$ is the mass estimate and $P_k$ is the actual mass. 
As can be seen,	$E_{\mathrm{est}} = 0$ means that all robots achieve perfect mass estimates for each sample point, i.e., $\hat{P}_{k,i} = P_k$ for all $i \in \{1,\ldots,n\}$ and $k \in \{1,\ldots,m\}$. 
Otherwise, $E_{\mathrm{est}} > 0$. 

The second metric, distribution uniformity, quantifies the spatial uniformity of the robot swarm. 
It is defined as $M_{\mathrm{uni}} = \sum_{i=1}^{n} ( r_{\min,i} - \bar{r}_{\min} )^2$, where $r_{\min,i} = \min_{j \in \mathcal{N}_i} \lVert p_i - p_j \rVert$ denotes the minimum distance from robot $i$ to its neighbors, and $\bar{r}_{\min} = \frac{1}{n} \sum_{i=1}^{n} r_{\min,i}$ is the average minimum distance across all robots. 
Perfect uniformity occurs when all robots have the same minimum neighbor distance, resulting in $M_{\mathrm{uni}} = 0$. 
Otherwise, $M_{\mathrm{uni}} > 0$. 

The third metric, coverage rate, evaluates the proportion of the desired shape area covered by the robots. 
The coverage region for each robot is defined as a circle of radius $r_{\mathrm{cover}}$ centered at its position, where $r_{\mathrm{cover}}$ is the coverage radius set as $r_{\mathrm{cover}} = \sqrt{3 S_\mathrm{shape} /(2n\pi)}$. 
Then the metric is defined as $M_\mathrm{cover} = S_\mathrm{cover}/S_\mathrm{shape} \times 100\%$, where $S_\mathrm{shape}$ is the area of the desired shape and $S_\mathrm{cover}$ is the area covered by the union of all robot coverage regions within the shape. 
Full coverage yields $M_\mathrm{cover} = 100\%$ and no coverage results in $M_\mathrm{cover} = 0$. 

The fourth metric, convergence time, denoted as $T_\mathrm{conv}$, is defined as the time it takes for all robots to enter the desired shape. 
A shorter convergence time indicates higher method efficiency.

\subsection{Simulation Results}
\label{Subsec_simulation}

\begin{figure}[!t]  
	\centering
	\includegraphics[width=0.9\linewidth]{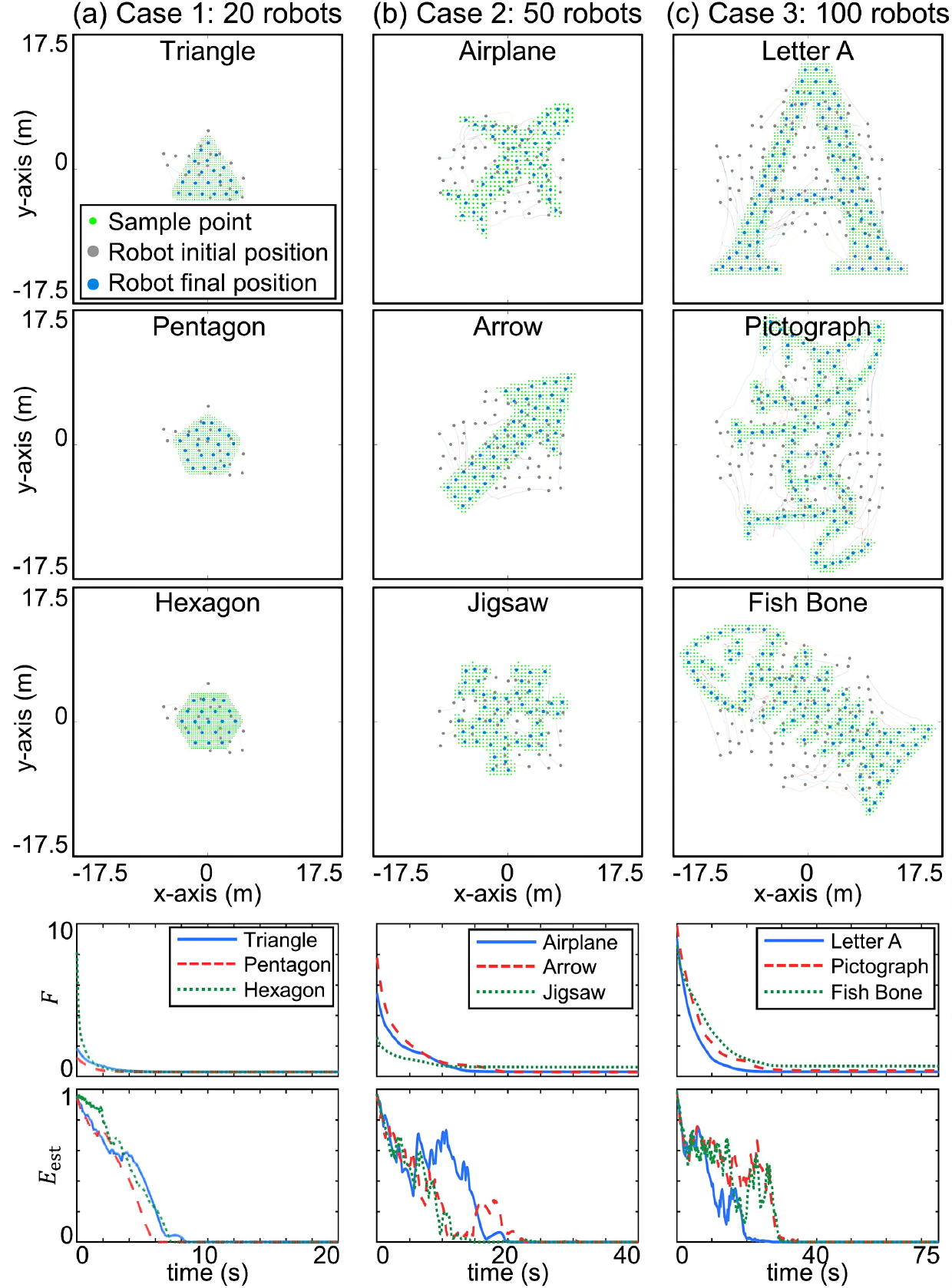}
	\caption{
		Simulations of our proposed strategy for different shapes and swarm sizes. 
		(a) Case 1: Trajectories and metrics of triangle, pentagon, and hexagon shapes formed by 20 robots. 
		(b) Case 2: Trajectories and metrics of airplane, arrow, and jigsaw shapes formed by 50 robots. 
		(c) Case 3: Trajectories and metrics of letter-A, pictograph, and fish-bone shapes formed by 100 robots. 
		\label{fig_coverage1}}
\end{figure}

Three simulation examples are presented to evaluate our proposed method. 
The simulation parameters are set as follows. 
The shape negotiation coefficients in \eqref{Equ_protocol_pos} and \eqref{Equ_protocol_ori} are set to $\alpha = 0.8$ and $c_1 = c_2 = 1.6$. 
The mass estimation coefficient in \eqref{Equ_massestimator} is set to $\gamma = 0.01$, and the meanshift controller coefficients in \eqref{Equ_mscommand_mod} and \eqref{Equ_CollAvdCmd} are set to $\sigma_1 = 30$, $\sigma_2 = 1000$, and $\varepsilon = 10^{-8}$. 
A suitable value of the bandwidth parameter $\beta$ can be determined by Algorithm \ref{Alg_DA} in the Appendix.
Specifically, in the following three examples, the bandwidth is set to $\beta = 1.5$.

The first example evaluates the proposed strategy across three swarm sizes ($n = 20, 50, 100$), with each size tested in three trials for distinct desired shapes (see Fig.~\ref{fig_coverage1}). 
In this example, the robot model parameters are set as follows: $r_\mathrm{sense} = 5$\,m, $r_\mathrm{avoid} = 1$\,m, and $v_\mathrm{max} = 1$\,m/s. 
In case 1, where the swarm size is $n = 20$ (see Fig.~\ref{fig_coverage1}(a)), the robot swarm smoothly forms three convex shapes: a triangle, pentagon, and hexagon. 
As shown in the performance metrics, the distribution-similarity error metric $F$ decreases rapidly and converges to a minimum, indicating successful fitting to the sample-point distribution. 
Concurrently, the mass estimation error $E_{\mathrm{est}}$ converges to zero within 10 seconds, validating the estimator. 
Cases 2 and 3 demonstrate that our strategy also assembles concave shapes (e.g., airplane, arrow, jigsaw) and more complex shapes (e.g., pictograph, fish-bone), even for large-scale swarms (see Fig.~\ref{fig_coverage1}(b) for $n = 50$, and Fig.~\ref{fig_coverage1}(c) for $n = 100$). 
The similarity metric and estimation error confirm stable performance across diverse shapes and swarm sizes. 

\begin{figure}[!t] 
	\centering
	\includegraphics[width=0.9\linewidth]{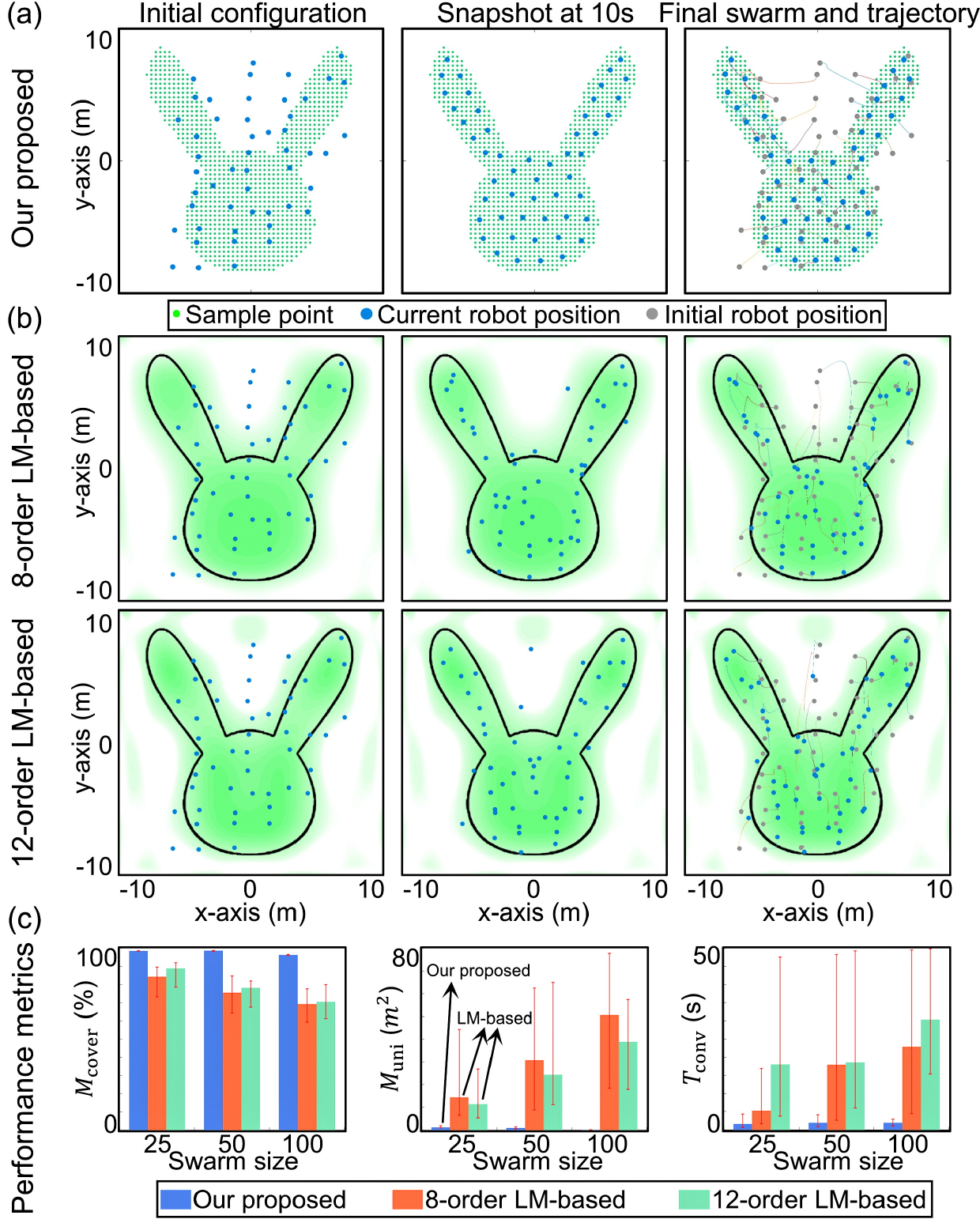}
	\caption{
		Comparative results of our strategy and the state-of-the-art \cite{Liu2024Self}, each achieving a bunny-head shape with 50 robots. 
		(a) Snapshots of our proposed strategy. 
		(b) Snapshots of strategy \cite{Liu2024Self} with 8th-order and 12th-order moment. 
		(c) Statistical results of the performance metrics, averaged over 20 trials of random initialization. 
		\label{fig_coverage2}}		
\end{figure}

In the second example, we compare our proposed strategy with the state-of-the-art Legendre moment (LM) method from \cite{Liu2024Self} (using 8th- and 12th-order moments), as illustrated in Fig.~\ref{fig_coverage2}. 
For a fair comparison, all swarms are tasked with forming an identical bunny-head shape, with deterministic knowledge of its location and orientation. 
Parameters are kept consistent: a sensing radius \( r_{\mathrm{sense}} = 10\,\mathrm{m} \) and a maximum velocity \( v_{\mathrm{max}} = 1\,\mathrm{m/s} \), while other parameters for our method match those in the first example.
Fig.~\ref{fig_coverage2}(a--b) presents snapshots and trajectories of 50-robot swarms using both strategies.
Although both achieve the desired formation from identical initial conditions, our strategy yields a shape that more closely follows the bunny-head contour, with fewer robots straying outside the boundary.
The performance is evaluated using three metrics: coverage rate \( M_{\mathrm{cover}} \), uniformity \( M_{\mathrm{uni}} \), and convergence time \( T_{\mathrm{conv}} \), as illustrated in Fig.~\ref{fig_coverage2}(c). 
For the LM-based methods---where some robots may never enter the shape---we define \( T_{\mathrm{conv}} \) as the time after which no additional robots enter, a lenient criterion that favors the LM strategy. 
Statistical results over 20 random trials confirm that our strategy outperforms the LM-based methods in all three metrics. 
This improvement stems from our mass distribution model, which reduces shape distortion compared to LM-based representations.

\begin{figure}[!t]  
	\centering
	\includegraphics[width=0.9\linewidth]{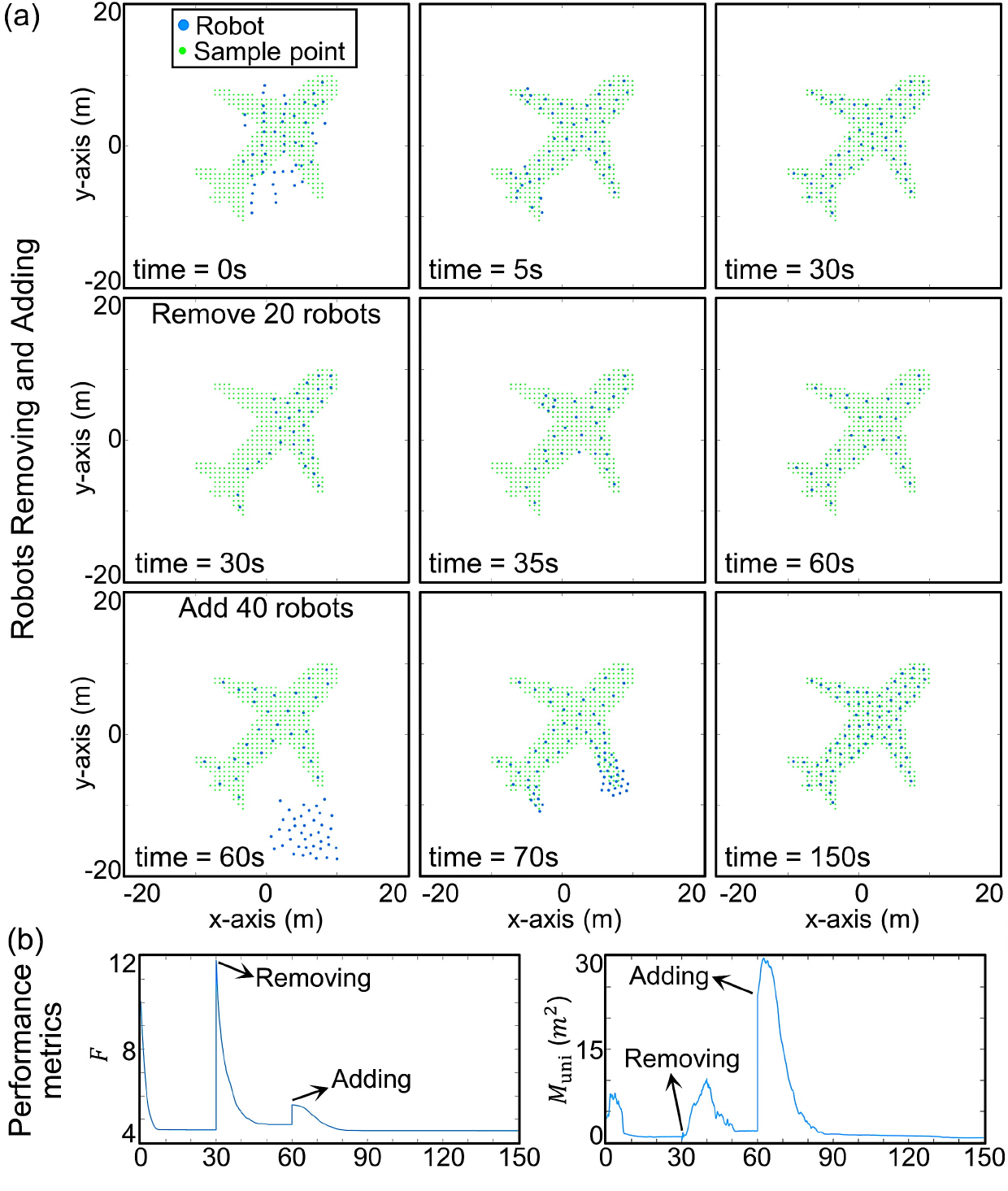}
	\caption{
		Simulation result for robots adding and removing. 
		(a) Snapshots of 50 robots forming an airplane shape after removing 20 and adding 40 robots. 
		(b) Corresponding performance metrics. 
		\label{fig_ExitEnter}}	
\end{figure}

The third example demonstrates the adaptability of our strategy to swarm-size variations, as illustrated in Fig.~\ref{fig_ExitEnter}. 
Simulation snapshots in Fig.~\ref{fig_ExitEnter}(a) show 50 robots initially positioned randomly, forming an airplane shape that stabilizes at $ t = 30 \,\text{s} $. 
The accomplishment of the formation is verified by the convergence of the similarity metric $ F $ and uniformity metric $ M_\mathrm{uni} $ in Fig.~\ref{fig_ExitEnter}(b). 
Subsequently, 20 robots are removed, and $ F $ is recalculated for the remaining 30 robots. 
To enable shape reformation, we reset the internal state of the mass estimator in \eqref{Equ_massestimator} to zero for all robots (i.e., $z_{k,i} = 0$ for all $i, k$), as the estimator requires an error-free initial condition $ \sum_{i=1}^n z_{k,i} = 0 $ \cite{Chen2012Distributed}. 
This reset can be implemented via decentralized broadcasting. 
After resetting, the 30 robots adjust to fill the shape, achieving the formation at $ t = 60 \,\text{s} $, with $ F $ and $ M_\mathrm{uni} $ converging to a new minimum. 
Then, 40 robots are added, and $ F $ is recalculated for the expanded swarm of 70 robots. 
The swarm proceeds to form the shape, finally achieving the airplane formation at $ t = 150 \,\text{s} $, as confirmed by metric convergence.

Notably, our method assumes a fixed desired shape, which may pose a scalability challenge if the swarm size $n$ grows beyond the shape's capacity. 
To ensure an effective formation, two conditions should be satisfied. 
First, to promote convergence to a near-optimal distribution and avoid local minima of the similarity metric $F$ (since Theorem~\ref{Thm_CtrlConvergence} only guarantees convergence to a critical point of $F$), the number of sample points $m$ should be sufficiently large. 
Based on our simulations, we recommend $m \geq 5n$.
Second, when collision avoidance is active, the inter-sample-point distance $d_{\text{pts}}$ must be chosen to accommodate the robots without overlap.
Modeling each robot as a circle with avoidance radius $r_{\text{avoid}}$, the shape's approximate area $S_{\text{shape}} \approx m d_{\text{pts}}^{2}$ must be large enough to contain $n$ such circles. 
This yields the condition $d_{\text{pts}} \geq \sqrt{\pi n / m} \cdot r_{\text{avoid}}$.

\subsection{Experiment Results}
\label{Subsec_experiment}

Experiments were conducted to validate the practical effectiveness of the proposed strategy. 
The experimental hardware setup, illustrated in Fig.~\ref{fig_Experiment}(a), consists of four primary components: 
(1) 10 TurtleBot3 Burger robots, which are ROS-2-based non-holonomic platforms; 
(2) a Nokov motion capture system that tracks each robot's position at 200\,Hz; 
(3) a workstation that receives these positions and computes velocity commands for each robot via MATLAB; 
and (4) a router that broadcasts these commands to all robots via Wi-Fi. 
Since the TurtleBot3 is underactuated, the velocity commands are transformed into linear and angular velocities according to \cite{Zhao2018General}. 
The MATLAB program also simulates the decentralized interactions defined in Section~\ref{Subsec_RobotModel}.

\begin{figure}[!t]
	\centering	\includegraphics[width=0.9\linewidth]{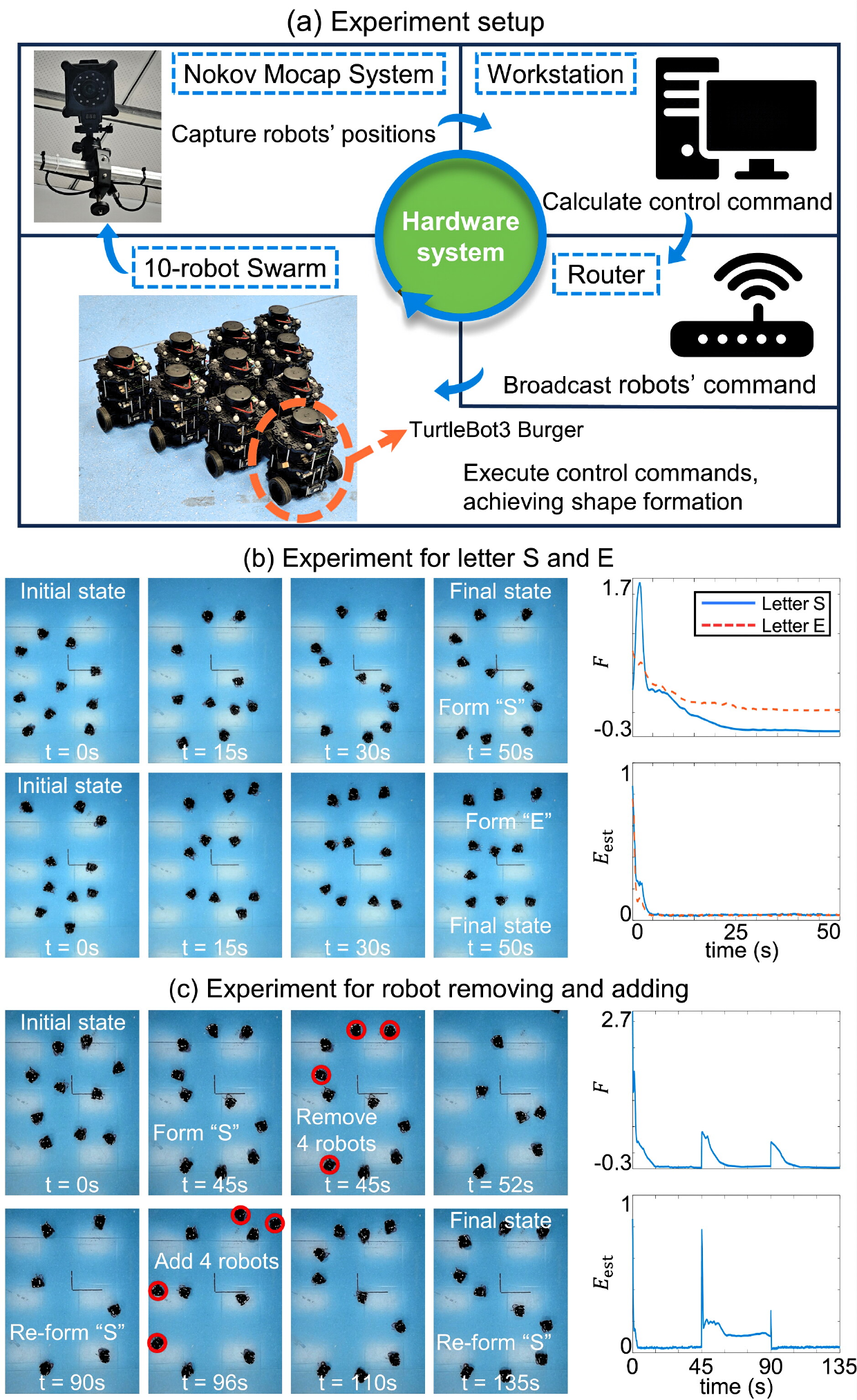}
	\caption{
		Experiment setup and results. 
		(a) Hardware setup of the experiments. 
		(b) Snapshots and corresponding metrics for 10 robots forming an “S” and an “E” shape.  
		(c) Snapshots and metrics demonstrating adaptability to swarm-size variations, where 4 robots are removed at $t=45\,\text{s}$ and reintroduced at $t=90\,\text{s}$ during an “S” shape formation.
		\label{fig_Experiment}}
\end{figure}

We first validate the proposed strategy by guiding a 10-robot swarm to form “S”- and “E”-shaped formations, as shown in Fig.~\ref{fig_Experiment}(b).  
The parameters are set as follows: sensing radius $r_\mathrm{sense} = 1.5$\,m, avoidance radius $r_\mathrm{avoid} = 0.35$\,m, gains $\sigma_1 = 2$, $\sigma_2 = 15$, and $\gamma = 0.05$.  
The bandwidth is $\beta = 5.5$ for the “S” shape and $\beta = 9$ for the “E” shape.  
The control and shape negotiation frequencies are both 10 Hz, while mass estimation operates at 100 Hz.  
As seen from the snapshots, the swarm successfully converges from a random initial configuration to the desired shapes within 50 seconds.  
The similarity metric may initially increase due to non-converged shape negotiation but decreases monotonically to a minimum after both negotiation and mass estimation converge (within 5 seconds), confirming successful formation.  

The second experiment demonstrates our strategy's adaptability to swarm-size variations (Fig.~\ref{fig_Experiment}(c)).  
Initially, the 10-robot swarm forms an “S” shape within 45 seconds, verified by the convergence of the similarity metric and estimation error.  
Subsequently, 4 robots are removed, and the mass estimator's internal state is reset to $z_{k,i} = 0$ (as in the adaptability simulation in Section~\ref{Subsec_simulation}).  
The remaining 6 robots reconfigure into the “S” shape by $t = 90$\,s.  
Then, the 4 robots are reintroduced, and the swarm of 10 reforms the shape by $t = 135$\,s.  
The decrease in the similarity metric and estimation error in each phase verifies both formation achievement and mass estimation convergence, validating practical adaptability to swarm-size variations.

\section{Conclusion}
\label{Sec_conclusion}

This paper has presented a distribution-based formation control method to achieve complex shape formation while adapting to swarm-size variations. 
First, we have introduced a mass distribution model defined over discrete sample points, which precisely represents the desired shape and decouples it from swarm size.
Second, we have developed a decentralized mass estimator based on decentralized average tracking, enabling each robot to accurately obtain the global mass distribution through local interactions.
Third, we have designed a meanshift controller that uses mass feedback to drive the robots toward the desired shape by fitting the sample-point distribution.
The stability of the controller has been established, and convergence has been ensured for convex shapes.
Finally, extensive simulations, experiments, and comparisons with state-of-the-art methods have demonstrated the effectiveness of the proposed method and its adaptability to varying swarm sizes.
Future work will focus on developing a robust mass estimator that eliminates the need for a broadcast signal upon robot removal, and on conducting real-world experiments with larger swarms.

\appendices
\section{Preliminary Lemmas}

To establish the theoretical results, we need to introduce the following necessary preliminary results. 

\begin{lemma}[LaSalle's Invariance Principle, Theorem 4.4 in \cite{Khalil2002Nonlinear}]
	\label{Lem_LasInvPrin}
	Denote $\Omega \subseteq \mathbb{R}^n$ as a compact set that is positively invariant with respect to the system $\dot{x} = f(x)$. 
	Here, $f: \mathbb{R}^n \rightarrow \mathbb{R}^n$ is locally Lipschitz. 
	Suppose that $V:  \mathbb{R}^n \to \mathbb{R}$ is a continuously differentiable function satisfying $\dot{V}(x) \leq 0$ in $\Omega$. 
	Let $E = \{ x \in \Omega : \dot{V}(x) = 0\}$ and $M$ be the largest invariant set in $E$. 
	Then every solution starting in $\Omega$ approaches $M$ as $t \to \infty$. 
\end{lemma}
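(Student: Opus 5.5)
Since this is the classical invariance principle quoted from \cite{Khalil2002Nonlinear}, I would follow the standard $\omega$-limit-set argument rather than anything specific to the swarm model. First, I would fix an initial state $x_0 \in \Omega$. Because $f$ is locally Lipschitz, a unique solution $x(t)$ exists locally. Because $\Omega$ is compact and positively invariant, the solution stays in a bounded set, so it cannot escape in finite time. Hence it extends to all $t \ge 0$ and $x(t)\in\Omega$ for all $t\ge 0$.

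Second, I would introduce the positive limit set $L^+$ of $x(t)$: the set of points $y$ for which there is a sequence $t_k\to\infty$ with $x(t_k)\to y$. I would establish four properties of $L^+$.
\begin{enumerate}[(i)]
\item $L^+$ is nonempty, by Bolzano--Weierstrass applied to the bounded trajectory.
\item $L^+$ is closed, by a diagonal-sequence argument, and therefore compact and contained in $\Omega$.
\item $L^+$ is invariant. For $y\in L^+$ with $x(t_k)\to y$, continuous dependence on initial conditions (guaranteed by local Lipschitzness) gives $x(t_k+s)\to \phi(s,y)$ for every fixed $s$, positive or negative, for which the solution through $y$ is defined. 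This solution stays in the compact set $L^+$, so it is defined for all $s$.
\item $\mathrm{dist}(x(t),L^+)\to 0$. If not, there would be $\epsilon>0$ and $t_k\to\infty$ with $\mathrm{dist}(x(t_k),L^+)\ge\epsilon$. Extracting a convergent subsequence produces a limit point at distance at least $\epsilon$ from $L^+$, which is a contradiction.
\end{enumerate}
This step, the construction of the limit set with its invariance, is the main technical obstacle. Invariance in particular needs care: continuous dependence has to be applied on compact time intervals, and the solution through $y$ has to be extended backward in time.

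Third, I would use $V$. Along the trajectory, $\dot V\le 0$ in $\Omega$, so $V(x(t))$ is nonincreasing. Since $V$ is continuous on the compact set $\Omega$, it is bounded below there. Hence $V(x(t))\to a$ for some finite $a$. For any $y\in L^+$ with $x(t_k)\to y$, continuity of $V$ gives $V(y)=a$, so $V\equiv a$ on $L^+$. Since $L^+$ is invariant, the solution $\phi(s,y)$ stays in $L^+$, so $V(\phi(s,y))\equiv a$ and $\dot V(y)=0$. Therefore $L^+\subseteq E$.

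Finally, $L^+$ is an invariant subset of $E$, so $L^+\subseteq M$ because $M$ is the largest invariant set in $E$. Combined with $\mathrm{dist}(x(t),L^+)\to 0$, this shows that $x(t)$ approaches $M$ as $t\to\infty$, which is the claim.
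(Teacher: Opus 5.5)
Your proposal is correct and is essentially the standard positive-limit-set proof of Theorem 4.4 in Khalil, which the paper simply cites rather than reproving: the limit set $L^+$ of the bounded trajectory is nonempty, compact, invariant and attracting, $V$ is constant on $L^+$ so $L^+\subseteq E$, and hence $L^+\subseteq M$. You also handle the one delicate point correctly, namely the invariance of $L^+$, using continuous dependence on compact time intervals and extending the solution through $y$ to all of $\mathbb{R}$ by compactness of $L^+$.
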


\begin{lemma}
	$x e^{-\beta x^2} \leq 1/{\sqrt{2\beta e}}$ holds for all $x\ge 0$. 
	\label{Lem_Max_fi}
\end{lemma}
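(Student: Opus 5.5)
The claim is a one-variable calculus fact, so I would maximize $g(x)=x e^{-\beta x^2}$ over $x\ge 0$ directly. The function is smooth on $[0,\infty)$ with $g(0)=0$ and $g(x)\to 0$ as $x\to\infty$, because the Gaussian factor decays faster than any polynomial grows. Since $g>0$ on $(0,\infty)$, its supremum is attained at an interior critical point.

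Differentiating gives $g'(x)=e^{-\beta x^2}(1-2\beta x^2)$. On $x\ge0$ this vanishes only at $x^\ast=1/\sqrt{2\beta}$. Moreover $g'>0$ on $[0,x^\ast)$ and $g'<0$ on $(x^\ast,\infty)$, so $x^\ast$ is the unique global maximizer on $[0,\infty)$. This sign analysis replaces any second-derivative test.

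Evaluating at the maximizer gives
\begin{align*}
g(x^\ast)=\frac{1}{\sqrt{2\beta}}\,e^{-\beta/(2\beta)}=\frac{1}{\sqrt{2\beta}}\,e^{-1/2}=\frac{1}{\sqrt{2\beta e}}.
\end{align*}
Hence $x e^{-\beta x^2}\le 1/\sqrt{2\beta e}$ for all $x\ge0$. There is no real obstacle here; the only care needed is to restrict to $x\ge0$, so that the negative root $-1/\sqrt{2\beta}$ is discarded, and to track the constant correctly in the last step.

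This bound is what later controls $\|\nabla_{p_i} e^{-\beta\|p_i-q_k\|^2}\|=2\beta\|p_i-q_k\|e^{-\beta\|p_i-q_k\|^2}\le\sqrt{2\beta/e}$. That is the source of the gain condition $\gamma>(n-1)\sqrt{2\beta/e}\,v_{\max}$ in Theorem~\ref{Thm_Esti_Converge}.
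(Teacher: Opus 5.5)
Your proof is correct and follows the paper's argument essentially verbatim: compute $g'(x)=(1-2\beta x^2)e^{-\beta x^2}$, identify the unique nonnegative critical point $x^\ast=1/\sqrt{2\beta}$, and use the sign change of $g'$ to conclude that it is the global maximizer on $[0,\infty)$, with $g(x^\ast)=1/\sqrt{2\beta e}$. Your first-paragraph argument about $g(0)=0$ and $g\to 0$ at infinity is redundant given the sign analysis, but harmless.
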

\begin{proof}
	Consider $f(x) = x e^{-\beta x^2}$ for $x \geq 0$. 
	Its derivative is $f'(x) = e^{-\beta x^2} - 2\beta x^2 e^{-\beta x^2} = (1 - 2\beta x^2) e^{-\beta x^2}$. 
	Since $e^{-\beta x^2} > 0$ for all \(x\), solutions to $f'(x) = 0$ satisfy $1 - 2\beta x^2 = 0$. 
	It follows that the critical point is $x^* = 1 /\sqrt{2\beta}$. 
	The derivative is positive when $x < \frac{1}{\sqrt{2\beta}}$ and negative when $x > \frac{1}{\sqrt{2\beta}}$, confirming $x^*$ as a maximum. 
	Thus, $f(x) \leq f(x^\ast) = 1 / \sqrt{2\beta e}$ holds for all $x \geq 0$.
\end{proof}

\begin{lemma}\label{Lem_E_ZeroMeasure}
	The set $E = \big\{\Lambda = [\xi^\top, \zeta^\top]^\top \in \mathbb{R}^{2d} : \xi,\zeta \in \mathbb{R}^d, \xi^\top \zeta = 0 \text{ or } \xi^\top \zeta = -(1-\varepsilon)\|\xi\|^2 \big\}$ has Lebesgue measure zero in $\mathbb{R}^{2d}$ for $d \geq 1$.
\end{lemma}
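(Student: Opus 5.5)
The plan is to write $E = E_1 \cup E_2$, where
\begin{align*}
E_1 &= \{\Lambda \in \mathbb{R}^{2d} : \xi^\top\zeta = 0\},\\
E_2 &= \{\Lambda \in \mathbb{R}^{2d} : \xi^\top\zeta + (1-\varepsilon)\|\xi\|^2 = 0\}.
\end{align*}
Each is the zero set of a polynomial on $\mathbb{R}^{2d}$: $g_1(\Lambda)=\xi^\top\zeta$ and $g_2(\Lambda)=\xi^\top\zeta+(1-\varepsilon)\|\xi\|^2$. Neither polynomial is identically zero. For $g_1$, take $\xi=\zeta=e_1$, which gives $g_1=1$. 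For $g_2$, take $\zeta=0$ and $\xi=e_1$, which gives $g_2=1-\varepsilon>0$ since $0<\varepsilon<1$. The zero set of a nonzero polynomial has Lebesgue measure zero, and a finite union of null sets is null. So $E$ has measure zero.

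To avoid quoting the polynomial zero-set fact, I would prove each piece directly with Fubini (Tonelli applied to the indicator function). Fix $\xi$ and view each set as a set of $\zeta$.
\begin{itemize}
\item For $E_1$: if $\xi\neq 0$, the slice $\{\zeta:\xi^\top\zeta=0\}$ is a hyperplane in $\mathbb{R}^d$, so it has $d$-dimensional measure zero. The exceptional set $\{\xi=0\}$ is a single point, so it is null in $\mathbb{R}^d$ when $d\ge1$.
\item For $E_2$: if $\xi\neq0$, the slice $\{\zeta:\xi^\top\zeta=-(1-\varepsilon)\|\xi\|^2\}$ is an affine hyperplane, again null. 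The exceptional set is again $\{\xi=0\}$.
\end{itemize}
Integrating the slice measures over $\xi$ then gives $\lambda_{2d}(E_i)=\int_{\mathbb{R}^d}\lambda_d(E_i^\xi)\,d\xi=0$, because the integrand vanishes except on the null set $\{\xi=0\}$.

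The main obstacle is only technical: $E_i$ must be measurable before Tonelli can be applied. This holds because each $E_i$ is the preimage of $\{0\}$ under a continuous function, so it is closed and hence Borel. With that checked, the argument is complete, and it works for every $d\ge1$, including $d=1$, where the hyperplanes are single points.
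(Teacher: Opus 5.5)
Your proof is correct, but it takes a different route from the paper's. The paper splits $E$ into the same two pieces, but it treats each piece as a level set of a smooth function, $f(\Lambda)=\xi^\top\zeta$ and $h(\Lambda)=\zeta^\top\xi+(1-\varepsilon)\|\xi\|^2$. It computes the gradients $[\zeta^\top,\xi^\top]^\top$ and $[(\zeta+2(1-\varepsilon)\xi)^\top,\xi^\top]^\top$ and notes that both vanish only at $\Lambda=\mathbf{0}$. It then applies the Preimage Theorem on $\mathbb{R}^{2d}\setminus\{\mathbf{0}\}$ to get $(2d-1)$-dimensional submanifolds, and a corollary in Lee to conclude that each has measure zero.

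Your Tonelli argument needs only elementary measure theory and no regularity condition. It uses only the fact that, for $\xi\neq 0$, each $\zeta$-slice is an affine hyperplane. That fact is the same one underlying the paper's computation, namely $\nabla_\zeta f=\nabla_\zeta h=\xi\neq 0$. Your version also states explicitly two points the paper leaves implicit. The first is measurability, which you get from closedness. The second is the degenerate locus $\xi=0$, which you absorb as a null set of slices. The paper instead removes the origin to obtain a submersion and then writes $A=f^{-1}(0)$ as a submanifold of $\mathbb{R}^{2d}\setminus\{\mathbf{0}\}$ even though $\mathbf{0}\in A$. That is harmless, since a point is null, but your handling is cleaner.

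Your slicing argument also works for every real $\varepsilon$, because the slices stay affine hyperplanes. The paper's method does give extra structure, a smooth hypersurface away from the origin, but the lemma does not need it. Your one-line appeal to zero sets of nonzero polynomials is the shortest route of all, and your Fubini argument is a self-contained proof of that fact for these two polynomials.
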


\begin{proof}
	Partition $E = A \cup B$ where $A = \big\{ \Lambda \in \R^{2d} : \xi^\top \zeta = 0 \big\} $ and $B = \big\{ \Lambda \in \R^{2d} : \xi^\top \zeta = -(1-\varepsilon)\|\xi\|^2 \big\}$. 
	Denote $\mu(\cdot)$ as the Lebesgue measure. 
	We then show $\mu(A) = \mu(B) = 0$. 
	
	We first show that $A$ has Lebesgue measure zero.
	Define $f: \R^{2d} \to \R$ by $f(\Lambda) = \xi^\top \zeta$. 
	Then $f$ is smooth and $A = f^{-1}(0)$. 
	Its gradient is $\nabla f = [\nabla_{\xi} f^\top , \nabla_{\zeta} f^\top]^\top = [\zeta^\top, \xi^\top]^\top$. 
	Note that critical points of $\nabla f$ satisfy $\nabla f = \textbf{0} \iff \xi = \zeta = \textbf{0} \iff \Lambda = \textbf{0}$. 	
	Then $df(v) = \nabla f^\top v$ is surjective from $\mathbb{R}^{2d}$ to $\mathbb{R}$ if and only if $\Lambda \neq \mathbf{0}$.	
	Thus $f$ is a submersion on $\R^{2d} \setminus \{\textbf{0}\}$. 
	By the Preimage Theorem \cite{Guillemin2010Differential}, $A = f^{-1}(0)$ is a submanifold of $\R^{2d} \setminus \{\textbf{0}\}$ with dimension $\mathrm{dim}A= 2d-1 < 2d$. 
	By Corollary 6.12 in \cite{Lee2012Manifolds}, we obtain $\mu(A) = 0$. 
	
	Next, we show that $B$ has Lebesgue measure zero. 
	Define $h: \R^{2d} \to \R$ by $ h(\Lambda) = \zeta^\top \xi + (1-\varepsilon) \|\xi\|^2$. 
	Then $B =  \{ \Lambda \in \mathbb{R}^{2d} : h(\Lambda) = 0 \} $.
	The gradient of $h$ is $\nabla h = [\nabla_{\xi} h^\top, \nabla_{\zeta} h^\top]^\top = [(\zeta + 2(1-\varepsilon)\xi)^\top, \xi^\top]^\top$. 
	Note that critical points of $\nabla h$ satisfy $\nabla h = \textbf{0} \iff \xi = \zeta = \textbf{0} \iff  \Lambda = \textbf{0}$. 
	Thus, similar to $A$, we can also obtain $\mu(B) = 0$. 
	
	By subadditivity of Lebesgue measure, $\mu(E) \leq \mu(A) + \mu(B) = 0 + 0 = 0$, i.e., $E$ has Lebesgue measure zero.
\end{proof}

\section{Proof of Theorems}
\label{Sec_ThmProof}
In this appendix, we present the proofs of Theorem~\ref{Thm_Nego_Consensus} and Theorem~\ref{Thm_Esti_Converge}.

\begin{proof}[Proof of Theorem~\ref{Thm_Nego_Consensus}]
	Let $\delta_i=q_{o,i}-\bar q_o$. 
	Consider the Lyapunov candidate $V(t)=\frac{1}{2}\sum \nolimits_{i=1}^n \delta_i(t)^\top \delta_i(t)$. 
	According to Theorem~1 in \cite{Wang2010Finite}, differentiating $V(t)$ with respect to $t$ gives 
	\begin{align*}
		\dot{V}(t) \le -\dfrac{c_1}{2}\left(4\lambda_2 \left(L(\mathcal{G}\left(t\right)\right)\right)^{\frac{1+\alpha}{2}} V(t)^{\frac{1+\alpha}{2}}
	\end{align*}
	where $c_1>0$ and $0<\alpha<1$ are two constants. 
	Here, $L(\mathcal{G}(t))$ is the Laplacian of $\mathcal{G}(t)$ and $\lambda_2(L(\mathcal{G}(t)))$ is the algebraic connectivity of $\mathcal{G}(t)$. 
	Since $\mathcal{G}(t)$ is connected for all $t \ge t_0$, $\lambda_2(L(\mathcal{G}(t)))>0$ always holds. 
	Note that all possible connections of $\mathcal{G}(t)$ are finite. 
	Thus, $K = \min \{\lambda_2 (L (\mathcal{G}(t))) : t \ge t_0 \}$ exists and is strictly positive. 
	Then we derive that
	\begin{align*}
		\dot{V}\left(t\right) \le -c_1 2^{\alpha} K^{\frac{1+\alpha}{2}} V\left(t\right)^{\frac{1+\alpha}{2}}. 
	\end{align*}
	By Theorem 1 in \cite{Bhat2000Finite}, there exists 
	\begin{align*}
		t^{\ast} \le \frac{ 2^{ 1 - \alpha} V(t_0)^{\frac{1-\alpha}{2}}}{ c_1  K^{\frac{1+\alpha}{2}} (1-\alpha)} 
	\end{align*}
	such that $V(t)=0$ for all $t\ge t^*$, which implies that $q_{o,i}(t) = \bar q_o$ for all $t\ge t^*$. 
	Similarly, following the proof of protocol \eqref{Equ_protocol_pos}, $\theta_{o,i}(t) = \bar \theta_o$ can also be reached in finite time.
\end{proof}

\begin{proof}[Proof of Theorem~\ref{Thm_Esti_Converge}]
	For fixed $k\in \{1,\ldots,m\}$, define $r_i = e^{-\beta \|p_i - q_k\|^2}$ and $f_i = \dot{r}_i$ for $i\in\{1, \ldots, n\}$. 
	Differentiating $\hat{P}_{k,i}$ with respect to $t$ and substituting \eqref{Equ_massestimator} into it yields 
	\begin{align}
		\dot{\hat{P}}_k^i = f_i + \gamma \sum_{j \in \mathcal{N}_i} \operatorname{sign}\left(\hat{P}_{k,j} - \hat{P}_{k,i} \right).
		\label{Equ_massestimator_diff}
	\end{align}
	According to Theorem 3 in \cite{Chen2012Distributed}, if there exists a constant $\bar f>0$ such that $|f_i| \leq \bar{f}$ and $\gamma > (n-1)\bar{f}$ for any $i\in \{1,\cdots,n\}$, then by the action of \eqref{Equ_massestimator_diff}, $\lim_{t \rightarrow \infty }\hat{P}_{k,i}(t) = P_k$ holds for all $i$. 
	From the definition of $f_i$, it follows that 
	\begin{align}
		\begin{split}
			\left|f_i\right| &= \left| \frac{d}{dt} e^{-\beta \|p_i - q_k\|^2} \right| \\
			&= 2\beta e^{-\beta \|p_i - q_k\|^2} \left| (q_k - p_i)^\top v_i \right| \\
			&\leq 2\beta e^{-\beta \|p_i - q_k\|^2} \|q_k - p_i\| \|v_i\| \\
			&\leq 2\beta \frac{\|q_k - p_i\|}{e^{\beta \|p_i - q_k\|^2}} v_{\mathrm{max}}.
		\end{split}
		\label{Equ_estimatorbound}
	\end{align}
	By Lemma~\ref{Lem_Max_fi}, \eqref{Equ_estimatorbound} implies that $|f_i|\leq 2\beta v_{\mathrm{max}}/{\sqrt{2\beta e}}$, and consequently $|f_i| \leq \bar{f}:= \sqrt{{2\beta}/{e}} v_{\mathrm{max}}$. 
	Since $\gamma > (n-1)\sqrt{2\beta/e} \, v_{\mathrm{max}}$, then $\gamma > (n-1)\bar{f}$. 
	As a result, it holds that $\lim_{t \rightarrow \infty }\hat{P}_{k,i}(t) = P_k$ for $i\in \{1,\cdots,n\}$ and $k \in \{1,\ldots,m\}$.     
\end{proof}

\section{Proof of Lemmas}
\label{Sec_Lemma12Proof}

In this appendix, we present the theoretical analyses of all theorems in this paper.

\begin{proof}[Proof of Lemma~\ref{Lem_uniformity}]
	According to the Cauchy-Schwartz inequality, it follows that $|\mathbf{P}^\top \mathbf{1}_m| \le \| \mathbf{P}\| \| \mathbf{1}_m\|$. Here, $\mathbf{1}_m = [1,\ldots,1]^\top \in \mathbb{R}^m$ and $ \| \mathbf{P}\| = \sqrt{\sum_{k=1}^m P_k^2}$ is the Euclidean norm. 
	The equality holds iff $\mathbf{P} = c \mathbf{1}_m$ for some $c \in \mathbb{R}$, which is equivalent to $P_k = P_l$ for all $k,l \in \{1,\ldots,m \}$. 
	It further derives that $-\ln ( | \mathbf{P}^\top \mathbf{1}_m  | / (\| \mathbf{P}\| \| \mathbf{1}_m \|)) \ge 0$ and the equality holds iff $P_k = P_l$ for all $k,l$. 
	Then, we have 
	\begin{align*}
		\begin{split}
			0 &\le -\ln \left( \frac{\left|\mathbf{P}^\top  \mathbf{1}_m \right|}{\left\| \mathbf{P} \| \right\| \mathbf{1}_m \|}\right)  \\
			&= -\ln \left( \sum_{k=1}^m P_k \right)  + \ln \left\| \mathbf{P} \right\| + \dfrac{1}{2}\ln m \\
			&\le -\dfrac{1}{m} \sum_{k=1}^m \ln P_k  -(1-\dfrac{1}{2} )\ln m + \ln \| \mathbf{P} \|  \\
			& = F_{\mathrm{uni}}
		\end{split}		
	\end{align*} 
	where the second inequality is derived by Jensen's inequality, of which the equality holds iff $P_k = P_l$ for all $k,l$. 
	Therefore, $F_{\mathrm{uni}} \ge 0$ and the equality holds iff $P_k = P_l$ for all $k,l \in \{1,\ldots,m \}$.
\end{proof}

\begin{proof}[Proof of Lemma~\ref{Lem_Smooth}]
	First, we establish the smoothness of the mass function $P_k(\mathbf{p})$ with respect to $\mathbf{p}$. 
	Recall that $P_k : \mathbb{R}^{dn} \rightarrow \mathbb{R}_{> 0}$ is defined as $P_k(\mathbf{p}) = \frac{1}{n} \sum_{i=1}^n e^{-\beta\|p_i-q_k\|^2}$. 
	Define the squared distance function $d_{ik}: \mathbb{R}^{dn} \rightarrow \mathbb{R}$ as $d_{ik}(\mathbf{p}) := \|p_i-q_k\|^2$. 
	This function is smooth in $\mathbf{p}$ over $\mathbb{R}^{dn}$. 
	Let $\tau: \mathbb{R} \rightarrow \mathbb{R}_{> 0}$ denote the exponential function $\tau(x) = e^{-\beta x}$, which is smooth. 
	Since $P_k$ is a finite sum of compositions of smooth functions (i.e., $P_k = \frac{1}{n} \sum_{i=1}^n \tau \circ d_{ik}$), then $P_k$ is smooth in $\mathbf{p}$.
	
	Next, we show the smoothness of $F$. 
	Since $P_k > 0$ holds strictly and the logarithm function $\ln(\cdot): \mathbb{R}_{>0} \rightarrow \mathbb{R}$ is smooth, each $\ln P_k$ is smooth in $\mathbf{p}$. 
	Thus, as a finite sum of smooth functions and a constant, the metric $F = -\frac{1}{m}\sum_{k=1}^m \ln P_k - \frac{1}{2} \ln m$ is smooth with respect to $\mathbf{p}$. 
\end{proof}



\begin{proof}[Proof of Lemma~\ref{Lem_ConflictFree}]
	Reorganizing \eqref{Equ_ConflictFree} gives 
	\begin{align}\label{Equ_ConflictFree2}
		(v_i^\mathrm{cv})^\top v_i^\mathrm{ms}  \ge -(1-\varepsilon) \|v_i^\mathrm{ms}\|^2.
	\end{align}
	Then we only need to show that \eqref{Equ_ConflictFree2} holds. 
	If $(\tilde{v}_i^\mathrm{cv})^\top v_i^\mathrm{ms} \ge 0$, then we have $(v_i^\mathrm{cv})^\top v_i^\mathrm{ms} \ge 0 \ge -(1-\varepsilon) \|v_i^\mathrm{ms}\|^2$, which implies that \eqref{Equ_ConflictFree2} holds. 
	If $(\tilde{v}_i^\mathrm{cv})^\top v_i^\mathrm{ms} < 0$, it follows that 
	\begin{align*}
		\begin{split}         
			\left(v_i^\mathrm{cv}\right)^\top v_i^\mathrm{ms} 
			= & \varphi \cdot \min \left\{ -\frac{\left(1-\varepsilon \right) \left\| v_i^\mathrm{ms} \right\|^2}{\left( v_i^\mathrm{ms} \right)^\top \tilde{v}_i^\mathrm{cv}}, 1 \right\} \cdot \left(\tilde{v}_i^\mathrm{cv} \right)^\top   v_i^\mathrm{ms} \\
			\ge & \left(-\dfrac{ \left(1-\varepsilon \right) \left\| v_i^\mathrm{ms} \right\|^2}{\left( \tilde{v}_i^\mathrm{cv} \right)^\top v_i^\mathrm{ms}}\right) \cdot \left(\tilde{v}_i^\mathrm{cv}\right)^\top v_i^\mathrm{ms} \\
			=& -\left( 1-\varepsilon \right) \left\|v_i^\mathrm{ms} \right\|^2.
		\end{split}
	\end{align*}
	Thus, \eqref{Equ_ConflictFree2} always holds, and so does \eqref{Equ_ConflictFree}. 
\end{proof}    

\begin{proof}[Proof of Lemma~\ref{Lem_p_bounded}]
	We show that $\mathbf{p}$ is bounded by showing the boundedness of $p_i$ for all $i\in\{1,\ldots,n\}$. 
	The core idea is to construct a constant $L > 0$ such that $\frac{d}{dt} \|p_i\| < 0$ when $\|p_i\| > L$, ensuring that $p_i$ remains bounded. 
	To do that, we analyze the sign of $\frac{d}{dt}\|p_i\| = p_i^\top v_i / \|p_i\|$. 
	Substituting $v_i$ as defined in \eqref{Equ_CtrlLaw_mod}, for $\|p_i\| > L$, we obtain 
	\begin{align}
		\begin{split}       
			\dfrac{d}{dt} \left\|p_i\right\| 
			&= \Gamma\left(\| v_i^\mathrm{ms} + v_i^\mathrm{cv}\|\right) \dfrac{p_i^\top \left(v_i^\mathrm{ms} + v_i^\mathrm{cv} \right)}{\|p_i\|} \\
			&\leq \Gamma\left(\| v_i^\mathrm{ms} + v_i^\mathrm{cv}\|\right) \left( \left\| v_i^\mathrm{cv}\right\| + \dfrac{p_i^\top v_i^\mathrm{ms}}{\left\| p_i \right\|} \right)
		\end{split}
		\label{Equ_DotNormPi}
	\end{align}    
	where $\Gamma > 0$ by \eqref{Equ_CtrlLaw_mod}. 
	Then, we examine the sign of the term inside the parentheses.
	
	First, we show that $\|v_i^\mathrm{cv}\|$ is upper-bounded. 
	According to the definition of $\kappa_2$ in \eqref{Equ_CollAvdCmd}, we have $\kappa_2 \in [0, 1]$, which implies 
	\begin{align*}
		\left\| v_i^\mathrm{cv} \right\| \leq \sigma_2 \sum_{j \in \mathcal{N}_i^\mathrm{cv}} (r_{\mathrm{avoid}} - \|p_i - p_j\|) \leq \sigma_2 (n - 1) r_{\mathrm{avoid}}.
	\end{align*}
	
	Second, we analyze the upper-bound of $p_i^\top v_i^\mathrm{ms} / \| p_i \|$. 
	Let $R = \max_k \|q_k\|$ and $B(R) = \{ x \in \mathbb{R}^d: \|x\| \leq R \}$ be a compact ball. 
	Substituting $\alpha_k = (P_k)^{-1} \omega (\|q_k - p_i\|) > 0$ and $y = \sum \alpha_k q_k / (\sum \alpha_k)$ into \eqref{Equ_mscommand_mod}, we obtain $v_i^\mathrm{ms} = \sigma_1(y - p_i) / m$. 
	The definition of $y$ implies $\|y\| < R$, i.e., $y \in B(R)$. 
	This indicates that $v_i^\mathrm{ms}$ always points into $B(R)$. 
	For $\|p_i\| > R$ (i.e., $p_i \notin B(R)$), the angle between $v_i^\mathrm{ms}$ and $p_i$ is obtuse, indicating $p_i^\top v_i^\mathrm{ms} < 0$. 
	Let $x$ be the orthogonal projection of the origin onto the line of $v_i^\mathrm{ms}$, then $x \in B(R)$. 
	From the similar triangle relationship shown in Fig.~\ref{Fig_boundedness}, we have 
	\begin{align*}
		\dfrac{\left\|p_i\right\|}{\left\|p_i - x\right\|} = \dfrac{\left\|v_i^\mathrm{ms}\right\|}{\left| p_i^\top v_i^\mathrm{ms}/\left\|p_i\right\| \right|}.
	\end{align*}
	Then for $\|p_i\| > 2R$, it follows that 
	\begin{align*}
		\left| \dfrac{p_i^\top v_i^\mathrm{ms}}{\|p_i\|} \right| &= \frac{\|v_i^\mathrm{ms}\| \|p_i - x\|}{\|p_i\|} \\
		&\geq \dfrac{\sigma_1}{m} \dfrac{\| y - p_i \| \|x - p_i\|}{\|p_i\|} \\
		&\geq \dfrac{\sigma_1}{m} \left| \|p_i\| - (\|x\| + \|y\|) + \frac{\|x\|\|y\|}{\|p_i\|} \right| \\
		&> \dfrac{\sigma_1}{m} (\|p_i\| - 2R).
	\end{align*}
	Combining this with $p_i^\top v_i^\mathrm{ms} < 0$ yields $p_i^\top v_i^\mathrm{ms}/\|p_i\| < -\sigma_1 (\|p_i\| - 2R)/m$. 
	
	\begin{figure}[!t]
		\centering
		\includegraphics[width=1\linewidth]{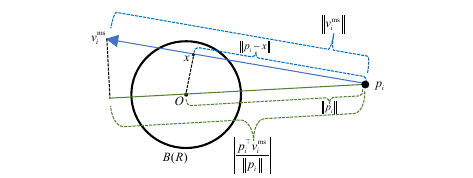}
		\caption{Relationship between $p_i$, $v_i^\mathrm{ms}$, $x$, and $|p_i^\top v_i^\mathrm{ms} /\|p_i\||$. Note that $|p_i^\top v_i^\mathrm{ms} /\|p_i\||$ is the norm of the projection of $v_i^\mathrm{ms}$ onto $p_i$. \label{Fig_boundedness}}
	\end{figure}  
	
	Next, we demonstrate the boundedness of $p_i$.  
	Taking $L = 2R + \sigma_2 m (n-1) r_\mathrm{avoid} /\sigma_1$ as the desired constant and substituting the upper bounds of $\|v_i^\mathrm{cv}\|$ and $\|v_i^\mathrm{ms} \|$ into \eqref{Equ_DotNormPi} yields
	\begin{align*}
		\frac{d}{dt} \|p_i\| &< -\Gamma \left[ \dfrac{\sigma_1}{m} (\|p_i\| - 2R) - \sigma_2 (n-1) r_\mathrm{avoid} \right] < 0
	\end{align*}
	for $\|p_i\| > L$. 
	We now prove $\|p_i\| \le M = \max \{ L, \|p_1(t_0)\|, \ldots, \|p_n(t_0)\| \}$ by contradiction. 
	Suppose that $\| p_i(t_1) \| > M$ for some $t_1 > t_0$. 
	Since $p_i(t)$ is continuous and $\|p_i(t_0)\| \leq M$, there exists $t_2 \in [t_0, t_1)$ such that $\|p_i(t_2)\| = M$ and $\|p_i(t)\| > M$ for all $t \in (t_2, t_1]$. 
	Since $\frac{d}{dt} \|p_i(t)\| < 0$ whenever $\|p_i(t)\| > L$, the norm $\|p_i(t)\|$ is strictly decreasing on $[t_2, t_1]$. 
	This implies $\|p_i(t_1)\| < \|p_i(t_2)\| = M$, which contradicts $\|p_i(t_1)\| > M$. 
	Thus, $\|p_i(t)\| \leq M$ for all $t \geq t_0$, proving that $p_i$ is bounded. 
	Since the swarm size $n$ is finite, $\|\mathbf{p}\| = \sqrt{\sum_{i=1}^n \|p_i\|^2} \le \sqrt{n} M$ holds, which implies that $\mathbf{p}$ is also bounded. 
\end{proof}

\begin{proof}[Proof of Lemma~\ref{Lem_MS_Lipschitz}]
	Substituting the specific definition of $\kappa_1$ into \eqref{Equ_mscommand} and replacing $\hat{P}_{k,i}$ with $P_k$ by Assumption~\ref{Asm_Pki=Pk}, it follows that 
	\begin{align*}
		v_i^\mathrm{ms}(\mathbf{p}) = -\frac{\sigma_1 \nabla_{p_i} F(\mathbf{p})}{2\beta \sum_{k=1}^m \left(P_k(\mathbf{p})\right)^{-1} e^{-\beta \left\|p_i - q_k\right\|^2}}.
	\end{align*}
	By Lemma~\ref{Lem_Smooth}, $F(\mathbf{p})$ is smooth in $\mathbf{p}$, thus $\nabla_{p_i} F(\mathbf{p})$ is smooth in $\mathbf{p}$. 
	Since $P_k(\mathbf{p}) > 0$ is smooth in $\mathbf{p}$, it follows that $v_i^\mathrm{ms}$ is smooth in $\mathbf{p}$. 
	Moreover, as a smooth function must be locally Lipschitz, $v_i^\mathrm{ms}$ is locally Lipschitz in $\mathbf{p} \in \mathbb{R}^{dn}$.
\end{proof}

\begin{proof}[Proof of Lemma~\ref{Lem_CV_Lipschitz}]
	Denote $\eta = \eta_1 \cdot \eta_2$ with $\eta_1 = \varphi = \min \{ \|v_i^\mathrm{ms} \|^2 / \varepsilon, 1 \}$ and 
	\begin{align*}
		\eta_2 =  \begin{cases}
			1, & \left( v_i^\mathrm{ms} \right)^\top \tilde{v}_i^\mathrm{cv} \ge 0 \\
			\min \left\{ -\dfrac{(1-\varepsilon) \| v_i^\mathrm{ms} \|^2}{\left( v_i^\mathrm{ms} \right)^\top \tilde{v}_i^\mathrm{cv}}, 1\right\}, & \left( v_i^\mathrm{ms} \right)^\top \tilde{v}_i^\mathrm{cv} < 0.
		\end{cases}
	\end{align*}
	Then, $\eta = \kappa_2$ and $v_i^\mathrm{cv}$ can be rewritten as $v_i^\mathrm{cv} = \eta \cdot \tilde{v}_i^\mathrm{cv}$. 
	Since the multiplication of locally Lipschitz functions is locally Lipschitz, we show the locally Lipschitz continuity of $v_i^\mathrm{cv}$ by proving that $\eta$ and $\tilde{v}_i^\mathrm{cv}$ are locally Lipschitz in $\mathbf{p}$, respectively. 
	
	First, we show that $\tilde{v}_i^\mathrm{cv}$ is locally Lipschitz in $\mathbf{p}$. 
	In order to do that, we reorganize it as $\tilde{v}_i^\mathrm{cv}(\mathbf{p}) = \sigma_2 \sum_{j\ne i} \Phi(e_{ij}(\mathbf{p}))$, where $\Phi(x) = \phi(\|x\|)x$ for $x\in \mathbb{R}^d$, $e_{ij}(\mathbf{p}) = p_i - p_j$ for $\mathbf{p} \in \mathbb{R}^{dn}$ and $\phi(z) = \max \left\{ (r_\mathrm{avoid} - z)/(z + \varepsilon) , 0 \right\}$ for $z \ge 0$. 
	The function $e_{ij}$ is locally Lipschitz in $\mathbf{p}$ since it is linear in $\mathbf{p}$. 
	We then show that $\Phi(x)$ is locally Lipschitz in $x \in \mathbb{R}^d$. 
	For this purpose, we first demonstrate that $\phi(z)$ is globally Lipschitz. 
	The derivative of $(r_\mathrm{avoid} - z)/(z + \varepsilon)$ with respect to $z$ is $- (r_{\mathrm{avoid}} + \varepsilon)/ (z + \varepsilon)^2$, whose absolute value is bounded by $L_{\phi} = (r_{\mathrm{avoid}} + \varepsilon)/\varepsilon^2$. Thus, $(r_\mathrm{avoid} - z)/(z + \varepsilon)$ is $L_{\phi}$-Lipschitz. Since $\max\{ \cdot , 0\}$ is $1$-Lipschitz, we conclude that $\phi(z)$ is $L_{\phi}$-Lipschitz in $z$. 
	Then, for $x_0 \in \mathbb{R}^d$ and for $x_1, x_2 \in B(x_0,\varepsilon) = \{x \in \mathbb{R}^d : \|x-x_0\| < \varepsilon \}$, we have 
	\begin{align*}
		& \|\Phi(x_1) - \Phi(x_2)\| \\
		\le& \left|\phi(\|x_1\|) - \phi(\|x_2\|) \right| \|x_1\| + \phi(\|x_2\|) \|x_1 - x_2\| \\
		\le& L_{\phi} \|x_1 - x_2\| \|x_1\| + \dfrac{r_\mathrm{avoid}}{\varepsilon} \|x_1 - x_2\| \\
		\le& \left(L_{\phi} \left(\|x_0\| + \varepsilon \right) + \dfrac{r_\mathrm{avoid}}{\varepsilon} \right) \|x_1 - x_2\|.
	\end{align*}
	As can be seen, $\Phi(x)$ is locally Lipschitz on $\mathbb{R}^d$. 
	Thus, as a summation and composition of $\Phi(x)$ and $e_{ij}(\mathbf{p})$, $\widetilde{v}_i^\mathrm{cv}(\mathbf{p})$ is locally Lipschitz in $\mathbf{p}$. 
	
	Second, we show that $\eta$ is locally Lipschitz in $\mathbf{p}$. 
	Note that $\eta$ is only related to $v_i^\mathrm{ms}$ and $\tilde{v}_i^\mathrm{cv}$. 
	Since the locally Lipschitz of $v_i^\mathrm{ms}$ (Lemma~\ref{Lem_MS_Lipschitz}) and $\tilde{v}_i^\mathrm{cv}$ in $\mathbf{p}$ has been proved, we need only show that $\eta$ is locally Lipschitz in $[(v_i^\mathrm{ms})^\top, (\tilde{v}_i^\mathrm{cv})^\top]^\top$. 
	Denote $\xi = v_i^\mathrm{ms}$, $\zeta = \widetilde{v}_i^\mathrm{cv}$, and $\Lambda = [\xi^\top, \zeta^\top]^\top$. 
	Then, $\eta$ is reformulated as $\eta(\Lambda) = \eta_1(\Lambda) \eta_2(\Lambda)$ with $\eta_1(\Lambda) = \min \{ \|\xi\|^2 / \varepsilon, 1\}$ and 
	\begin{align*}
		\eta_2(\Lambda) =  \begin{cases}
			1, &  \zeta^\top \xi \ge 0 \\
			\min \left\{ -\dfrac{(1-\varepsilon) \|\xi\|^2}{\zeta^\top \xi}, 1\right\}, &\zeta^\top \xi < 0
		\end{cases}.
	\end{align*}	
	The objective now is to prove that $\eta(\Lambda)$ is locally Lipschitz in $\Lambda \in \mathbb{R}^{2d}$. 
	To do that, we analyze its locally Lipschitz continuity in the following two distinguished subsets. 
	
	$\bullet$ Scenario 1:  $\{ \Lambda \in \mathbb{R}^{2d}: \|\xi\| > 0\}$.
	
	We first show that $\eta$ is locally Lipschitz on $\{ \Lambda \in \mathbb{R}^{2d}: \|\xi\| > 0\}$ by proving the locally Lipschitz continuity of $\eta_1$ and $\eta_2$ separately. 
	The function $\eta_1$ is locally Lipschitz in $\mathbf{p}$ since $\|\xi\|^2/\varepsilon$ has a bounded gradient on bounded sets and $\min\{\cdot, 1\}$ is $1$-Lipschitz. 
	Then, we discuss the locally Lipschitz of $\eta_2$ at any point $\Lambda_0 = [\xi_0^\top, \zeta_0^\top]^\top$ in $\{ \Lambda \in \mathbb{R}^{2d}: \|\xi\| > 0\}$. 
	We consider the following three cases. 
	
	Case 1: $\zeta_0^\top \xi_0 > 0$.     
	In this case, the continuity of $\zeta^\top \xi$ ensures the existence of a neighborhood $U$ of $\Lambda_0$ such that $\zeta^\top \xi > 0$ for all $\Lambda \in U$. 
	Then $\eta_2(\Lambda) = 1$ on $U$, which implies that $\eta_2$ is locally Lipschitz at $\Lambda_0$.
	
	Case 2: $\zeta_0^\top \xi_0 < 0$.     
	In this case, there exists a bounded neighborhood $U$ of $\Lambda_0$ such that $\zeta^\top \xi < \zeta_0^\top \xi_0 / 2 < 0$ for all $\Lambda \in U$. 
	Define $g(\Lambda) = -(1-\varepsilon)\|\xi\|^2 / (\zeta^\top \xi)$. 
	The gradient of $g$ is given by 
	\begin{align*}
		\nabla_\Lambda g\left(\Lambda\right) = \dfrac{\left(1 - \varepsilon \right)}{\left( \zeta^\top \xi\right)^2}  
		\begin{bmatrix} 
			-2(\zeta^\top \xi)\xi + \|\xi\|^2 \zeta \\ 
			\|\xi\|^2 \xi 
		\end{bmatrix},
	\end{align*}
	which is bounded since both $\|\xi\|$ and $\|\zeta\|$ are upper-bounded in $U$ and $1/(\zeta^\top \xi)^2 < 4/ (\zeta_0^\top \xi_0)^2$. 
	Thus, $g$ is Lipschitz on $U$. 
	Since $\eta_2(\Lambda) = \min \{g(\Lambda), 1\}$ and $\min\{\cdot,1\}$ is $1$-Lipschitz, it follows that $\eta_2$ is Lipschitz on $U$. 
	Therefore, $\eta_2$ is locally Lipschitz at $\Lambda_0$. 
	
	Case 3: $\zeta_0^\top \xi_0 = 0$.     
	Denote $\delta_1 = \|\xi_0\| / 4$. 
	By the triangle inequality, $\|\xi\| > \|\xi_0\| / 2$ for all $\xi$ with $\| \xi - \xi_0\| < \delta_1$. 
	Let $c = (1-\varepsilon) \left(\|\xi_0\| / 2\right)^2$. 
	There exists $\delta_2 > 0$ such that $| \zeta^\top \xi | < c$ for all $\Lambda$ with $\| \Lambda - \Lambda_0\| < \delta_2$ by the continuity of $|\zeta^\top \xi|$. 
	Let $\delta = \min \{ \delta_1, \delta_2\}$ and take $\Lambda \in B(\Lambda_0, \delta)$. 
	If $\zeta^\top \xi \ge 0$, then $\eta_2(\Lambda) = 1$. 
	If $\zeta^\top \xi < 0$, then $| \zeta^\top \xi| < c < (1-\varepsilon) \|\xi\| ^2$, and hence $g(\Lambda) = (1-\varepsilon) \|\xi\| ^2 / | \zeta^\top \xi| > 1$, which implies $\eta_2(\Lambda) = \min \{g(\Lambda),1 \} = 1$. 
	Thus, $\eta_2(\Lambda) = 1$ for all $\Lambda \in B(\Lambda_0, \delta)$, proving that $\eta_2$ is locally Lipschitz at $\Lambda_0$. 
	
	To sum up, $\eta_2$ is locally Lipschitz on $\{\Lambda \in \mathbb{R}^{2d}: \|\xi\| > 0\}$, and $\eta = \eta_1 \cdot \eta_2$ is locally Lipschitz on this set.     
	
	$\bullet$ Scenario 2: $ \{ \Lambda \in \mathbb{R}^{2d}: \|\xi\| = 0\}$. 
	
	Next, we show that $\eta$ is locally Lipschitz on $\{\Lambda \in \mathbb{R}^{2d}: \|\xi\| = 0\}$. 
	For any $\Lambda_0 = [\mathbf{0}^\top, \zeta_0^\top]^\top $ in $\{\Lambda \in \mathbb{R}^{2d}: \|\xi\| = 0\}$, let $U = \{\Lambda : \|\Lambda - \Lambda_0\| < \sqrt{\varepsilon}/2\}$.         
	Next, we take two steps to demonstrate the Lipschitz continuity of $\eta$ in $U$. 
	
	The first step is to show that the gradient of $\eta$ is continuous and bounded almost everywhere in $U$. 
	By the definition of $U$, we have $\|\xi\| < \sqrt{\varepsilon}/2$ for $\Lambda \in U$, which implies $\eta_1 = \|\xi\|^2/\varepsilon$. 
	Then $\eta_1$ is differentiable in $U$, and the differentiability of $\eta$ is equivalent to the differentiability of $\eta_2$. 
	From the definition of $\eta_2$, it is non-differentiable only when $\xi^\top \zeta = 0$ or $\xi^\top \zeta = -(1-\varepsilon)\|\xi\|^2 $. 
	Thus $\eta(\Lambda)$ is differentiable in $U \setminus E$, where $E$ is the non-differentiable set defined as $E = \{ \Lambda \in U : \xi^\top \zeta = 0 \text{ or } \xi^\top \zeta = -(1-\varepsilon)\|\xi\|^2 \}$, which has Lebesgue measure zero (proved by Lemma~\ref{Lem_E_ZeroMeasure}). 
	Then, for $\Lambda \in U \setminus E$, there are three cases. 
	
	Case 1: $\xi^\top \zeta > 0$. If $\xi^\top \zeta > 0$, then $\eta(\Lambda) = \|\xi\|^2/\varepsilon$, which implies that $\nabla_\Lambda \eta = [2\xi^\top / \varepsilon, 0 ]^\top$ is continuous in $\Lambda$ and is bounded by $\|\nabla_\Lambda \eta\| < \varepsilon^{-1/2}$. 
	
	Case 2: $\xi^\top \zeta < 0$ and $g(\Lambda) > 1$. If $\xi^\top \zeta < 0$ and $g(\Lambda) > 1$, then $\eta(\Lambda) = \|\xi\|^2/\varepsilon$.
	Same as case 1, we also obtain that $\nabla_\Lambda \eta$ is continuous and bounded by $\|\nabla_\Lambda \eta\| < \varepsilon^{-1/2}$. 
	
	Case 3: $\xi^\top \zeta < 0$ and $g(\Lambda) < 1$. If $\xi^\top \zeta < 0$ and $g(\Lambda) < 1$, then $\eta = -(1-\varepsilon)\|\xi\|^4 / (\varepsilon \xi^\top \zeta)$.     
	It follows that $\nabla_\Lambda \eta = [\nabla_{\xi} \eta^\top, \nabla_{\zeta} \eta^\top]^\top$ where  
	\begin{align*}
		\nabla_{\xi} \eta(\Lambda) &= -\dfrac{ 4 \left( 1 - \varepsilon \right) \left\|\xi\right\|^2 \xi}{ \varepsilon \xi^\top \zeta} + \dfrac{\left( 1 - \varepsilon \right) \left\|\xi\right\|^4 \zeta}{\varepsilon \left(\xi^{\top} \zeta\right)^2}, \\
		\nabla_{\zeta} \eta (\Lambda) &= \dfrac{\left(1 - \varepsilon \right) \left\| \xi \right\|^4 \xi}{\varepsilon \left( \xi^{\top} \zeta \right)^2}.
	\end{align*}
	This implies that $\nabla_\Lambda \eta$ is continuous in $\Lambda$. 
	Moreover, multiplying both sides of $g(\Lambda) < 1$ by $\xi^\top \zeta$ indicates $|\xi^\top \zeta| > (1-\varepsilon)\|\xi\|^2$, which leads to the bounds $\|\nabla_\xi \eta\| \le 4\|\xi\|/\varepsilon + \|\zeta\|/(\varepsilon(1-\varepsilon)) \le 2 / \sqrt{\varepsilon} + (\|\zeta_0\| + \sqrt{\varepsilon}/2) / (\varepsilon (1 - \varepsilon))$ and $\|\nabla_\zeta \eta\| \le \|\xi\|/(\varepsilon(1-\varepsilon)) \le 1/(2\sqrt{\varepsilon} (1 - \varepsilon) )$. 
	Then we have $\|\nabla_\Lambda \eta(\Lambda) \| \le \sqrt{\|\nabla_\xi \eta\|^2 + \|\nabla_\zeta \eta\|^2} \le \tilde{M}$ where $\tilde{M} = \sqrt{[2 / \sqrt{\varepsilon} + (\|\zeta_0\| + \sqrt{\varepsilon}/2) / (\varepsilon (1 - \varepsilon))]^2 + [1/(2\sqrt{\varepsilon} (1 - \varepsilon) )]^2} $, indicating that $\nabla_\Lambda \eta(\Lambda) $ is continuous and bounded in this case. 
	To sum up, for all $\Lambda \in U \setminus E$, we have $\nabla_\Lambda \eta(\Lambda)$ is continuous and $\|\nabla_\Lambda \eta(\Lambda)\| \le M$ with $M = \max \{\varepsilon^{-1/2} , \tilde{M}\}$. 
	
	The second step is to demonstrate the Lipschitz continuity of $\eta$ on $U$. 
	For $\Lambda_1, \Lambda_2 \in U$, the objective is to show that $|\eta(\Lambda_1)-\eta(\Lambda_2)| \le L_{\eta} \|\Lambda_1-\Lambda_2\|$ for some $L_{\eta} \ge 0$. 
	Define a line segment $\gamma(t) = (1-t)\Lambda_1 + t\Lambda_2$ where $t \in [0,1]$. 
	Let $S = \gamma^{-1}(E) = \{t \in [0,1] : \gamma(t) \in E\}$ be the preimage of $E$ under $\gamma$. 
	Then we analyze the Lebesgue measure of $S$. 
	Define $\xi_t = (1-t)\xi_1 + t\xi_2$ and $\zeta_t = (1-t)\zeta_1 + t\zeta_2$ for $t \in [0,1]$. 
	According to the definition of $S$, $t \in S \Leftrightarrow \xi_t^\top \zeta_t = 0 $ or $\xi_t^\top \zeta_t + (1-\varepsilon) \xi_t^\top \xi_t = 0$. 
	Then $S$ can be separated as a union of two sets as $S = S_1 \bigcup S_2$ where $S_1 = \{ t\in [0,1] : \xi_t^\top \zeta_t = 0 \}$ and $S_2 = \{ t\in [0,1] : \xi_t^\top \zeta_t + (1-\varepsilon) \xi_t^\top \xi_t = 0\}$. 
	Substituting $\xi_t$ and $\zeta_t$, 
	the sets $S_1$ and $S_2$ can be reformulated as $S_1 = \{ t\in [0,1] : At^2 + Bt + C = 0 \}$ and $S_2 = \{ t\in [0,1] : [A + (1-\varepsilon)P] t^2 + [B + (1-\varepsilon)Q ]t+[C+(1-\varepsilon)R] = 0\}$ where $A = (\xi_2 - \xi_1)^\top ( \zeta_2 - \zeta_1)$, $B  = \xi_1^\top ( \zeta_2 - \zeta_1 ) + ( \xi_2 - \xi_1 )^\top \zeta_1$, $C = \xi_1^\top \zeta_1$, and $P= \|\xi_2-\xi_1 \|^2$, $Q= 2 \xi_1^\top (\xi_2- \xi_1 )$, $R= \|\xi_1 \|^2$. 
	We first analyze the Lebesgue measure of $S_1$. 
	If $A, B, C$ are not all zero, the solutions to $At^2 + Bt + C = 0$ are finite (at most 2 solutions), indicating that $S_1$ has measure zero. 
	If $A=B=C=0$, $At^2 + Bt + C = 0$ holds trivially for all $t \in [0,1]$, which indicates that $S_1 = [0,1]$. 
	Therefore, $S_1$ either has measure zero or is equivalent to $[0,1]$. 
	Similarly, we can also obtain that $S_2$ either has measure zero or is equivalent to $[0,1]$. 
	Next, we show the Lipschitz of $\eta$ on $U$ by considering the following two cases. 
	
	Case 1: $S_1$ and $S_2$ both have measure zero. 
	If $S_1$ and $S_2$ both have measure zero, $S = S_1 \bigcup S_2$ also has measure zero. 
	Define $f(t) = \eta(\gamma(t))$. 
	Since $\gamma$ is continuously differentiable and $\eta$ is continuously differentiable on $ U\setminus E$, it follows that $f(t)$ is continuously differentiable on $[0,1] \setminus S$. 
	Therefore, $f^\prime(t)$ exists and is continuous on $[0,1] \setminus S$, and we have the bound $|f'(t)| = \left| \nabla_\Lambda \eta(\gamma(t))^\top (\Lambda_2 - \Lambda_1) \right| \leq M \|\Lambda_2 - \Lambda_1\|$ for $ t \in [0,1] \setminus S$. 
	Since $S$ has measure zero, $f^\prime(t)$ is integrable on $[0,1]$. 
	As a result, $ |\eta(\Lambda_1) - \eta(\Lambda_2)| = |f(0) - f(1)| \leq \int_0^1 |f'(t)|  dt \leq M \|\Lambda_2 - \Lambda_1\|$. 
	
	Case 2: Either $S_1$ or $S_2$ does not measure zero. 
	In this case, the above analysis shows that either $S_1 = [0,1]$ or $S_2 = [0,1]$ holds. 
	Then $[0,1] \subseteq S_1\bigcup S_2 = S \subseteq [0,1]$, which implies $S = [0,1]$. 
	By the definition of $S$, we obtain $\gamma(t) \in E$ for all $t\in [0,1]$. 
	Thus $\Lambda_1 = \gamma(0) \in E$ and $\Lambda_2 = \gamma(1) \in E$. 
	Then by the definition of $E$, we have $\eta_2 (\Lambda_1) = \eta_2(\Lambda_2) = 1$, which implies that $\eta(\Lambda_1) = \eta_1(\Lambda_1)$ and $\eta(\Lambda_2) = \eta_1(\Lambda_2)$. 
	Note that the gradient of $\eta_1$ to $\Lambda\in U$ can be bounded by $\|\nabla_{\Lambda} \eta_1\| \le 2\|\xi \|/\varepsilon \le 1/\sqrt{\varepsilon}$, which implies that $\eta_1$ is Lipschitz on $U$ with a Lipschitz constant $ L_{\eta_1} = 1/\sqrt{\varepsilon}$. 
	Therefore, $|\eta(\Lambda_1) - \eta(\Lambda_2)| = |\eta_1(\Lambda_1) - \eta_1(\Lambda_2)| \le L_{\eta_1} \|\Lambda_1 - \Lambda_2\|$. 
	Then by taking $L_{\eta} = \max\{M, L_{\eta_1}\}$, we obtain $ |\eta(\Lambda_1) - \eta(\Lambda_2)| \le L_{\eta} \|\Lambda_2 - \Lambda_1\|$ for all $\Lambda_1, \Lambda_2 \in U$. 
	Hence, $\eta$ is Lipschitz on $U$, and thus locally Lipschitz on $\{\Lambda \in \mathbb{R}^{2d}: \|\xi\| = 0\}$. 
	
	In summary, $\eta$ is locally Lipschitz in $\Lambda$ over $\mathbb{R}^{2d} = \{\Lambda \in \mathbb{R}^{2d}: \|\xi\|>0\} \cup \{\Lambda \in \mathbb{R}^{2d}: \|\xi\| = 0\}$. 
	Combining with the local Lipschitz continuity of $\tilde{v}_i^\mathrm{cv}$, we obtain that $v_i^\mathrm{cv}$ is locally Lipschitz.     
\end{proof}

\begin{proof}[Proof of Lemma~\ref{Lemma_Vi_Lipschitz}]
	According to Lemma~\ref{Lem_MS_Lipschitz} and \ref{Lem_CV_Lipschitz}, $v_i^\mathrm{ms}$ and $v_i^\mathrm{cv}$ is locally Lipschitz in $\mathbf{p}$. 
	In order to show that $v_i = \mathrm{sat}(v_i^\mathrm{ms} + v_i^\mathrm{cv})$ is locally Lipschitz in $\mathbf{p}$, we need only show that the function $\mathrm{sat}(z)$ is Lipschitz in $z \in \mathbb{R}^d$. 
	
	The function $\mathrm{sat}(z)$ can be reformulated as a projection onto an compact ball $C = \{ z \in \mathbb{R}^d : \|z\| \le v_\mathrm{max}\}$, i.e., $\mathrm{sat}(z) = \arg \min_{x \in C} \|x-z\|$. 
	Recall that $v_\mathrm{max}$ is the maximum speed of the robots. 
	Since $C$ is a nonempty closed convex set, we have $\|\mathrm{sat}(z_1) - \mathrm{sat}(z_2)\| \le \|z_1-z_2\|$ by Proposition 5.3 in \cite{Brezis2011Functional}. 
	Thus, $\mathrm{sat}(z)$ is globally Lipschitz in $z \in \mathbb{R}^d$. 
	
	In summary, the locally Lipschitz continuity of $\mathrm{sat}(\cdot)$, $v_i^\mathrm{ms}$ and $v_i^\mathrm{cv}$ have all been established, and hence we conclude that $v_i$ is locally Lipschitz in $\mathbf{p}$ over $\mathbb{R}^{dn}$.
\end{proof}

\section{Choice of $\beta$}
\label{Sec_Alg}

Algorithm \ref{Alg_DA} outlines a method for selecting the bandwidth parameter $\beta$ via a deterministic annealing approach \cite{Rose1998Deterministicannealing}. 
Given a set of sample points and a fixed number of robots, the algorithm determines a value of $\beta$ that ensures the swarm's equilibrium state—defined by the control law in \eqref{Equ_controllaw}—closely approximates the desired shape. 
The process initializes $\beta$ to a small value and incrementally increases it according to a cooling schedule. 
At each iteration, the formation quality under the current $\beta$ is evaluated by alternating between an E-step (updating robot-to-sample-point associations) and an M-step (updating robot positions). 
The procedure continues until convergence is achieved and inter-robot distances surpass a predefined threshold $d_\mathrm{min}$, yielding an optimal $\beta$.

\begin{algorithm}
	\caption{Deterministic Annealing for $\beta$}
	\label{Alg_DA}
	{
		\renewcommand{\algorithmiccomment}[1]{\textsl{// #1}}  
		\begin{algorithmic}[1]
			\State \textbf{Input:}
			\State \quad $\{q_k\}_{k=1}^m$: positions of $m$ sample points
			\State \quad $n$: number of robots
			\State \quad $d_{\text{min}}$: desired minimum inter-robot distance in the desired formation
			\State \textbf{Output:}
			\State \quad $\beta$: optimized annealing parameter 
			
			\State \textbf{Initialize parameters:}
			\State $\beta_{\text{initial}} \gets 0.01$ \Comment{Initial annealing parameter}
			\State $\beta_{\text{final}} \gets 150$ \Comment{Final annealing parameter} 
			\State $\alpha \gets 1.025$ \Comment{Cooling rate ($\alpha > 1$ for $\beta$ increase)}
			\State $\epsilon \gets 10^{-3}$  
			
			\State Initialize $\beta \gets \beta_{\text{initial}}$
			\State Initialize robots' positions $p_1 = \ldots = p_n = \frac{1}{m}\sum_{k=1}^m q_k$ 
			
			\While{$\beta < \beta_{\text{final}}$}
			\State \Comment{E-step: Update association probabilities} 
			\For{$k = 1$ to $m$} 
			\For{$i = 1$ to $n$} 
			\State $P(p_i | q_k) = \frac{\exp(-\beta \|q_k - p_i\|^2)}{\sum_{j=1}^{n} \exp(-\beta \|q_k - p_j\|^2)}$
			\EndFor
			\EndFor
			
			\State \Comment{M-step: Update robot positions} 
			\For{$i = 1$ to $n$}  
			\State $p_i^{\text{old}} = p_i$, $p_i = \frac{\sum_{k=1}^{m} q_k \cdot P(p_i | q_k)}{\sum_{k=1}^{m} P(p_i | q_k)}$
			\EndFor
			
			\If{$\max_i \|p_i - p_i^{\text{old}}\| < \epsilon$} 
			\State $d_{\text{actual}} \gets \min_{i \neq j} \|p_i - p_j\|$ 
			\If{$d_{\text{actual}} \geq d_{\text{min}}$}  
			\State \textbf{break}  
			\Else
			\State $\beta \gets \alpha \cdot \beta$ 
			\EndIf
			\EndIf
			\EndWhile
			
			\State \Return $\beta$
		\end{algorithmic}
	} 
\end{algorithm}

\ifCLASSOPTIONcaptionsoff
\newpage
\fi

\footnotesize
\bibliographystyle{IEEEtran}
\bibliography{IEEEabrv,reference}

\end{document}